\newcommand{\twocolfigwidth}{0.47\linewidth}
\newcommand{\threecolfigwidth}{0.32\textwidth}
\newcommand{\citep}{\parencite}
\newcommand{\citet}{\cite}
\newcounter{rowcntr}[table]
\renewcommand{\therowcntr}{\arabic{chapter}.\the\numexpr\arabic{table}+1.\arabic{rowcntr}}
\newcommand{\numrow}{\refstepcounter{rowcntr}\therowcntr}
\newcolumntype{H}{>{\setbox0=\hbox\bgroup}c<{\egroup}@{}}
\crefname{prob}{Problem}{Problems}
\def\EE{{\mathbb E}\,}
\renewcommand{\gg}{\boldsymbol{g}}
\newcommand{\vvv}[1]{{{\mathcal{\text{vec}}{{\{#1\}}}}}}
\DeclareMathOperator{\sign}{sign}
\DeclareMathOperator*{\argmin}{{arg\,min}}
\DeclareMathOperator*{\argmax}{{arg\,max}}
\theoremstyle{plain}
\newtheorem{thm}{Theorem}[section]
\newtheorem{lem}[thm]{Lemma}
\newtheorem{prop}[thm]{Proposition}
\newtheorem{cor}{Corollary}
\theoremstyle{definition}
\newtheorem{defn}{Definition}[section] 
\theoremstyle{remark}
\newtheorem*{rem}{Remark}
\newcommand{\E}{{\mathbb{E}}}
\newcommand{\V}{{\mathbb{V}}}
\newcommand{\C}{{\mathbb{C}}}
\newcommand{\R}{{\mathbb{R}}}
\newcommand{\N}{{\mathcal{N}}}
\newcommand{\I}{{\mathbb{I}}}
\newcommand{\U}{{\mathbb{U}}}
\newcommand{\dxy}[2]{{\frac{\partial {#1}}{\partial #2}}}
\newcommand{\comment}[1]{}
\newcommand{\SUM}{\emph{SH}}
\newcommand{\MAX}{\emph{MH}}
\newcommand{\OC}{\emph{1C}}
\newcommand{\TC}{\emph{10C}}
\newcommand{\RC}{\emph{RC}}
\newcommand{\TCV}{\emph{10C+rV}}
\newcommand{\RCV}{\emph{RC+rV}}
\newcommand{\RV}{\emph{rV}}
\newcommand{\order}{\emph{Order}}
\newcommand{\orderz}{\emph{Order0}}
\newcommand{\orderpp}{\emph{Order++}}
\newcommand{\VSE}{\emph{UVS}}
\newcommand{\VSEpp}{\emph{VSE++}}
\newcommand{\VSEppFt}{\emph{VSE++ (FT)}}
\newcommand{\VSEppRes}{\emph{VSE++ (ResNet)}}
\newcommand{\VSEppResFt}{\emph{VSE++ (ResNet, FT)}}
\newcommand{\VSEz}{\emph{VSE0}}
\newcommand{\VSEzFt}{\emph{VSE0 (FT)}}
\newcommand{\VSEzRes}{\emph{VSE0 (ResNet)}}
\newcommand{\VSEzResFt}{\emph{VSE0 (ResNet, FT)}}
\newcommand{\coco}{MS-COCO}
\newcommand{\fthk}{Flickr30K}
\newcommand{\xxi}{{\bm{x}_i}}
\newcommand{\param}{{\bm{\theta}}}
\newcommand{\lparam}{{\bm{\theta}}}
\newcommand{\Nk}{{N_k}}
\newcommand{\gi}{{\bm{g}_i}}
\newcommand{\gpk}{{\bm{g}^{(k)}}}
\newcommand{\gC}{{\bm{\hat{g}}}}
\newcommand{\Ai}{{a_i}}
\newcommand{\Aa}{{\bm{a}}}
\newcommand{\Ck}{{\bm{C}_k}}
\newcommand{\gb}{{\bm{g}_i}}  %
\newcommand{\Di}{{I}}
\newcommand{\Do}{{O}}
\newcommand{\ck}{{\bm{c}_k}}
\newcommand{\dk}{{\bm{d}_k}}
\newcommand{\Ab}{{\bm{A}_i}}  %
\newcommand{\Db}{{\bm{D}_i}}  %
\newcommand{\AAA}{{\bm{A}}}
\newcommand{\DDD}{{\bm{D}}}
\newcommand{\SK}{{\sum_{k=1}^K}}
\newcommand{\SI}{{\sum_{i=1}^N}}
\newcommand{\SJ}{{\sum_{j=1}^\Nk}}
\newcommand{\SB}{{\sum_{i=1}^N}}  %
\newcommand{\dth}[1]{{\frac{\partial {#1}}{\partial\param}}}
\newcommand{\bA}{{$\mathcal{A}$} }
\newcommand{\bU}{{$\mathcal{U}$} }
\newcommand{\bAU}{{$\mathcal{AU}$} }
\newcommand{\Gluster}{\mbox{GC}\xspace}
\newcommand{\GC}{\mbox{GC}\xspace}
\newcommand{\SG}{\mbox{SG-B}\xspace}
\newcommand{\SGdB}{\mbox{SG-2B}\xspace}
\def\xx{{\boldsymbol x}}
\def\ee{{\boldsymbol e}}
\def\ddelta{{\boldsymbol \delta}}
\def\kk{{\boldsymbol k}}
\def\rr{{\boldsymbol r}}
\def\qq{{\boldsymbol q}}
\def\KK{{\boldsymbol K}}
\def\yy{{\boldsymbol y}}
\def\vv{{\boldsymbol v}}
\def\uu{{\boldsymbol u}}
\def\ww{{\boldsymbol w}}
\def\AA{{\boldsymbol A}}
\def\bF{{\boldsymbol F}}
\def\Proba{\mathbb{P}}
\author{Fartash Faghri}
\title{Training Efficiency and Robustness in Deep Learning}
\begin{document}
  \frontmatter
    \maketitle
    \begin{abstract}

Deep Learning has revolutionized machine learning and artificial intelligence, 
achieving superhuman performance in several standard benchmarks.
It is well-known that deep learning models are inefficient to train; they learn 
by processing millions of training data multiple times and require powerful 
computational resources to process large batches of data
in parallel at the same time rather than sequentially.  Deep learning models 
also have unexpected failure modes; they can be fooled into misbehaviour, 
producing unexpectedly incorrect predictions.

In this thesis, we study approaches to improve the training efficiency and 
robustness of deep learning models. In the context of learning visual-semantic 
embeddings, we find that prioritizing learning on more informative training 
data increases convergence speed and improves generalization performance on 
test data. We formalize a simple trick called hard negative mining as 
a modification to the learning objective function with no computational 
overhead.
Next, we seek improvements to optimization speed in general-purpose 
optimization methods in deep learning.
We show that a redundancy-aware modification to the sampling of training data 
improves the training speed and develops an efficient method for detecting the 
diversity of training signal, namely, gradient clustering.
Finally, we study adversarial robustness in deep learning and approaches to 
achieve maximal adversarial robustness without training with additional data.  
For linear models, we prove guaranteed maximal robustness achieved only by 
appropriate choice of the optimizer, regularization, or architecture.

    \end{abstract}
    \begin{acknowledgements}
Throughout the years, David Fleet was the best supervisor I could ever hope to 
have.
He provided a calm environment without typical pressures, encouraged research 
wanderings, and valued my approaches.
His responses to strange ideas were either constructive and kind feedback or 
a surprisingly deep and thought-provoking question.
As an absolute ethical role model, David always promptly responded with 
intelligent resolutions.
His generous support went beyond financial to caring for my physical and mental 
wellness.
It has been an honour to have David as my sage mentor.

I have been blessed to have the support and company of my cheerful, 
considerate, kindhearted, encouraging, and simply amazing wife, Sara Sabour.

I have been fortunate enough to receive lasting influences from numerous 
mentors.
I highly appreciate the guidance, encouragement, and constructive feedback of 
my supervisory committee,
David Duvenaud,
Roger Grosse,
Graham Taylor, and
Chris Maddison.
I am also grateful for the wisdom, unique perspective, and pivotal advice of
Ian Goodfellow,
Nicolas Le Roux,
Fabian Pedregosa,
Jimmy Ba,
Daniel Roy,
Sanja Fidler,
Foteini Agrafioti,
Alexey Kurakin,
Nicholas Carlini,
Mohammad Norouzi,
Mehrdad Farajtabar,
Mohammad Amin Sadeghi,
Ehsan Fazl Ersi,
Aideen NasiriShargh,
Afshin Nikzad,
Kian Mirjalali, and
Vahid Liaghat.

I appreciate the enjoyable, fun and fruitful collaborations with my coauthors
Yanshuai Cao,
Ali Ramezani-Kebrya,
Iman Tabrizian,
Avery Ma,
Justin Gilmer,
Nicolas Papernot,
Jamie Kiros, and
Sven Gowal.

Thanks to my friends, lab-mates and coworkers for making my graduate studies 
a pleasurable journey. Among many others at the University of Toronto and the 
Vector Institute, I would like to thank
Alireza Shekaramiz,
Saeed Seddighin,
Jake Snell,
Nona Naderi,
Kevin Swersky,
Eleni Triantafillou,
Micha Livne,
Ladislav Rampasek,
Alimohammad Rabbani,
Bahareh Mostafazadeh,
Sepideh Mahabadi,
Kaveh Ghasemloo,
Sajad Norouzi,
Aryan Arbabi,
Rahmtin Rotabi,
Farzaneh Derakhshan,
Alireza Makhzani,
Farzaneh Mahdisoltani,
Taylor Killian,
Ehsan Amiri, and
Sobhan Foroughi.

I would like to thank the staff at the University of Toronto, the Department of 
Computer Science, and the Vector Institute for their support.  A special thanks 
to Relu Patrascu for providing technical support throughout the years.

I am grateful to my family
Laleh Kordavani,
Hassan Faghri,
Faraz Faghri,
and Ali Faghri,
for their foundational role in my growth and development.

I am grateful to various organizations for financial support.  These include 
the Department of Computer Science and the University of Toronto, and the 
Ontario Graduate Scholarship (OGS).  Resources used in my research were also 
provided, in part, by the Province of Ontario, the Government of Canada through 
CIFAR, and companies sponsoring the Vector 
Institute.\footnote{\url{www.vectorinstitute.ai/\#partners}}  In addition, part 
of the research in this thesis was initiated during an internship with the 
Brain Team in Google Research.

    \end{acknowledgements}
    \tableofcontents
  \mainmatter
    \chapter{Introduction}
\label{ch:intro}

Deep Learning has revolutionized machine learning and artificial intelligence, 
achieving superhuman performance in several standard benchmarks, including 
image classification~\citep{he2016deep}, the game of 
Go~\citep{silver2017mastering}, and natural language 
translation~\citep{popel2020transforming}. Deep neural networks have become 
major components of emerging technologies such as autonomous 
driving~\citep{grigorescu2020survey}, robotics~\citep{sunderhauf2018limits}, 
and drug discovery~\citep{chen2018rise}.

It is also well-known that deep learning models are inefficient to 
train~\citep{strubell2019energy}; they learn by processing millions of training 
data multiple 
times~\citep{he2016deep,silver2017mastering,popel2020transforming}.
They require powerful computational resources to process large batches of data
in parallel at the same time rather than 
sequentially~\citep{radford2021learning,ramesh2021zeroshot,brown2020language}.  
Vast amounts of image and text data gathered in the past few decades, and 
technological advancements in processing hardware, allowed breakthroughs in 
deep learning 
applications~\citep{deng2009imagenet,bojar2014findings,lin2014microsoft}.  
However, this recipe
is challenging to scale. For example, significant advances in language 
modelling require vast amounts of data and have yet to achieve superhuman 
performance on some datasets~\citep{brown2020language}.

Deep learning models also have unexpected failure modes; they can be fooled 
into misbehaviour, producing incorrect 
predictions~\citep{szegedy2013intriguing,goodfellow2014explaining}. Adversarial 
perturbations are small modifications to the input that change the output of 
a model.  Adversarial perturbations against deep learning image models can be 
designed to be sufficiently subtle that they become imperceptible to humans.
Adversarial samples are particularly troubling in applications with significant 
security and privacy 
requirements~\citep{kurakin2016adversarial,kindermans2019reliability}.
Moreover, adversarial samples are not only security flaws but an intriguing 
example of our limitation in understanding deep learning 
models~\citep{ghorbani2019interpretation}.  The lack of explanation for 
adversarial samples limits our trust and confidence in the decisions and 
predictions of machine learning models.
Understanding and mitigating the impact of adversarial samples provides ways to 
improve robustness and efficiency of models~\citep{salman2020adversarially}.

This thesis comprises three parts, each of which explores aspects of efficiency
and robustness in deep learning from different perspectives.

The first problem concerns training efficiency in computer vision. We 
hypothesize that training data are not equally important;
the available training signal from some inputs is stronger than others.  
Moreover, we hypothesize that prioritizing learning on more informative 
training data increases convergence speed and improves generalization 
performance on test data. In \cref{ch:vsepp} (originally published in 
\citet{faghri2018vse++}) we illustrate how a simple trick referred to as hard 
negative mining speeds up training and improves test performance.
Although previously used in ad-hoc ways, we formalize hard negative mining as 
a modification to the learning objective function.  We demonstrate these 
improvements in learning joint embeddings that is a standard benchmark for 
representation learning.

Joint embeddings enable a wide range of tasks in image, video and language 
understanding. Examples include shape-image embeddings~\citep{li2015joint} for 
shape inference, bilingual word embeddings~\citep{zou2013bilingual}, human 
pose-image embeddings for 3D pose inference~\citep{li2015maximum}, fine-grained 
recognition~\citep{reed2016learning}, zero-shot 
learning~\citep{frome2013devise}, and modality conversion via 
synthesis~\citep{reed2016generative,reed2016learning}.  Such embeddings entail 
mappings from two (or more) domains into a common vector space in which 
semantically associated inputs (e.g., text and images) are mapped to similar 
locations.  The embedding space thus represents the underlying domain 
structure, where location and often direction are semantically meaningful.

Visual-semantic embeddings have been central to
image-caption retrieval and 
generation~\citep{kiros2014unifying,karpathy2015deep}, visual 
question-answering~\citep{Malinowski15}, and more recently in large-scale 
multi-modal representation learning such as CLIP~\citep{radford2021learning}.  
One approach to visual question-answering, for example, is to first describe an 
image by a set of captions, and then to find the nearest caption in response to 
a question~\citep{agrawal2017vqa,zitnick2016measuring}.  For image synthesis 
from text, one could map from text to the joint embedding space, and then
back to image space~\citep{reed2016generative,reed2016learning}.

It is common to use a contrastive loss function to learn representations with 
a meaningful distance metric that resembles a conceptual contrast in the input 
space~\citep{hadsell2006dimensionality}. For example, we would use 
a contrastive loss to learn representations of images such that the 
representation of a dog is closer to other dogs relative to any cat. In 
particular, triplet ranking losses consist of terms that contrast a positive 
similar pair with a dissimilar negative sample. A non-zero loss is incurred if 
the representations of the similar pair are farther from each other than they 
are individually from the negative sample. A triplet loss requires 
a ground-truth collection of matching pairs but the negative sample often 
requires no annotation as it can be any non-matching input from the database.  
Triplet losses are defined given a distance metric (or more generally 
a similarity score) and differ in how the loss scales with the magnitude of the 
distance violation.

Our main contribution is to incorporate hard negatives in the loss function 
with no computational overhead. A hard negative for any positive pair is 
defined as the training sample incurring the maximum triplet loss. Finding the 
hardest negative is computationally expensive. Instead, we use semi-hard 
negatives that are the hardest contrastive triplets in each mini-batch. This 
simple change has no additional computational cost while significantly 
improving the training performance.

Various works have extended our work and applied semi-hard negative mining in 
visual-semantic embeddings and related applications.  Examples are 
visual-semantic embedding with generalized pooling 
operator~\citep{chen2021learning}, Consensus-aware visual-semantic 
embedding~\citep{wang2020consensus}, and top ranking methods in video retrieval 
competition tasks~\citep{awad2020trecvid} for  ad-hoc video search and 
video-to-text description generation. These examples illustrate the 
effectiveness of our simple modification to the loss function in computer 
vision applications.

The second problem addressed in this thesis is training efficiency as a general 
challenge in training deep neural networks. Instead of focusing on 
a task-specific loss function, we seek improvements to optimization speed in 
gradient-based optimization methods for deep learning.
We revisit our hypothesis from the first problem that training data are not all 
equally important. We hypothesize that training deep learning models on diverse 
and heterogeneous data distributions should be slower than homogeneous data 
distributions with less diversity. In \cref{ch:gvar} and the publication 
\citep{faghri2020study}, we develop a method for detecting the diversity of 
training signal and propose it as a method for improving training speed. Our 
method can generally be used with any gradient-based optimization method. In 
contrast to \cref{ch:vsepp}, very few general-purpose ideas exist that 
consistently increase training efficiency.

We motivate our hypothesis with a simple example for which there exist 
duplicate data points. To decrease the processing time per iteration, it is 
common to uniformly sample mini-batches of data during training.  As 
illustrated in the following example, uniform sampling is not always ideal.  
Suppose there are duplicate data points in a training set.  We can save 
computation time by removing all but one of the duplicates.  To get the same 
gradient update, in expectation, it is sufficient to rescale the gradient of 
the remaining samples in proportion to the number of duplicates. In this 
example, standard optimization methods for deep learning will be inefficient 
because with uniform sampling there is a chance of sampling mini-batches 
containing duplicates of one data type.

In general, redundancy can be extended from exact duplicates to points that are 
very similar to each other according to a similarity metric.  In 
\cref{ch:gvar}, we focus on the similarity in the gradient space and establish 
the connection between gradient redundancy and training speed through 
a connection to gradient variance reduction.  Prior works have theoretically 
established that smaller gradient variance results in faster training.
Through an illustrative example, we show that a redundancy-aware modification 
to the sampling of training data reduces the gradient variance.  We prove that 
the gradient variance is minimized if the elements are sampled from a weighted 
clustering in gradient space. Accordingly, we propose a gradient clustering 
method as a computationally efficient method for clustering in the gradient 
space.

Although outside the scope of this thesis, we also note that we have used our 
observations about the gradient distribution during training in designing 
gradient quantization methods to speed up data-parallel optimization for deep 
learning~\citep{faghri2020adaptive, ramezani2021nuqsgd}.

The third and last problem studied in this thesis concerns adversarial 
robustness in deep learning. We hypothesize that deep learning models are not 
inherently weak; their robustness depends on the implicit and explicit training 
mechanisms. In \cref{ch:robust}, we show that these mechanisms imply maximal 
robustness to the types of perturbations not commonly considered in adversarial 
attacks.  This observation allows us to find alternative approaches to 
adversarial robustness that introduce no additional training overhead compared 
to standard robustness methods. Hence, increasing training efficiency in the 
task of adversarial robustness. Next, we introduce adversarial robustness and 
standard robustness methods in more detail.

Deep neural networks achieve high accuracy on standard test sets.  
Nevertheless, \citet{szegedy2013intriguing} showed that any natural input 
correctly classified by a neural network can be modified with adversarial 
perturbations that fool the network into misclassification. They observed that 
adversarial perturbations exist even when they are constrained to be small 
enough that they do not significantly affect human perception.  Adversarially 
perturbed inputs are also commonly referred to as Adversarial samples or 
adversarial examples.

Adversarial training~\citep{goodfellow2014explaining}
was proposed to improve the robustness of models through
training on adversarial samples
and further generalized
as a saddle-point optimization problem~\citep{madry2017towards}.
In practice, adversarial training refers to methods that solve the saddle-point 
problem approximately by minimizing the loss on adversarial samples.

The state-of-the-art approach to increase adversarial robustness is adversarial 
training, i.e., empirical risk minimization on adversarial samples.  
Adversarial training is another example of inefficient training;
i.e., in addition to training on large datasets, for robustness, we need to 
create and train on artificial adversarial samples to improve models.  
Adversarial training also exhibits a trade-off between standard generalization 
and adversarial robustness.  That is, it achieves improved {\it robust 
accuracy}, on
adversarially perturbed data, at the expense of {\it standard accuracy}, the
probability of correct predictions on natural data 
\citep{tsipras2018robustness}.

In \cref{ch:robust}, we discuss an alternative formulation to adversarial 
robustness, namely, maximally robust classification, with no robustness 
trade-off. By connecting this problem to the literature on the implicit bias of 
optimization methods\citep{gunasekar2018characterizing}, we also find maximally 
robust classifiers at no additional computational cost. This link allows for 
the bidirectional transfer of results between the two literatures on 
adversarial robustness and optimization bias.

Finally, we conclude in \cref{ch:conclusion} with a summary of the main 
results, and a discussion of future directions.

\section{Publications}

The contents of this thesis have largely been taken from the following 
publications and technical reports:
\begin{itemize}
    \item {\bf Faghri, Fartash} and Fleet, David J.\ and Kiros, Jamie R.\ and 
        Fidler, Sanja, {\sl ``VSE++: Improving Visual-Semantic Embeddings with 
        Hard Negatives"}, British Machine Vision Conference (BMVC), 2018.  
        (\cref{ch:vsepp})
    \item {\bf Faghri, Fartash} and Duvenaud, David and Fleet, David J.\ and 
        Ba, Jimmy,
        {\sl ``A Study of Gradient Variance in Deep Learning"}, arXiv, 
        2020. (\cref{ch:gvar})
    \item {\bf Faghri, Fartash} and Duvenaud, David and Fleet, David J.\ and 
        Ba, Jimmy,
        {\sl ``Gluster: Variance Reduced Mini-Batch SGD with Gradient Clustering"}, 
        Conference on Neural Information Processing Systems (NeurIPS), Workshop 
        on Beyond First Order Methods in ML, 2019. (\cref{ch:gvar})
    \item {\bf Faghri, Fartash} and Gowal, Sven, Vasconcelos, Cristina and 
        Fleet, David J.\ and Pedregosa, Fabian and Le Roux, Nicolas, {\sl 
        ``Bridging the Gap Between Adversarial Robustness and Optimization 
        Bias"}, Workshop on Security and Safety in Machine Learning Systems, 
        International Conference on Learning Representations (ICLR), 2021.  
        (\cref{ch:robust})
\end{itemize}

During my PhD studies, beyond the research in this thesis, I have authored and 
contributed to the following papers. We do not discuss the details of the 
following papers in this thesis.
\begin{itemize}
    \item {\bf Faghri, Fartash}$^\ast$~\footnote{$^\ast$ joint first 
        co-authors}, Tabrizian, Iman$^\ast$ and Markov, Ilia and Alistarh, Dan 
        and Roy, Dan M.\ and
        Ramezani-Kebrya, Ali,
        {\sl ``Adaptive Gradient Quantization for Data-Parallel SGD"}, Conference 
        on Neural Information Processing Systems (NeurIPS), 2020.
    \item Ramezani-Kebrya, Ali and {\bf Faghri, Fartash} and
        Markov, Ilya and Aksenov, Vitalii and Alistarh, Dan  and Roy, Dan M.\ 
        et al.,
        {\sl ``NUQSGD: Provably Communication-efficient Data-parallel SGD via 
        Nonuniform Quantization"}, Journal of Machine Learning Research 
        22.114 (2021): 1-43..
    \item Ma, Avery and {\bf Faghri, Fartash} and Papernot, Nicolas and 
        Farahmand, Amir-massoud, {\sl ``SOAR: Second-Order Adversarial 
        Regularization"}, ArXiv, 2020.
    \item Zhang, Qingru and Wu, Yuhuai and {\bf Faghri, Fartash} and Zhang, 
        Tianzong and Ba, Jimmy,
        {\sl ``A Non-asymptotic comparison of SVRG and SGD: tradeoffs between 
        compute and speed"}, ArXiv, 2020.
    \item Gilmer, Justin and Metz, Luke and {\bf Faghri, Fartash} and 
        Schoenholz, Samuel S.\ and Raghu, Maithra and Wattenberg, Martin and 
        Goodfellow, Ian, {\sl ``Adversarial Spheres"}, ICLR Workshop, 2018.
    \item Sabour, Sara and Cao, Yanshuai and {\bf Faghri, Fartash} and Fleet David 
        J.\, {\sl ``Adversarial Manipulation of Deep Representations"}, 
        International Conference on Learning Representations (ICLR), 2016.
\end{itemize}

    \chapter{Background}
\label{ch:background}

In this section, we will cover the general background for the thesis and review 
common standard practices for optimization in deep learning.  Further related 
work that is specific to individual chapters is developed in subsequent 
chapters.  We refer the reader to \citet{goodfellow2016deep} for a general 
introduction to deep learning.

\section{Optimization for Deep Learning}
Minimizing an objective is at the core of deep learning.
In this section, we focus on methods and ideas for solving or simplifying 
optimization problems in deep learning.

Many machine learning tasks entail the minimization of the risk,
$\E_{(\xx, y) \sim D}[\ell(\xx,y; \param)]$,
where the input $\xx, y$ are random variables distributed according to the data 
distribution $D$, and $\ell$ is the per-example loss function parametrized by 
$\param$, the parameters of the neural network.  In supervised learning, $y$ 
denotes the ground-truth label. Empirical risk approximates the population risk 
by the risk of i.i.d.\ samples $\{(\xxi, y_i)\}_{i=1}^N$, the training set, as 
$L(\param)=\SI \ell(\xxi, y_i; \param)/N$.
For differentiable loss functions, the gradient of $\xx_i$ is defined as 
$\dth{}\ell(\xx_i, y_i; \param)$, i.e., the gradient of the loss with respect 
to the parameters evaluated at a point $\xx_i$~\citep{vapnik1999overview}.

Optimization algorithms used for deep learning are predominantly 
iterative~\citep{bottou2016optimization}.  An iterative optimizer starts with 
an initial estimate of the parameters and improves the estimate 
iteratively~\citep{nocedal2006numerical}.  Each iteration is often broken into 
two parts, i.e., choosing a direction and choosing the step size to move along 
this direction.

Optimization methods are mainly evaluated based on their rate of convergence to 
a local or global minimum of the loss function. Such evaluation is either by 
theoretically proving convergence rates on specific families of functions 
(e.g., convex functions)~\citep{bottou2016optimization} or empirically, by 
training a model on benchmark datasets (e.g., a convolutional network on the 
MNIST dataset~\citep{lecun1998gradient}). The rate of convergence is either 
measured by the number of iterations or the total wall-clock time in 
non-distributed settings. \citet{choi2019empirical} suggests guidelines for 
empirical evaluation of optimization methods in deep learning.

Generalization, the ability to perform well on unseen data, is one of the key 
elements of machine learning~\citep{bishop2007pattern}.  Empirically, 
generalization is measured by evaluating a trained model on a held out test 
set.  It is known that the choice of optimizer affects generalization 
performance~\citep{gunasekar2018characterizing,vaswani2020each,amari2020does}).

The scope of this section is restricted to optimization methods popular in deep 
learning.  We focus on continuous parameters and assume that the gradient of 
the loss is defined and exists almost everywhere.  Optimizing neural networks 
is often an unconstrained problem but regularization methods are commonly used.  
The functions minimized in deep learning are often non-convex with multiple 
local minima.  Nevertheless, we focus on local optimization methods as they are 
more popular.  Bayesian learning~\citep{neal1995bayesian} and evolution 
strategies~\citep{salimans2017evolution} are examples of non-local methods not 
covered in this chapter.

Within the optimization framework, \cref{ch:vsepp} can be described as 
a modification to an optimization problem to better match the task's objective, 
\cref{ch:gvar} is an improvement to solving common optimization problems in 
deep learning, and \cref{ch:robust} characterizes the implicit bias of 
optimization on adversarial metrics.

\subsection{Stochastic Gradient Descent}
Gradient Descent (GD) or Steepest Descent~\citep{nocedal2006numerical}, is an 
iterative optimization method that assumes access to the first-order derivative 
of the objective or loss function, e.g., $\ell(\xx, y; \param)$ defined above.  
We use GD to optimize a neural network by evaluating the gradient over the full 
training set.  The update rule for GD is
\begin{align}
    \param^t &= \param^{t-1} - \alpha_t \sum_{i\in S} \left.  \dth{}\ell(\xxi, 
    y_i; \param)\right|_{\param=\param^{t-1}},
    \label{eq:gd}
\end{align}
where $\ell$ is the loss function parameterized at the $t$-th iteration of 
optimization by $\param^t$ on the training data $\xx_i$. $\alpha_t$ is the 
optimization step size, or learning rate, at iteration $t$.

GD requires the computation of gradients on the entire training set in each 
step while the alternative, Stochastic Gradient Descent (SGD), uses the 
gradient of a uniformly sampled data point from the training set. The gradient 
from a single sample is an unbiased estimate of the average gradient on the 
training set such that they are equivalent in expectation. Nevertheless, there 
exists a trade-off between the performance of GD versus SGD (discussed in 
\cref{ch:gvar}).  In mini-batch Stochastic Gradient Descent (mini-batch SGD) we 
can control the trade-off between GD and SGD by instead sampling $B$ training 
samples, which comprise a mini-batch in each step.  The mini-batch size is 
a hyper-parameter that provides flexibility in trading per-step computation time 
for potentially fewer total steps.
The update rule for mini-batch SGD is
\begin{align}
    \param^t &= \param^{t-1} - \alpha_t \sum_{i\in B_t} \left.  
    \dth{}\ell(\xxi, y_i; \param)\right|_{\param=\param^{t-1}},
    \label{eq:mbsgd}
\end{align}
where $B_t$ is the mini-batch of training samples at iteration $t$. In GD the 
mini-batch is the entire training set while in SGD it is a single sample.

GD and SGD are guaranteed to converge under specific 
assumptions~\citep{bottou2016optimization,robbins1951stochastic}.
Theoretically the convergence rates for GD/SGD are known under certain 
conditions~\citep{moulines2011non}.
The effect of the mini-batch size in deep learning has been studied 
extensively~\citep{shallue2018measuring,zhang2019algorithmic}.

To use GD/SGD to train deep neural networks, we usually need to compute the 
gradient of the loss function with respect to the network parameters.  
Back-Propagation~\citep{rumelhart1986learning} is one way of numerically 
computing these gradients. In feed-forward networks, layers are composed such 
that the inputs of a layer are the outputs of lower layers or the inputs of the 
model.  Back-propagation (back-prop) is the application of the chain rule to 
this decomposition.

Although GD/SGD are popular optimizers in deep learning, they have known 
limitations.  In \citet{sutton1986two}, two problems are identified, namely, 
the elongated ravine problem and the unlearning problem.  An elongated ravine 
refers to a special shape of the loss function in the space of the model 
parameters.  There are two directions in an elongated ravine, the principal 
direction of the ravine and the perpendicular steep direction of the ravine's 
wall. The optimization step size has to be small enough to prevent divergence 
in the steepest direction while the loss decreases along the principal 
direction.  The convergence rate is determined by the ratio of the two slopes.

The unlearning problem, more recently known as catastrophic forgetting, is when 
the optimizer ruins well-optimized parameters to learn new concepts, even 
though the model has underemployed parameters.  Catastrophic forgetting is 
a challenge for training models in the setting of continual 
learning~\citep{lopez2017gradient}.  In continual learning, there exist 
multiple tasks and we only get the training data of a task after we are done 
with learning with previous tasks. A model should perform well on all tasks 
seen so far.  In this setting, current optimizers quickly lose the performance 
on previous tasks, when they are trained only with the data of a new task.

\subsection{Momentum}
\label{sec:intro:mom}
The most commonly used variant of SGD in deep learning incorporates Polyak's 
momentum~\citep{polyak1964some} (also known as the heavy ball momentum)
\begin{align}
    \vv^t &= \beta\vv^{t-1} +
    \sum_{i\in B_t} \left.  \dth{}\ell(\xxi, y_i; 
    \param)\right|_{\param=\param^{t-1}}\\
    \param^t &= \param^{t-1} - \alpha_t \vv^t,
\end{align}
where $\beta\in[0, 1)$ is the momentum coefficient and $\vv$ is a velocity 
vector.  Empirically, SGD with momentum is often significantly faster than 
SGD~\citet{shallue2018measuring}.  Polyak's momentum has the same convergence 
rate as GD with an improved constant.
\citet{nesterov1983method} proposed an acceleration method with an improved 
convergence rate on convex and continuously differentiable functions with 
a Lipschitz continuous gradient~\citep{bottou2016optimization}.  This method 
was later revitalized as Nesterov's momentum by \citet{sutskever2013importance} 
in a similar form to Polyak's momentum, where the gradient estimate for the 
current step is computed not at the current parameter value, but at the 
anticipated next value. In practice as well as certain convex settings, 
momentum increases flexibility in the choice of the step size and mini-batch 
size~\citep{goh2017why,zhang2019algorithmic}.

Polyak's momentum can also be interpreted as a gradient noise reduction method 
that connects it to the ideas discussed in \cref{ch:gvar}.

\subsection{Second Order Methods}
\label{sec:intro:sec}
Second-order methods address the elongated ravine problem by incorporating the 
curvature information into the optimization.
The Hessian, $H(\param) ={\frac{\partial^2 {}}{\partial\param^2}}\ell(\xxi, 
y_i; \param)$,
represents the information about the curvature.  The generalization of Newton's 
method is to replace $\gg$ in the GD update step (\cref{eq:gd}) with 
$H^{-1}\gg$ where $\gg=\dth{}\ell(\xxi, y_i; \param)$.  Given the Hessian, 
Newton's method normalizes the  gradients along the direction of the walls to 
prevent overshooting and allows consistent improvements in all directions.

Computing the inverse of the Hessian is computationally demanding when the 
parameter space is high dimensional.  As such, approximations have been 
suggested. A diagonal approximation is suggested in 
\citep{becker1988improving}.  Hessian-Free optimization~\citep{martens2010deep} 
is a second-order method that neither computes the Hessian, nor any 
approximation to it.  Instead, Conjugate Gradient iterations are used to choose 
the next optimization direction.

Natural gradient descent, e.g., K-FAC~\citep{martens2015optimizing}, methods 
can also be viewed as second-order methods~\citep{amari1998natural}.
Natural gradient~\citep{amari1998natural} is defined as $F^{-1}\gg$, where
\begin{align}
    F(\param) & =\E_{\xx\sim p(\xx)}\E_{\yy\sim p(y|\xx, \param)}[\dth{\log 
    p(y|\xx, \param)} \dth{\log p(y|\xx, \param)}^T],
\end{align}
is the Fisher information matrix where $\xx$ is sampled from the underlying 
data distribution and $y$ is sampled from the distribution of model predictions 
(See \citet{kunstner2019limitations} for common errors in the definition).
The Fisher information matrix captures the uncertainty due to sampling of the 
training set and the sampling of mini-batches.
Under certain conditions, the Fisher information matrix is equivalent to 
a positive semi-definite approximation to the 
Hessian~\citep{martens2012training}.

Second order methods are still not widely used in deep learning, often because 
of implementation challenges. Recent work have demonstrated wall-clock time 
improvements with scalable implementations~\citep{anil2021scalable}.
Besides, second order methods and more generally preconditioned optimization 
exhibit alternative generalization properties~\citep{amari2020does}. Our 
results in \cref{ch:robust} can be extended to preconditioned optimization 
methods to imply alternative robustness guarantees.

\subsection{Adaptive Gradient Methods}
\label{sec:intro:adaptive}
In settings where gradients have known characteristics such as sparsity, 
specialized methods are preferred over SGD with momentum.
ADAGRAD~\citep{duchi2011adaptive} proposes an update direction that works well 
when gradients are sparse.  In this method, each gradient dimension is divided 
by $\sqrt{\sum_{i=1}^T{g_{i,d}^2}}$, where $g_{i,d}$ is the $d$th gradient 
dimension at the $i$th training iteration.  RMSprop~\citep{tieleman2012lecture} 
has a similar update rule that uses an exponentially-weighted average of the 
gradients over time instead of equally-weighted averaging gradients of all 
iterations. Adam~\citep{kingma2014adam} is a popular adaptive gradient method 
that combines Polyak's momentum with RMSprop's adaptive gradient. The update 
rule for Adam divides the velocity vector from Polyak's momentum by the second 
moment of the gradients.

ADAGRAD, RMSprop, and Adam propose dividing the gradient vector by a function 
of its second moment per dimension.  Under certain conditions, these methods 
can be interpreted as approximate second-order methods that use diagonal 
approximations to the Hessian or the Fisher information 
matrix~\citep{bottou2016optimization}.

Numerous optimization methods have been proposed for deep learning, yet Adam 
and SGD with Nesterov momentum still match or exceed the performance of the 
most recent ones with proper hyper-parameter 
tuning~\citep{nado2021large,schmidt2021descending}. We use Adam in 
\cref{ch:vsepp} as the optimization method. Our method in \cref{ch:gvar} can be 
interpreted as an adaptive gradient method with per-example reweighing.  
Adaptive gradient methods have unique generalization properties with robustness 
implications covered in \cref{ch:robust}.

\subsection{Initialization}
For non-convex functions, poor initialization can cause slow convergence or 
convergence to a local minimum significantly worse than the global minimum.  
Practical initializations in deep learning are often based on analysis of the 
distribution of activations and gradients.  For example, the initialization 
should be such that neither gradients nor activations vanish or explode during 
back-propagation~\citep{hochreiter1998vanishing,he2016deep}.

Common initializations in deep learning are often based on sampling from 
zero-mean uniform or Gaussian 
distributions~\citep{glorot2010understanding,he2015delving}. The variance of 
the distribution is determined by the type of activation function and usually 
depends on the number of inputs and outputs of a unit. Orthogonal 
initialization has also been explored for deep linear and non-linear 
networks~\citep{saxe2013exact,pennington2017resurrecting}.

Initialization for deep non-linear models affects the entire training 
trajectory and results in varying properties of a converged 
model~\citep{moroshko2020implicit}. We expect similar results to 
\cref{ch:robust} would show the choice of initialization implicitly affects the 
robustness.

\subsection{Generalization}
In Machine Learning, we seek accurate models that generalize to unseen data 
that is estimated empirically on held out test data.
For performance on test data to be an accurate estimate of generalization, test 
data should not be reused. Model selection and hyper-parameter tuning require 
an estimate of the test performance that is often estimated on a validation set 
that is held out from the training set.
A generalization gap between training and test performance can be due to model 
misspecification, data limitations, or loss function 
shortcomings~\citep{bishop2007pattern}. The optimizer can also affect the 
generalization gap if the choice of the model and loss function form an 
underspecified optimization problem~\citep{gunasekar2018characterizing}.

Modifications to a model, data, or loss function that improve the 
generalization gap can be problem/domain specific.  For example, we often seek 
to learn computer vision models invariant or equivariant to affine image 
transformations. Using convolutional layers instead of fully-connected layers, 
we achieve translation invariance/equivariance at no additional computational 
cost but with reduced model flexibility. A more flexible but computationally 
expensive approach is data augmentation that trains on transformations of the 
training data.  Alternatively, the loss function can be changed to penalize 
disagreements between the model outputs on transformations of a single input, 
providing both flexibility and potentially lower computational cost. In 
\cref{ch:vsepp}, we propose a similar loss function alternative that results in 
both faster training and better generalization. 

The optimization problems in deep learning are often underspecified, i.e., the 
model is complex enough to fit the training data accurately in more than one 
way. As such, there are generic modifications to the model, loss function, and 
optimization that restrict the model complexity. 
Dropout~\citep{srivastava2014dropout} applied to a layer deactivates units 
randomly and forces the model to learn a compact representation but with some 
redundancy.
Regularization penalizes the model by adding additional penalty terms to the 
loss function.
Norm penalties such as $\ell_0$, $\ell_1$, and $\ell_2$ encourage sparsity or 
parameter shrinkage and can be interpreted as assuming a prior distribution 
over parameters~\citep{murphy2014machine}.
Early stopping addresses overfitting by selecting the best performing model 
during training according to the validation performance.  A model is 
overfitting to the training data if the test performance worsens while the 
training performance improves.

A trade-off exists between model complexity and generalization that can be 
characterized by a bias-variance decomposition of the risk in some 
problems~\citep{bishop2007pattern}.  The relation between complexity and 
generalization can be a double-descent curve where the generalization 
performance of a family of models is worst at a critical model size while 
improves monotonically for more or less complex 
models~\citep{hastie2019surprises}.

In \cref{ch:robust}, we discuss how common modifications for improving 
generalization such as model architecture, regularization, and optimization 
method indirectly affect model robustness.

\subsection{Implicit Bias of Optimization Methods}
\label{sec:background:bias}
Minimizing the empirical risk for
an overparametrized model with more parameters than the training
data has multiple solutions.
\citet{zhang2017understanding} observed that overparametrized
deep models can even fit to randomly labeled training data, yet
given the correct labels they consistently generalize to test data.
This behavior has been explained using the implicit bias of
optimization methods towards particular solutions.
\citet{gunasekar2018characterizing} proved that
minimizing the empirical risk using  steepest descent
and mirror descent have an implicit bias towards minimum norm
solutions in overparametrized linear classification. Characterizing
the implicit bias in linear regression proved to be more challenging
and dependent on the initialization.
\citet{ji2018gradient} proved that
training a deep linear classifier using gradient descent
not only implicitly converges to the minimum norm classifier
in the space of the product of parameters,
each layer is also biased towards rank-$1$ matrices aligned with adjacent layers.
\citet{gunasekar2018implicit} proved the implicit bias of
gradient descent in training linear convolutional classifiers
is towards minimum norm solutions in the Fourier domain that
depends on the number of layers.
\citet{ji2020directional} has established the directional alignment
in the training of deep linear networks using gradient flow (gradient descent 
with infinitesimal step size) as well as the implicit bias of training deep 
2-homogeneous networks.
In the case of gradient flow the implicit bias of training multi-layer linear 
models is towards rank-$1$ layers that satisfy directional alignment
with adjacent layers~\citep[Proposition 4.4]{ji2020directional}.
Recently, \citet{yun2020unifying} has proposed a unified framework
for implicit bias of neural networks using tensor formulation 
that includes fully-connected, diagonal, and convolutional
networks and weakened the convergence assumptions.

The recent theory of generalization in deep learning, in particular
the double descent phenomenon, studies the generalization
properties of minimum norm solutions
for finite and noisy training sets~\citep{hastie2019surprises}.
Characterization of the double descent phenomenon relies on the
implicit bias of optimization methods
while using additional assumptions
about the data distribution.
In contrast, our results in \cref{ch:robust} only rely on the implicit bias of 
optimization and hence are independent of the data distribution.

\section{Contrastive Learning}

Contrastive losses~\citep{hadsell2006dimensionality} have been used widely in 
computer vision and machine learning. Triplet losses are widely used in 
retrieval problems where the task is to learn a mapping such that similar 
inputs are mapped close to each other and farther from dissimilar 
inputs~\citep{weinberger2009distance}.
Examples are image retrieval~\citep{frome2007learning,chechik2010large},
learning to rank problems~\citep{li2014learning}, and max-margin structured 
prediction \cite{chapelle2007large,le2007direct}. Let
$f(\xx; \param)$ denote a mapping of input $\xx\in X$ parametrized by $\param$.  
A triplet loss function $\ell(\qq, \kk_{+}, \kk_{-})$ takes a query input, 
$\qq\in X$, together with a similar and dissimilar key inputs, $\kk_+,\kk_-\in 
X$. The loss function encourages the mappings of the query and the similar key 
to be close to each other while discouraging proximity to the dissimilar key.  
A hinge triplet loss is commonly used in image retrieval where the similarity 
is measured by cosine similarity and the model is penalized if the proximity 
between the negative pair is not more than the positive pair by a preset 
margin~\cite{kiros2014unifying, karpathy2015deep, ZhuICCV15, 
socher2014grounded}. The pairwise hinge loss is an alternative in which 
elements of positive pairs
are encouraged to lie within a hypersphere of radius $\rho_1$,
while negative pairs should be no closer than $\rho_2 > \rho_1$.
A pairwise loss restricts the mapping more than a triplet loss and requires two 
hyper-parameters rather than one. In \cref{ch:vsepp}, we propose a modification 
to the hinge triplet loss function. Focal loss is a similar recent work that 
modifies the cross-entropy loss to handle imbalance in classes and training 
data when there are many easy samples but few hard samples such as when there 
are few foreground samples but numerous background 
samples~\citep{lin2017focal}.

Recently, contrastive losses have been successfully employed for unsupervised 
representation learning in methods such as MoCo~\citep{he2020momentum, 
chen2020improved} and SimCLR~\citep{chen2020simple}.
In contrast to ranking and retrieval applications, new applications of 
contrastive losses are in unsupervised representation learning where the 
positive pair is usually the input itself with some perturbation.
As an alternative to hinge losses and others~\citep{he2020momentum}, the 
InfoNCE~\citep{oord2018representation} loss and its variants are based on the 
cross-entropy loss,
\begin{align}
\ell(\qq,\kk_+,\KK_-) &= -\log
\frac{\exp{(\qq\cdot \kk_+/\tau)}}
{\exp{(\qq\cdot \kk_+/\tau)} + \sum_{\kk_-\in\KK_-}
    \exp{(\qq\cdot \kk_-/\tau)}}
\label{eq:co_loss}
\end{align}
where $\qq$ is a query input, $\kk_+$ a similar key and $\KK_-$ a dissimilar 
set of keys to the query. $\tau$ is the temperature hyper-parameter that 
controls the expected separation between similar and dissimilar samples.

An interpretation of $\ell(\qq,\kk_+,\KK_-)$ is as the cross-entropy loss where 
the set of positive and negative keys are the categories and $\exp(\qq\cdot 
\kk/\tau)$ is the unnormalized probability of the event $\qq=\kk$.  The 
cross-entropy loss is commonly used with a fixed set of categories while with 
in \cref{eq:co_loss} the set of keys is specific to each query.

The equivalence to the standard cross-entropy loss is exact if the normalizing 
term consists of all negative keys.  Suppose we use cross-entropy but use 
a sampled subset of all categories in the denominator that decreases the 
computation cost. The drawback is some queries will influence the training more 
than others because of modified probabilities. A fix is to use importance 
sampling according to the unnormalized probabilities of the negative keys at 
the cost of losing the advantage of subsampling.  In the literature on triplet 
losses such ideas are often referred to as hard negative mining where instead 
of using any negative, we mine hard negatives.  Our proposed method in 
\cref{ch:vsepp} is an example of hard negative mining in contrastive losses.

In recent works on contrastive losses, the query is uniformly selected from the 
training set and $\kk_+$ is generated by perturbing the query using common 
input augmentations.  For images, common perturbations are random resize, 
random crop, random color jitter, and random flip.  The novelty in works such 
as MoCo is to keep an adaptive queue of keys to be used as dissimilar examples.  
This ensures the cost of training is reduced by not back-propagating through 
the network for negative examples while keeping an up-to-date set of negatives 
with enough contrast.  Another novelty in works such as SimCLR and MoCo v2 is 
to use more diverse and aggressive data augmentations to generate a larger set 
of possible similar and dissimilar pairs and hence better contrastive examples.

    \chapter{VSE++: Improving Visual-Semantic Embeddings with Hard Negative Mining}
\label{ch:vsepp}

The first problem studied in this thesis concerns training efficiency in 
computer vision.  We hypothesize that training data are not equally important; 
training signal from some inputs is stronger than others. Moreover, we 
hypothesize that prioritizing learning on more informative training data 
increases convergence speed and improves generalization performance on test 
data.

To this end, we present a new technique for learning visual-semantic embeddings 
for cross-modal retrieval.  Inspired by hard negative mining, the use of hard 
negatives in structured prediction, and ranking loss functions, we introduce 
a simple change to common loss functions used for multi-modal embeddings.  
That, combined with fine-tuning and use of augmented data, yields significant 
gains in retrieval performance.  We showcase our approach, \VSEpp, on \coco{} 
and \fthk{} datasets, using ablation studies and comparisons with existing 
methods.  On \coco{} our approach outperforms state-of-the-art methods by 
$8.8\%$ in caption retrieval and $11.3\%$ in image retrieval (at R@$1$).

The content of this chapter have appeared in the following publication:
\begin{itemize}
    \item {\bf Faghri, Fartash} and Fleet, David J.\ and Kiros, Jamie R.\ and 
        Fidler, Sanja, {\sl ``VSE++: Improving Visual-Semantic Embeddings with Hard 
        Negatives"}, British Machine Vision Conference (BMVC), 2018.
\end{itemize}

To ensure reproducibility, our code is publicly 
available~\footnote{\url{https://github.com/fartashf/vsepp}}.

\def\figdir{ch-vsepp-tex/figures}

\section{Introduction}

Joint embeddings enable a wide range of tasks in image, video and language 
understanding. Examples include shape-image embeddings~\citep{li2015joint} 
for shape inference, bilingual word embeddings~\citep{zou2013bilingual}, 
human pose-image embeddings for 3D pose inference~\citep{li2015maximum}, 
fine-grained recognition~\citep{reed2016learning}, zero-shot 
learning~\citep{frome2013devise}, and modality conversion via 
synthesis~\citep{reed2016generative,reed2016learning}.  Such embeddings entail 
mappings from two (or more) domains into a common vector space in which 
semantically associated inputs (e.g., text and images) are mapped to similar 
locations.  The embedding space thus represents the underlying domain 
structure, where location and often direction are semantically meaningful.

{\em Visual-semantic embeddings}\/ have been central to
image-caption retrieval and 
generation~\citep{kiros2014unifying,karpathy2015deep}, and visual 
question-answering~\citep{Malinowski15}.  One approach to visual 
question-answering, for example, is to first describe an image by a set of 
captions, and then to find the nearest caption in response to 
a question~\citep{agrawal2017vqa,zitnick2016measuring}. For image synthesis 
from text, one could map from text to the joint embedding space, and then
back to image space~\citep{reed2016generative,reed2016learning}.

Here we focus on visual-semantic embeddings for cross-modal retrieval; 
i.e., the retrieval of images given captions, 
or of captions for a query image.  As is common in retrieval, 
we measure performance by R@$K$, i.e., recall at $K$ -- the fraction of 
queries for which the correct item is retrieved in the closest $K$ points 
to the query in the embedding space ($K$ is usually a small integer, 
often $1$).  More generally, retrieval is a natural way to assess the 
quality of joint embeddings for image and language data~\citep{hodosh2013framing}.

The basic problem is one of ranking; the correct target(s) 
should be closer to the query than other items in the corpus, not unlike
{\em learning to rank}\/ problems \citep[e.g.,][]{li2014learning}, and 
max-margin structured prediction~\citep{chapelle2007large,le2007direct}.
The formulation and model architecture in this paper are most closely related 
to those of \citet{kiros2014unifying}, learned with a triplet ranking loss.  In 
contrast to that work, we advocate a novel loss, the use of augmented data, and 
fine-tuning, which, together, produce a significant increase in caption 
retrieval performance over the baseline ranking loss on well-known benchmark 
data.  We outperform the best reported result on \coco{} by almost $9\%$.  We 
also show that the benefit of a more powerful image encoder, with fine-tuning, 
is amplified with the use of our stronger loss function.
We refer to our model as \VSEpp{}.

Our main contribution is to incorporate hard negatives in the loss 
function.  This was inspired by the use of hard negative mining in 
classification tasks~\citep{dalal2005histograms, felzenszwalb2010object, 
malisiewicz2011ensemble}, and by the use of
hard negatives for improving image embeddings for face 
recognition~\citep{schroff2015facenet, wu2017sampling}. Minimizing a loss 
function using hard negative mining is equivalent to minimizing a modified 
non-transparent loss function with uniform sampling.  We extend the idea with 
the explicit introduction of hard negatives in the loss for multi-modal 
embeddings, without any additional cost of mining.

We also note that our formulation complements other recent articles that 
propose new architectures or similarity functions for this problem. To this 
end, we demonstrate improvements to \citet{vendrov2015order}.
Among other methods that could be improved with a modified loss, 
\citet{wang2017learning} propose an embedding network to fully replace the 
similarity function used for the ranking loss.  An attention mechanism on both 
images and captions is used by \citet{nam2016dual}, where the authors 
sequentially and selectively focus on a subset of words and image regions to 
compute the similarity.  In \citet{huang2016instance}, the authors use 
a multi-modal context-modulated attention mechanism to compute the similarity 
between images and captions. Our proposed loss function and triplet sampling 
could be extended and applied to other such problems.

\comment{word-image embeddings~\citep{weston2010large}.
    Other examples are grounding phrases and semantics~\citep{xiao2017weakly, 
    baroni2016grounding} and object retrieval.  \fartash{TODO: Natural Language 
    Object Retrieval, Learning what and where to draw, Measuring machine 
    intelligence through visual question answering, Weakly-supervised Visual 
    Grounding of Phrases with Linguistic Structures, Grounding distributional 
    semantics in the visual world}}

\comment{{{.
In this paper, we advocate loss functions that directly improve the retrieval 
performance. When recall performance at $1$ (R@$1$) is high, there is a higher 
probability that the first retrieved result is a correct match.  Consequently, 
less computation is needed for retrieving more items and further re-ranking.  

A triplet loss over a triplet of anchor, positive and negative examples is 
defined as a hinge loss that penalizes the model for the negative example being 
a better match to the anchor than the positive example.  Previous work sum over 
the loss for all triplets or a random sample of triplets for mini-batch 
optimization algorithms.  In \citet{frome2007learning}, the authors considered 
pruning the possible set of triplets\fartash{should either remove this or 
expand it}.

To the best of our knowledge, combining the idea of hard negatives with 
mini-batch training has not been done before. 

Older work~\citep{Lin:2014db} performed matching between words and objects 
based on classification scores.

Our results on \coco{} show a dramatic increase in caption retrieval 
performance over the baseline. The new loss function alone outperforms the 
baseline by $8.6\%$. With all introduced changes, \VSEpp{} achieves an absolute 
improvement of $21\%$ in R@1, which corresponds to a $49\%$ relative 
improvement. We outperform the best reported result on \coco{} by almost $9\%$.  
To ensure reproducibility, our code is publicly 
available~\footnote{\url{https://github.com/fartashf/vsepp}}.

We refer to our model as \VSEpp{}. Our loss function and architecture is mostly 
related to the visual-semantic embeddings of \citet{kiros2014unifying} which we 
refer to as \VSE{}.

We also demonstrate the importance of using more powerful image encoders and 
fine-tuning the image encoder. We achieve further improvements exploiting more 
data, and employing a multi-crop trick from \citet{klein2015associating}.  Our 
results on \coco{} show a dramatic increase in caption retrieval performance 
over \VSE{}. The new loss function alone outperforms the original model by 
$8.6\%$. With all introduced changes, \VSEpp{} achieves an absolute improvement 
of $21\%$ in R@1, which corresponds to a $49\%$ relative improvement. We 
outperform the best reported result on \coco{} by almost $9\%$.  To ensure 
reproducibility, our code is publicly 
available~\footnote{\url{https://github.com/fartashf/vsepp}}.
}}}

\section{Learning Visual-Semantic Embeddings}

For image-caption retrieval the query is a caption and the task is to 
retrieve the most relevant image(s) from a database. Alternatively, the 
query may 
be an image, and the task is to retrieves relevant captions.  The goal is to 
maximize recall at $K$ (R@$K$), i.e., the fraction of queries for which the 
most relevant item is ranked among the top $K$ items returned.

Let $S=\{(i_n, c_n)\}^N_{n=1}$ be a training set of image-caption pairs.  
We refer to $(i_n, c_n)$ as {\em positive pairs}\/ and $(i_n, c_{m\neq n})$ as 
{\em negative pairs}\/; i.e., the most relevant caption to the image $i_n$ is 
$c_n$ and for caption $c_n$, it is the image $i_n$.  We define a similarity 
function $s(i, c)\in \R$ that should, ideally, give higher similarity 
scores to positive pairs than negatives.  In caption retrieval, the query 
is an image and we rank a database of captions based on the similarity 
function; i.e., R@$K$ is the percentage of queries for which the positive 
caption is ranked among the top $K$ captions using $s(i, c)$.  Likewise 
for image retrieval.  In what follows the similarity function is defined 
on the joint embedding space.  This differs from other formulations, such as 
\citet{wang2017learning}, which use a similarity network to directly classify 
an image-caption pair as matching or non-matching.

\subsection{Visual-Semantic Embedding}

Let $\phi(i; \theta_\phi)\in \R^{D_\phi}$ be a feature-based representation 
computed from image $i\,$ (e.g., the representation before logits in 
VGG19~\citep{simonyan2014very} or ResNet152~\citep{he2016deep}).  Similarly, 
let $\psi(c; \theta_\psi) \in \R^{D_\psi}$ be a representation
of caption $c$ in a caption embedding space (e.g., a GRU-based text 
encoder).  Here, $\theta_\phi$ and $\theta_\psi$ denote model 
parameters for the respective mappings to these initial 
image and caption representations.

Then, let the mappings into the {\em joint embedding space}\/ 
be defined by linear projections:
\begin{eqnarray}
    f(i; W_f, \theta_\phi) & = & W_f^T \phi(i; \theta_\phi)\\
    g(c; W_g, \theta_\psi) & = & W_g^T \psi(c; \theta_\psi) %
\end{eqnarray}
where $W_f\in\R^{D_\phi\times D}$ and $W_g\in\R^{D_\psi\times D}$.
We further normalize $f(i; W_f, \theta_\phi)$, and $g(c; W_g, \theta_\psi)$, 
to lie on the unit hypersphere.  Finally, we define the similarity 
function in the joint embedding space to be the usual inner product:
\begin{align}
s(i,c) = f(i; W_f, \theta_\phi)\cdot g(c; W_g, \theta_\psi)\,.
\end{align}
Let $\theta=\{W_f,W_g,\theta_\psi\}$ be the model parameters.  If we also
fine-tune the image encoder, then we would also include $\theta_\phi$ in 
$\theta$.

Training entails the minimization of empirical loss with respect to $\theta$, 
i.e., the cumulative loss over training data ${S=\{(i_n, c_n)\}^N_{n=1}}$:
\begin{eqnarray}
e(\theta, S)=\tfrac{1}{N}\sum^N_{n=1} \ell(i_n, c_n)
\end{eqnarray}
where $\ell(i_n, c_n)$ is a suitable loss function for a single training 
exemplar.
Inspired by the use of a triplet loss for image 
retrieval~\citep[e.g.,][]{frome2007learning,chechik2010large}, recent 
approaches
to joint visual-semantic embeddings have used a hinge-based triplet ranking 
loss~\citep{kiros2014unifying, karpathy2015deep, ZhuICCV15, 
socher2014grounded}:
\begin{equation}
    \ell_{SH}(i, c) ~=~
    \sum_{\hat{c}} [\alpha - s(i,c) + s(i,\hat{c})]_+ 
 \, +\, \sum_{\hat{i}} [\alpha - s(i,c) + s(\hat{i},c)]_+\,,
    \label{eq:contrastive}
\end{equation}
where $\alpha$ serves as a margin parameter, and $[x]_+ \equiv \max(x, 0)$.  
This hinge loss comprises two symmetric terms.  The first sum is taken over all 
negative captions $\hat{c}$, given query $i$.  The second is taken over all 
negative images $\hat{i}$, given caption $c$.  Each term is proportional to the 
expected loss (or {\em violation}\/) over sets of negative samples.  If $i$ and 
$c$ are closer to one another in the joint embedding space than to any 
negative, by the margin $\alpha$, the hinge loss is zero.  In practice, for 
computational efficiency, rather than summing over all  negatives in the 
training set, it is common to only sum over (or randomly sample) the negatives 
in a mini-batch of stochastic gradient descent~\citep{kiros2014unifying, 
socher2014grounded, karpathy2015deep}.  The runtime complexity of computing 
this loss approximation is quadratic in the number of image-caption pairs in 
a mini-batch.

Of course there are other loss functions that one might consider.
One  is a pairwise hinge loss in which elements of positive pairs
are encouraged to lie within a hypersphere of radius $\rho_1$ in 
the joint embedding space,
while negative pairs should be no closer than $\rho_2 > \rho_1$.
This is problematic as it constrains the structure of the latent space more 
than does the ranking loss, and it entails the use of two hyper-parameters 
which can be very difficult to set.
Another possible approach is to use Canonical Correlation Analysis to learn 
$W_f$ and $W_g$, thereby trying to preserve correlation between the text and 
images in the joint 
embedding~\citep[e.g.,][]{klein2015associating,eisenschtat2016linking}.  By 
comparison, when measuring performance as R@$K$, for small $K$, 
a correlation-based loss will not give sufficient influence to the embedding of 
negative items in the local vicinity of positive pairs, which is critical for 
R@$K$.  

\subsection{Emphasis on Hard Negatives}
\label{sec:hard_neg}

\begin{figure*}[t!]

    \hspace*{-1cm}
    \centering
    \def\xx{5}
    \def\rr{1.5}
    \def\nn{.5}
    \hspace*{2cm}
    \begin{subfigure}[b]{.18\textwidth}
        \hspace*{-.6cm}
        \begin{tikzpicture}[scale=0.9]
            \draw[dashed] (0, 0) circle (\rr);
            \fill (0,0) circle (1.5pt) node[left] {$i$};
            \fill (\rr,0) circle (1.5pt) node[right] {$c$};
            \draw[dashed] (0, 0) circle (.5);
            \draw (+\nn,0) circle (1.5pt) node[right] {$c^\prime$};

            \draw (-.1,\rr-.1) circle (1.5pt) node[above] {$\hat{c}_1$};
            \draw (+-\rr+.2,+.1) circle (1.5pt) node[left] {$\hat{c}_2$};
            \draw (0.5,-\rr+.2) circle (1.5pt) node[below] {$\hat{c}_3$};
        \end{tikzpicture}
        \caption{}\label{fig:hardneg}
    \end{subfigure}
    \hspace*{3cm}
    \begin{subfigure}[b]{.18\textwidth}
        \hspace*{-.6cm}
        \begin{tikzpicture}[scale=0.9]
            \draw[dashed] (0, 0) circle (\rr);
            \fill (0,0) circle (1.5pt) node[left] {$i$};
            \fill (\rr,0) circle (1.5pt) node[right] {$c$};
            \draw[dashed] (0, 0) circle (.5);
            \draw (\rr-.5,0) circle (1.5pt) node[right] {$c^\prime$};

            \draw (-.1,\rr-.1-.1) circle (1.5pt) node[above] {$\hat{c}_1$};
            \draw (-\rr+.25,+.2) circle (1.5pt) node[left] {$\hat{c}_2$};
            \draw (0.5,-\rr+.2) circle (1.5pt) node[below] {$\hat{c}_3$};
            \draw (-1.,\rr-.5) circle (1.5pt) node[above] {$\hat{c}_4$};
            \draw (-\rr+.25,-.2) circle (1.5pt) node[left] {$\hat{c}_5$};
            \draw (0.1,-\rr+.2) circle (1.5pt) node[below] {$\hat{c}_6$};
        \end{tikzpicture}
        \captionsetup{margin=.5cm}
        \caption{}\label{fig:softneg}
    \end{subfigure}
    \caption{An illustration of typical positive pairs and the nearest negative 
    samples. Here assume similarity score is the negative distance. Filled 
    circles show a positive pair $(i,c)$, while empty circles are negative 
    samples for the query $i$.  The dashed circles on the two sides are drawn 
    at the same radii.  Notice that the hardest negative sample $c^\prime$ is 
    closer to $i$ in \subref{fig:hardneg}. Assuming a zero margin, 
    \subref{fig:softneg} has a higher loss with the \SUM{} loss compared to 
    \subref{fig:hardneg}.  The \MAX{} loss assigns a higher loss to 
    \subref{fig:hardneg}.}
    \label{fig:examples}
\end{figure*}
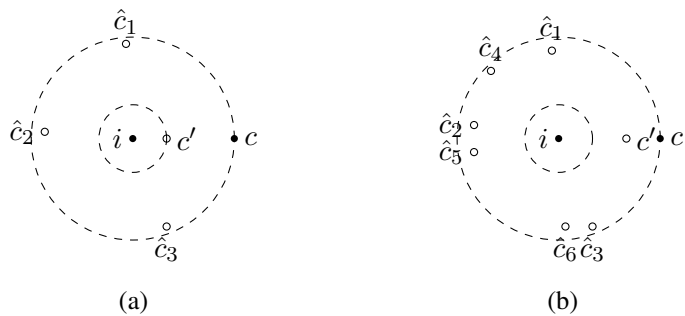%
Inspired by common loss functions used in structured 
prediction~\citep{tsochantaridis2005large, yu2009learning, 
felzenszwalb2010object}, we focus on hard negatives for training, i.e., the 
negatives closest to each training query.  This is particularly relevant for 
retrieval since it is the hardest negative that determines success or failure 
as measured by R@$1$.

Given a positive pair $(i, c)$, the hardest negatives are given by 
$i^\prime=\argmax_{j\neq i} s(j,c)$ and $c^\prime=\argmax_{d\neq c} s(i,d)$.
To emphasize hard negatives we define our loss as
\begin{equation}
    \ell_{MH}(i, c)
    ~=~ \max_{c^\prime} \left[\alpha - s(i,c) + s(i,c^\prime)\right]_+
    \, +\, \max_{i^\prime} \left[\alpha -s(i,c) + s(i^\prime,c)\right]_+ ~ .
    \label{eq:rank_loss}
\end{equation}
Like Eq.~\ref{eq:contrastive}, this loss comprises two terms, 
one with $i$ and one with $c$ as queries.  Unlike Eq.~\ref{eq:contrastive}, 
this loss is specified in terms of the hardest negatives,  $c^\prime$ and 
$i^\prime$. We refer to the loss in Eq.~\ref{eq:rank_loss} 
as {\em Max of Hinges (\MAX{})}\/ loss, and the loss in 
Eq.~\ref{eq:contrastive} as {\em Sum of Hinges (\SUM{})}\/ loss. There is 
a spectrum of loss functions from the \SUM{} loss to the \MAX{} loss. In the 
\MAX{} loss, the winner takes all the gradients, where instead we use 
re-weighted gradients of all the triplets. We only discuss the \MAX{} loss as 
it was empirically found to perform the best.

One case in which the \MAX{} loss is superior to \SUM{} is when multiple 
negatives with small violations combine to dominate the \SUM{} loss.  
For example, Fig.~\ref{fig:examples} depicts a positive pair
together with two sets of negatives.  In Fig.~\ref{fig:hardneg}, a 
single negative is too close to the query, which may require
a significant change to the mapping.  
However, any training step that pushes the hard negative 
away, might cause a number of small violating negatives, as 
in Fig.~\ref{fig:softneg}. Using the \SUM{} loss, these `new' negatives 
may dominate the loss, so the model is pushed back to the first example in 
Fig.~\ref{fig:hardneg}.  As a result, the optimization may oscillate between 
two states with no reduction in the \SUM{} loss while in a similar scenario the 
\MAX{} loss would decrease monotonically because it focuses on the hardest 
negative.

For computational efficiency, instead of finding the hardest negatives in the 
entire training set, we find them within each mini-batch.  
This has the same quadratic 
complexity as the complexity of the \SUM{} loss.  With random 
sampling of the mini-batches, this approximation yields other advantages.  One is 
that there is a high probability of getting hard negatives that are harder than 
at least $90\%$ of the entire training set (see \cref{sec:prob_hard_neg} for 
the explanation).  Moreover, the loss is potentially robust to label errors in 
the training data because the probability of sampling the hardest negative over 
the entire training set is somewhat low.

\subsection{Probability of Sampling the Hardest Negative}
\label{sec:prob_hard_neg}

Let $S=\{(i_n, c_n)\}^N_{n=1}$ denote a training set of image-caption pairs,
and let $C=\{c_n\}$ denote the set of captions.  Suppose we draw $M$ samples 
in a mini-batch, $Q=\{(i_m, c_m)\}^M_{m=1}$, from $S$.  Let the permutation, 
$\pi_m$, on $C$ refer to the rankings of captions according to the similarity 
function $s(i_m,c_n)$ for $c_n\in S\setminus\{c_m\}$. We can assume 
permutations, $\pi_m$, are uncorrelated.

Given a query image, $i_m$, we are interested in the probability of getting 
no captions from the $90$th percentile of $\pi_m$ in the mini-batch.  Assuming 
i.i.d.\ samples, this probability is simply $.9^{(M-1)}$, the probability that 
no sample in the mini-batch is from the $90$th percentile.  This probability 
tends to zero exponentially fast, falling below $1\%$ for $M\geq 44$. Hence, 
for large enough mini-batches, with high probability we sample negative 
captions that are harder than $90\%$ of the entire training set.
The probability for the $99.9$th percentile of $\pi_m$ tends to zero 
more slowly; it falls below $1\%$ for $M\geq 6905$, which is a relatively 
large mini-batch. 

While we get strong signals  by randomly 
sampling negatives within mini-batches, such sampling also provides 
some robustness to outliers, such as negative captions that 
better describe an image compared to the ground-truth caption.
Mini-batches as small as $128$ can provide strong 
enough training signal and robustness to label errors. Of course by 
increasing the mini-batch size, we get harder negative examples and possibly 
a stronger training signal.  However, by increasing the mini-batch size, we lose 
the benefit of SGD in finding good optima and exploiting the gradient noise.  
This can lead to getting stuck in local optima or as observed by 
\citet{schroff2015facenet}, extremely long training time.

\section{Experiments}

\begin{table*}[t!]
\resizebox{\linewidth}{!}{
    \centering
     \begin{tabular}{c|c|c|HccccH|ccccH}
         \# &
         {\bf Model} & {\bf Trainset} & {\bf R Sum}
         & \multicolumn{5}{c|}{Caption Retrieval}
         & \multicolumn{5}{c}{Image Retrieval}\\
         & & & &
         {\bf R@1} & {\bf R@5} & {\bf R@10} & {\bf Med r} & {\bf Mean r} &
         {\bf R@1} & {\bf R@5} & {\bf R@10} & {\bf Med r} & {\bf Mean r}\\
         \hline
         & & \multicolumn{12}{c}{{\cellcolor[gray]{0.8}\bf 1K Test Images}}\\
         \hline
         \numrow{}\label{coco:VSE}&
         \VSE{}~(\citet{kiros2014unifying}, GitHub) & \OC{} (1 fold) &
         $382.5$ &
         $43.4$ & $75.7$ & $85.8$ & $2$ & - &
         $31.0$ & $66.7$ & $79.9$ & $3$ & - \\
         \numrow{} &
         \order{}~\citep{vendrov2015order}& \TCV{} &
         - &
         $46.7$ & - & $88.9$ & $2.0$ & $5.7$ &
         $37.9$ & - & $85.9$ & $2.0$ & $8.1$\\
         \numrow{} &
         Embedding Net~\citep{wang2017learning} & \TCV{} &
         - &
         $50.4$ & $79.3$ & $69.4$ & - & - &
         $39.8$ & $75.3$ & $86.6$ & - & -
         \\
         \numrow{}\label{coco:smLSTM} &
         sm-LSTM~\citep{huang2016instance} &? &
         $431.8$ &
         $53.2$ & $83.1$ & $91.5$ & $\mathbf{1}$ & - &
         $40.7$ & $75.8$ & $87.4$ & $2$ & -
         \\
         \numrow{}\label{coco:twoway} &
         2WayNet~\citep{eisenschtat2016linking} & \TCV{} &
         - &
         $55.8$ & $75.2$ & - & - & - &
         $39.7$ & $63.3$ & - & - & -
         \\
         \hline
         \numrow{}\label{coco:VSEppOC} &
         \VSEpp{} & \OC{} (1 fold) &
         $386.5$ &
         $43.6$ & $74.8$ & $84.6$ & $2.0$ & $8.8$ &
         $33.7$ & $68.8$ & $81.0$ & $3.0$ & $12.9$\\

         \numrow{}\label{coco:VSEppRC} &
         \VSEpp{} & \RC &
         $410.3$ &
         $49.0$ & $79.8$ & $88.4$ & $1.8$ & $6.5$ &
         $37.1$ & $72.2$ & $83.8$ & $2.0$ & $10.8$\\

         \numrow{}\label{coco:VSEppRCV} &
         \VSEpp{} & \RCV{} &
         $423.1$ &
         $51.9$ & $81.5$ & $90.4$ & $\mathbf{1.0}$ & $5.8$ &
         $39.5$ & $74.1$ & $85.6$ & $2.0$ & $10.0$\\

         \numrow{}\label{coco:VSEppFt} &
         \VSEppFt{} & \RCV{} &
         $450.9$ &
         $57.2$ & $86.0$ & $93.3$ & $\mathbf{1.0}$ & $4.2$ &
         $45.9$ & $79.4$ & $89.1$ & $2.0$ & $8.5$\\

         \numrow{}\label{coco:VSEppRes} &
         \VSEppRes{} & \RCV{} &
         $446.8$ &
         $58.3$ & $86.1$ & $93.3$ & $\mathbf{1.0}$ & $4.2$ &
         $43.6$ & $77.6$ & $87.8$ & $2.0$ & $7.8$
         \\

         \numrow{}\label{coco:VSEppResFt} &
         \VSEppResFt{} & \RCV{} &
         $\mathbf{478.6}$ &
         $\mathbf{64.6}$ & $\mathbf{90.0}$ & $\mathbf{95.7}$ & $\mathbf{1.0}$ 
         & $\mathbf{3.4}$ &
         $\mathbf{52.0}$ & $\mathbf{84.3}$ & $\mathbf{92.0}$ & $\mathbf{1.0}$ 
         & $\mathbf{6.1}$
         \\

        \hline
         & & \multicolumn{12}{c}{{\cellcolor[gray]{0.8}\bf 5K Test Images}}\\
         \hline
         \numrow{} &
         \order{}~\citep{vendrov2015order}& \TCV{} &
         - &
         $23.3$ & - & $65.0$ & $5.0$ & $24.4$ &
         $18.0$ & - & $57.6$ & $7.0$ & $35.9$\\

         \numrow{}\label{coco:VSEppFt5} &
         \VSEppFt{} & \RCV{} &
         $312.4$ &
         $32.9$ & $61.7$ & $74.7$ & $3.0$ & $16.9$ &
         $24.1$ & $52.8$ & $66.2$ & $5.0$ & $38.1$\\

         \numrow{}\label{coco:VSEppResFt5} &
         \VSEppResFt{} & \RCV{} &
         $\mathbf{355.8}$ &
         $\mathbf{41.3}$ & $\mathbf{71.1}$ & $\mathbf{81.2}$ & $\mathbf{2.0}$ 
         & $\mathbf{12.4}$ &
         $\mathbf{30.3}$ & $\mathbf{59.4}$ & $\mathbf{72.4}$ & $\mathbf{4.0}$ 
         & $\mathbf{26.0}$
        \\[-2mm]
    \end{tabular}
    }
    \caption{Results of experiments on \coco{}.}
    \label{tb:coco}
\end{table*}

Below we perform experiments with our approach, \VSEpp{}, comparing it to 
a baseline formulation with $\SUM{}$ loss, denoted \VSEz{}, and other 
state-of-the-art approaches. Essentially, the baseline formulation, \VSEz{}, is 
similar to that in \citet{kiros2014unifying}, denoted  \VSE{}.

We experiment with two image encoders: VGG19 by~\citet{simonyan2014very} and 
ResNet152 by~\citet{he2016deep}.  In what follows, we use VGG19 unless 
specified otherwise.  As in previous work we extract image features directly 
from FC$7$, the penultimate fully connected layer.
The dimensionality of the image embedding, $D_\phi$, 
is $4096$ for VGG19 and $2048$ for ResNet152.

In more detail, we first resize the image to $256\times256$, and then 
use either a single crop of size $224\times224$ or the mean of feature vectors 
for multiple crops of similar size. We refer to training with one center crop 
as \OC{}, and training with $10$ crops at fixed locations as \TC{}. These image 
features can be pre-computed once and reused. We also experiment with using 
a single random crop, denoted by \RC{}.  For \RC{}, image features are computed 
on the fly. Recent works have mostly used \RC{}/\TC{}. In our preliminary 
experiments, we did not observe significant differences between \RC{}/\TC{}. As 
such, we perform  most experiments with \RC{}.

For the caption encoder, we use a GRU similar to the one used in 
\citet{kiros2014unifying}.  We set the dimensionality of the GRU, $D_\psi$, and 
the joint embedding space, $D$, to $1024$. The dimensionality of the word 
embeddings that are input to the GRU is set to $300$.

We further note that in \citet{kiros2014unifying}, the caption embedding is 
normalized, while the image embedding is not. Normalization of both vectors 
means that the similarity function is cosine similarity.  In \VSEpp{} we 
normalize both vectors. Not normalizing the image embedding changes the 
importance of samples. In our experiments, not normalizing the image embedding  
helped the baseline, \VSEz{}, to find a better solution.  However, \VSEpp{} is 
not significantly affected by this normalization.

\subsection{Datasets}

We evaluate our method on the Microsoft COCO dataset~\citep{lin2014microsoft} 
and the \fthk{} dataset~\citep{young2014image}. \fthk{} has a standard $30,000$ 
images for training. Following \citet{karpathy2015deep}, we use $1000$ images 
for validation and $1000$  images for testing. We also use the splits of 
\citet{karpathy2015deep} for \coco{}. In this split, the training set contains 
$82,783$ images, $5000$ validation and $5000$ test images.  However, there are 
also $30,504$ images that were originally in the validation set of \coco{} but 
have been left out in this split. We refer to this set as \RV{}. Some papers 
use \RV{} for training ($113,287$ training images in total) to further improve 
accuracy.  We report results using both training sets.  Each image comes with 
$5$ captions.  The results are reported by either averaging over $5$ folds of 
$1K$ test images or testing on the full $5K$ test images.

\subsection{Details of Training}
\label{sec:train_detail}
We use the Adam optimizer~\citep{kingma2014adam}.  Models are trained for at 
most $30$ epochs.  Except for fine-tuned models,  we start training with 
learning rate $0.0002$ for $15$ epochs, and then lower the learning rate to 
$0.00002$ for another $15$ epochs.  The fine-tuned models are trained by taking 
a model trained for $30$ epochs with a fixed image encoder, and then training 
it for $15$ epochs with a learning rate of $0.00002$.  We set the margin to 
$0.2$ for most experiments.  We use a mini-batch size of $128$ in all  
experiments.  Notice that since the size of the training set for different 
models is different, the actual number of iterations in each epoch can vary.  
For evaluation on the test set, we tackle over-fitting by choosing the snapshot 
of the model that performs best on the validation set.  The best snapshot is 
selected based on the sum of the recalls on the validation set.

\subsection{Results on \coco{}}

\begin{table*}[t!]
 \resizebox{\linewidth}{!}{
\centering
     \begin{tabular}{c|c|c|HccccH|ccccH}
         \# &
         {\bf Model} & {\bf Trainset} & {\bf R Sum}
         & \multicolumn{5}{c|}{Caption Retrieval}
         & \multicolumn{5}{c}{Image Retrieval}\\
         & & & &
         {\bf R@1} & {\bf R@5} & {\bf R@10} & {\bf Med r} & {\bf Mean r} &
         {\bf R@1} & {\bf R@5} & {\bf R@10} & {\bf Med r} & {\bf Mean r}\\
         \hline
         \numrow{}\label{coco:VSEzOC} &
         \VSEz{} & \OC{} (1 fold) &
         $383.2$ &
         $43.2$ & $73.9$ & $85.0$ & $2.0$ & $7.6$ &
         $33.0$ & $67.4$ & $80.7$ & $3.0$ & $11.3$\\

         {}\ref{coco:VSEppOC} &
         \VSEpp{} & \OC{} (1 fold) &
         $386.5$ &
         $43.6$ & $74.8$ & $84.6$ & $2.0$ & $8.8$ &
         $33.7$ & $68.8$ & $81.0$ & $3.0$ & $12.9$\\

         \numrow{}\label{coco:VSEzRC} &
         \VSEz{} & \RC &
         $390.0$ &
         $43.1$ & $77.0$ & $87.1$ & $2.0$ & $6.5$ &
         $32.5$ & $68.3$ & $82.1$ & $3.0$ & $9.5$\\

         {}\ref{coco:VSEppRC} &
         \VSEpp{} & \RC &
         $410.3$ &
         $49.0$ & $79.8$ & $88.4$ & $1.8$ & $6.5$ &
         $37.1$ & $72.2$ & $83.8$ & $2.0$ & $10.8$\\

         \numrow{}\label{coco:VSEzRCV} &
         \VSEz{} & \RCV{} &
$402.7$ &
$46.8$ & $78.8$ & $89.0$ & $1.8$ & $6.1$ &
$34.2$ & $70.4$ & $83.6$ & $2.6$ & $8.6$
         \\
         {}\ref{coco:VSEppRCV} &
         \VSEpp{} & \RCV{} &
         $423.1$ &
         $51.9$ & $81.5$ & $90.4$ & $\mathbf{1.0}$ & $5.8$ &
         $39.5$ & $74.1$ & $85.6$ & $2.0$ & $10.0$\\

         \numrow{}\label{coco:VSEzFt} &
         \VSEzFt{} & \RCV{} &
$424.3$ &
$50.1$ & $81.5$ & $90.5$ & $1.6$ & $5.5$ &
$39.7$ & $75.4$ & $87.2$ & $2.0$ & $7.1$
         \\
         {}\ref{coco:VSEppFt} &
         \VSEppFt{} & \RCV{} &
         $450.9$ &
         $57.2$ & $86.0$ & $93.3$ & $\mathbf{1.0}$ & $4.2$ &
         $45.9$ & $79.4$ & $89.1$ & $2.0$ & $8.5$\\

         \numrow{}\label{coco:VSEzRes} &
         \VSEzRes{} & \RCV{} &
$455.1$ &
$52.7$ & $83.0$ & $91.8$ & $1.0$ & $4.5$&
$36.0$ & $72.6$ & $85.5$ & $2.2$ & $7.7$
         \\
         {}\ref{coco:VSEppRes} &
         \VSEppRes{} & \RCV{} &
         $446.8$ &
         $58.3$ & $86.1$ & $93.3$ & $\mathbf{1.0}$ & $4.2$ &
         $43.6$ & $77.6$ & $87.8$ & $2.0$ & $7.8$
         \\

        \numrow{}\label{coco:VSEzResFt} &
         \VSEzResFt{} & \RCV{} &
$470.6$ &
$56.0$ & $85.8$ & $93.5$ & $1.0$ & $4.0$&
$43.7$ & $79.4$ & $89.7$ & $2.0$ & $6.2$
         \\
          {}\ref{coco:VSEppResFt} &
         \VSEppResFt{} & \RCV{} &
         $\mathbf{478.6}$ &
         $\mathbf{64.6}$ & $\mathbf{90.0}$ & $\mathbf{95.7}$ & $\mathbf{1.0}$ 
         & $\mathbf{3.4}$ &
         $\mathbf{52.0}$ & $\mathbf{84.3}$ & $\mathbf{92.0}$ & $\mathbf{1.0}$ 
         & $\mathbf{6.1}$
         \\[-1mm]

    \end{tabular}
    }
    \caption{The effect of data augmentation and fine-tuning. We copy the 
    relevant results for \VSEpp{} from Table~\ref{tb:coco} to enable an easier 
    comparison.  Notice that after applying all the modifications, \VSEz{} 
    model reaches $56.0\%$ for $R@1$, while \VSEpp{} achieves $64.6\%$.}
    \label{tb:vse0}
\end{table*}

The results on the \coco{} dataset are presented in Table~\ref{tb:coco}.  To 
understand the effect of training and algorithmic variations we report ablation 
studies for the baseline \VSEz{} (see Table~\ref{tb:vse0}).  Our best result 
with \VSEpp{} is achieved by using ResNet152 and fine-tuning the image encoder 
(row~\ref{coco:VSEppResFt}), where we see $21.2\%$ improvement in R@1 for 
caption retrieval and $21\%$ improvement in R@1 for image retrieval compared to 
\VSE{} (rows~\ref{coco:VSE} and~\ref{coco:VSEppResFt}).

\emph{Effect of \MAX{} loss}.  Using ResNet152 and fine-tuning can only lead to 
$12.6\%$ improvement using the \VSEz{} formulation (rows~\ref{coco:VSEzResFt} 
and~\ref{coco:VSE}), while our \MAX{} loss function brings a significant 
additional gain of $8.6\%$ (rows~\ref{coco:VSEppResFt} 
and~\ref{coco:VSEzResFt}).

\emph{Effect of the training set}. We compare \VSEz{} and \VSEpp{} by 
incrementally improving the training data.  Comparing the models trained on 
\OC{} (rows~\ref{coco:VSE} and~\ref{coco:VSEppOC}), we only see $2.7\%$ 
improvement in R@1 for image retrieval but no improvement in caption retrieval 
performance. However, when we train using \RC{} (rows~\ref{coco:VSEppRC} 
and~\ref{coco:VSEzRC}) or \RCV{} (rows~\ref{coco:VSEppRCV} 
and~\ref{coco:VSEzRCV}), we see that \VSEpp{} gains an improvement of $5.9\%$ 
and $5.1\%$, respectively,  in R@1 for caption retrieval compared to \VSEz{}.  
This shows that \VSEpp{} can better exploit the additional data.

\emph{Effect of a better image encoding}. We also investigate the effect of 
a better image encoder on the models.  Row~\ref{coco:VSEppFt} and 
row~\ref{coco:VSEzFt} show the effect of fine-tuning the VGG19 image encoder. 
We see that the gap between \VSEz{} and \VSEpp{} increases to $6.1\%$. If we 
use ResNet152 instead of VGG19 (row~\ref{coco:VSEppRes} and 
row~\ref{coco:VSEzRes}), the gap is $5.6\%$. As for our best result, if we use 
ResNet152 and also fine-tune the image encoder (row~\ref{coco:VSEppResFt} and 
row~\ref{coco:VSEzResFt}) the gap becomes $8.6\%$. The increase in the 
performance gap shows that the improved loss of \VSEpp{} can better guide the 
optimization when a more powerful image encoder is used.

\emph{Comparison with state-of-the-art}. Comparing \VSEppResFt{} to the 
state-of-the-art on \coco{} at the time of publication, {\em 2WayNet}\/ 
(row~\ref{coco:VSEppResFt} and row~\ref{coco:twoway}), we see $8.8\%$ 
improvement in R@1 for caption retrieval and compared to {\em sm-LSTM}\/ 
(row~\ref{coco:VSEppResFt} and row~\ref{coco:smLSTM}), $11.3\%$ improvement in 
image retrieval.
We also report results on the full $5K$ test set of \coco{} in 
rows~\ref{coco:VSEppFt5} and~\ref{coco:VSEppResFt5}.

\subsection{Results on \fthk{}}

\begin{table*}[t!]
    \resizebox{\linewidth}{!}{
        \centering
    \begin{tabular}{c|c|c|HccccH|ccccH}
        \# &
        {\bf Model} & {\bf Trainset} & {\bf R Sum}
        & \multicolumn{5}{c|}{Caption Retrieval}
        & \multicolumn{5}{c}{Image Retrieval}\\
        & & & &
        {\bf R@1} & {\bf R@5} & {\bf R@10} & {\bf Med r} & {\bf Mean r} &
        {\bf R@1} & {\bf R@5} & {\bf R@10} & {\bf Med r} & {\bf Mean r}\\
        \hline
        \numrow{}\label{f30k:VSE} &
        \VSE{}~\citep{kiros2014unifying} & \OC{}  &
        $251.9$ &
        $23.0$ & $50.7$ & $62.9$ & $5$ & - &
        $16.8$ & $42.0$ & $56.5$ & $8$ & -
        \\
        \numrow{}\label{f30k:VSEgit} &
        \VSE{} (GitHub) & \OC{}  &
        $287.9$ &
        $29.8$ & $58.4$ & $70.5$ & $4$ & - &
        $22.0$ & $47.9$ & $59.3$ & $6$ & -
        \\
        \numrow{} &
        Embedding Net~\citep{wang2017learning} & \TC{} &
        - &
        $40.7$ & $69.7$ & $79.2$ & - &? &
        $29.2$ & $59.6$ & $71.7$ & - & -
        \\
        \numrow{} &
        DAN~\citep{nam2016dual} &? &
        - &
        $41.4$ & $73.5$ & $82.5$ & $2$ & - &
        $31.8$ & $61.7$ & $72.5$ & $3$ & -
        \\
        \numrow{} &
        sm-LSTM~\citep{huang2016instance} &? &
        $358.7$ &
        $42.5$ & $71.9$ & $81.5$ & $2$ & - &
        $30.2$ & $60.4$ & $72.3$ & $3$ & -
        \\
        \numrow{} &
        2WayNet~\citep{eisenschtat2016linking} & \TC{} &
        - &
        $49.8$ & $67.5$ & - & - & - &
        $36.0$ & $55.6$ & - & - & -
        \\
        \numrow{} &
        DAN (ResNet)~\citep{nam2016dual} &? &
        - &
        $\mathbf{55.0}$ & $\mathbf{81.8}$ & $\mathbf{89.0}$ & $\mathbf{1}$ 
        & - &
        $\mathbf{39.4}$ & $\mathbf{69.2}$ & $\mathbf{79.1}$ & $\mathbf{2}$ & -
        \\
        \hline
        \numrow{} &
        \VSEz{} & \OC{}  &
$294.3$ &
$29.8$ & $59.8$ & $71.9$ & $3.0$ & $24.7$ &
$23.0$ & $48.8$ & $61.0$ & $6.0$ & $35.3$
        \\

        \numrow{}\label{f30k:VSEzRC}&
        \VSEz{} & \RC &
$298.7$ &
$31.6$ & $59.3$ & $71.7$ & $4.0$ & $25.3$&
$21.6$ & $50.7$ & $63.8$ & $5.0$ & $30.0$
        \\

        \numrow{} &
        \VSEpp{} & \OC{}  &
$291.3$ &
$31.9$ & $58.4$ & $68.0$ & $4.0$ & $30.9$ &
$23.1$ & $49.2$ & $60.7$ & $6.0$ & $35.6$
        \\

        \numrow{}\label{f30k:VSEppRC}&
        \VSEpp{} & \RC &
        $326.3$ &
        $38.6$ & $64.6$ & $74.6$ & $2.0$ & $21.3$ &
        $26.8$ & $54.9$ & $66.8$ & $4.0$ & $30.0$\\

\numrow{}&
\VSEzFt{} & \RC &
$333.9$ & 
$37.4$ & $65.4$ & $77.2$ & $3.0$ & $16.2$&
$26.8$ & $57.6$ & $69.5$ & $4.0$ & $22.9$
\\

        \numrow{} &
        \VSEppFt{} & \RC &
$350.9$ &
$41.3$ & $69.1$ & $77.9$ & $2.0$ & $18.2$ &
$31.4$ & $60.0$ & $71.2$ & $3.0$ & $24.9$
        \\

\numrow{}&
\VSEzRes{} & \RC &
$324.2$ & 
$36.6$ & $67.3$ & $78.4$ & $3.0$ & $12.6$&
$23.3$ & $52.6$ & $66.0$ & $5.0$ & $25.4$
\\
        \numrow{} &
        \VSEppRes{} & \RC &
$363.1$ &
$43.7$ & $71.9$ & $82.1$ & $2.0$ & $12.5$&
$32.3$ & $60.9$ & $72.1$ & $3.0$ & $22.2$
        \\

\numrow{}&
\VSEzResFt{} & \RC &
$367.7$ & 
$42.1$ & $73.2$ & $84.0$ & $2.0$ & $11.1$&
$31.8$ & $62.6$ & $74.1$ & $3.0$ & $18.6$
\\
        \numrow{}\label{f30k:VSEppResFt} &
        \VSEppResFt{} & \RC &
$409.7$ &
$52.9$ & $80.5$ & $87.2$ & $1.0$ & $9.5$&
$\mathbf{39.6}$ & $\mathbf{70.1}$ & $\mathbf{79.5}$ & $\mathbf{2.0}$ 
& $\mathbf{16.6}$
        \\[-1mm]

   \end{tabular}
    }
    \caption{Results on the \fthk{} dataset.}
    \label{tb:f30k}

\end{table*}

Tables~\ref{tb:f30k} summarizes the performance on \fthk{}.  We obtain $23.1\%$ 
improvement in R@$1$ for caption retrieval and $17.6\%$ improvement in R@$1$ 
for image retrieval (rows~\ref{f30k:VSE} and~\ref{f30k:VSEppResFt}). We 
observed that \VSEpp{} over-fits when trained with the pre-computed features of 
\OC{}. The reason is potentially the limited size of the \fthk{} training set.  
As explained in Sec.~\ref{sec:train_detail}, we select a snapshot of the model 
before over-fitting occurs, based on performance with the validation set.  
Over-fitting does not occur when the model is trained using the \RC{} training 
data.  Our results show the improvements incurred by our \MAX{} loss persist 
across datasets, as well as across models.

\subsection{Improving Order Embeddings}
\label{sec:order}

Given the simplicity of our approach, our proposed loss function can complement 
the recent approaches that use more sophisticated model architectures or 
similarity functions. Here we demonstrate the benefits of the \MAX{} loss by 
applying it to another approach to joint embeddings called 
order-embeddings~\citep{vendrov2015order}.  The main difference with the 
formulation above is the use of an asymmetric similarity function, i.e., 
$s(i,c)=-\|\max(0, g(c; W_g, \theta_\psi) - f(i; W_f, \theta_\phi)) \|^2$.  
Again, we simply replace their use of the \SUM{} loss by our \MAX{} loss.

Like their experimental setting, we use the training set \TCV{}.  For our 
\orderpp{}, we use the same learning schedule and margin as our other 
experiments.  However, we use their training settings to train \orderz{}. 
We start training with a learning rate of $0.001$ for $15$ epochs 
and lower the learning rate to $0.0001$ for another $15$ epochs.  Like 
\citet{vendrov2015order} we use a margin of $0.05$.  Additionally, 
\citet{vendrov2015order} takes the absolute value of embeddings before 
computing the similarity function which we replicate only for \orderz{}.

Table~\ref{tb:order} reports the results when the \SUM{} loss 
is replaced by the \MAX{} loss.  We replicate their results using 
our \orderz{} formulation and get slightly better results 
(row~\ref{order:order} and row~\ref{order:orderz}). We observe $4.5\%$ 
improvement from \orderz{} to \orderpp{} in R@$1$ for caption retrieval 
(row~\ref{order:orderz} and row~\ref{order:orderpp}).  Compared to the 
improvement from \VSEz{} to \VSEpp{}, where the improvement on the \TCV{} 
training set is $1.8\%$, we gain an even higher improvement here.
This shows  that the \MAX{} loss can potentially improve numerous similar 
loss functions used in retrieval and ranking tasks.

\begin{table}[h]
   \resizebox{\linewidth}{!}{\centering
     \begin{tabular}{c|c|HHccccH|ccccH}
         \# &
         {\bf Model} & {\bf Trainset} & {\bf R Sum}
         & \multicolumn{5}{c|}{Caption Retrieval}
         & \multicolumn{5}{c}{Image Retrieval}\\
         & & & &
         {\bf R@1} & {\bf R@5} & {\bf R@10} & {\bf Med r} & {\bf Mean r} &
         {\bf R@1} & {\bf R@5} & {\bf R@10} & {\bf Med r} & {\bf Mean r}\\
         \hline
         & & \multicolumn{12}{c}{{\cellcolor[gray]{0.8}\bf 1K Test Images}}\\
         \hline

         \numrow{}\label{order:order} &
         \order{}~\citep{vendrov2015order}& \TCV{} &
         - &
         $46.7$ & - & $88.9$ & $2.0$ & $5.7$ &
         $37.9$ & - & $85.9$ & $2.0$ & $8.1$\\
         \hline

         \numrow{}\label{order:vsez} &
         \VSEz{} & \TCV{} &
         $417.0$ &
         $49.5$ & $81.0$ & $90.0$ & $1.8$ & $5.1$ &
         $38.1$ & $73.3$ & $85.1$ & $2.0$ & $8.4$\\

         \numrow{}\label{order:orderz} &
         \orderz{} & \TCV{} &
         $421.3$ &
         $48.5$ & $80.9$ & $90.3$ & $1.8$ & $5.2$ &
         $39.6$ & $75.3$ & $86.7$ & $2.0$ & $7.4$
         \\

         \numrow{}\label{order:vsepp}&
         \VSEpp{} & \TCV{} &
         $425.9$ &
         $51.3$ & $82.2$ & $91.0$ & $1.2$ & $5.0$ &
         $40.1$ & $75.3$ & $86.1$ & $2.0$ & $10.5$
         \\

        \numrow{}\label{order:orderpp}&
         \orderpp{} & \TCV{} &
         $436.0$ &
         ${\bf 53.0}$ & $83.4$ & ${\bf 91.9}$ & ${\bf 1.0}$ & $4.6$ &
         ${\bf 42.3}$ & $77.4$ & ${\bf 88.1}$ & $2.0$ & $8.2$
         \\[-1mm]
    \end{tabular}
    }
        \captionof{table}{Comparison on \coco{}. Training set for all the rows 
        is \TCV{}.}
        \label{tb:order}
\end{table}
\subsection{Behavior of Loss Functions}

We  observe that the \MAX{} loss can take a few epochs to `warm-up' 
during training.  Fig.~\ref{fig:sum_vs_max} depicts such behavior on the 
\fthk{} dataset using \RC{}.  Notice that the \SUM{} loss starts off 
faster, but after approximately $5$ epochs \MAX{} loss surpasses \SUM{} loss.  
To explain this, the \MAX{} loss depends on a smaller set of triplets 
compared to the \SUM{} loss.  Early in training the gradient of the \MAX{} 
loss is  influenced by a relatively small set of triples.  As such, it can 
take more iterations to train a model with the \MAX{} loss. We explored a simple 
form of curriculum learning~\citep{bengio2009curriculum} to speed-up the  
training. We start training with the \SUM{} loss for a few epochs, then 
switch to the \MAX{} loss for the rest of the training. However, it did 
not perform much better than training solely with the \MAX{} loss.

\begin{figure}[t]
    \centering
        \includegraphics[width=.5\linewidth]{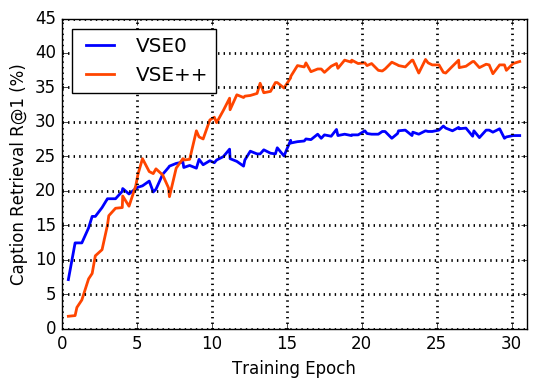}
        \caption{Analysis of the behavior of the \MAX{} loss on the \fthk{} 
        dataset training with \RC{}. This figure compares the \SUM{} loss to 
        the \MAX{} loss (Table~\ref{tb:f30k}, row~\ref{f30k:VSEzRC} and 
        row~\ref{f30k:VSEppRC}). Notice that, in the first $5$ epochs the 
        \SUM{} loss achieves a better performance, however, from there-on the 
        \MAX{} loss leads to much higher recall rates.}
        \label{fig:sum_vs_max}
\end{figure}

In \citet{schroff2015facenet}, it is reported that with a mini-batch size of 
$1800$,  training  is extremely slow. We experienced similar behavior with 
large mini-batches up to $512$. However, mini-batches of size $128$ or $256$ 
exceeded the performance of the \SUM{} loss within the same training time.

\subsection{Examples of Hard Negatives}

Fig.~\ref{fig:sample_hard_negatives} shows the hard negatives in a random 
mini-batch. These examples illustrate that hard negatives from a mini-batch can 
provide useful gradient information.

\begin{figure*}[t]
\centering
\scriptsize
\begin{tabular}[h]{>{\centering\arraybackslash}m{0.22\linewidth}
>{\centering\arraybackslash}m{0.22\linewidth}
>{\centering\arraybackslash}m{0.22\linewidth}
>{\centering\arraybackslash}m{0.22\linewidth}}
\makecell[{{p{\linewidth}}}]{\includegraphics[width=\linewidth, height=\linewidth]{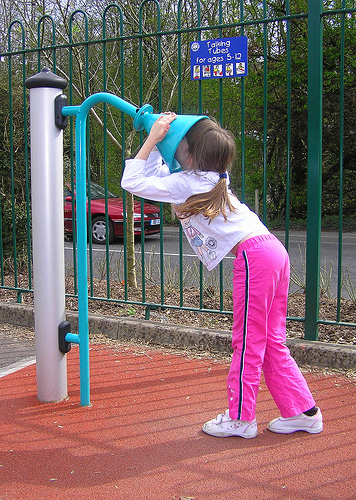}\\[1mm]
{\bf GT}: A little girl wearing pink pants, pink and white tennis shoes and a white shirt with a little girl on it puts her face in a blue Talking Tube. \\[1mm]
{\bf HN}: [0.26] Blond boy jumping onto deck. \\[9mm]
}
&
\makecell[{{p{\linewidth}}}]{\includegraphics[width=\linewidth, height=\linewidth]{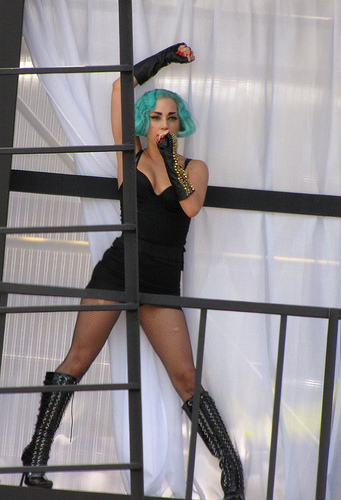}\\[1mm]
{\bf GT}: A teal-haired woman in a very short black dress, pantyhose, and boots standing with right arm raised and left hand obstructing her mouth in microphone-singing fashion is standing. \\[1mm]
{\bf HN}: [0.08] Two dancers in azure appear to be performing in an alleyway. \\[1mm]
}
&
\makecell[{{p{\linewidth}}}]{\includegraphics[width=\linewidth, height=\linewidth]{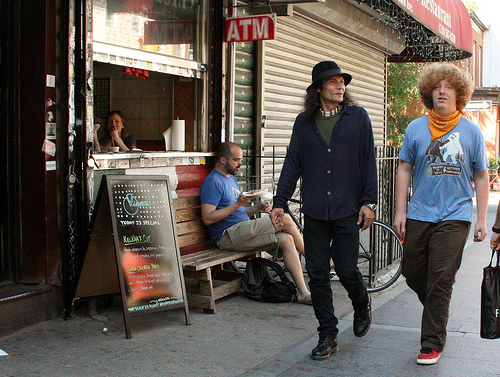}\\[1mm]
{\bf GT}: Two men, one in a dark blue button-down and the other in a light blue tee, are chatting as they walk by a small restaurant. \\[1mm]
{\bf HN}: [0.41] Two men with guitars strapped to their back stand on the street corner with two other people behind them. \\[1mm]
}
&
\makecell[{{p{\linewidth}}}]{\includegraphics[width=\linewidth, height=\linewidth]{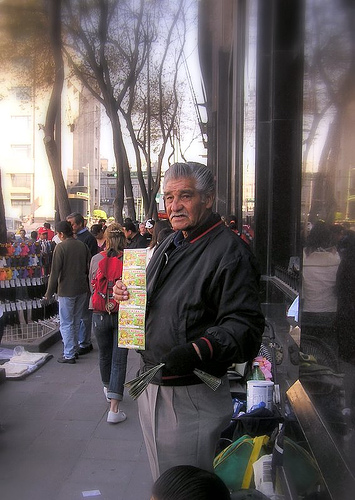}\\[1mm]
{\bf GT}: A man wearing a black jacket and gray slacks, stands on the sidewalk holding a sheet with something printed on it in his hand. \\[1mm]
{\bf HN}: [0.26] Two men with guitars strapped to their back stand on the street corner with two other people behind them. \\[1mm]
}
\\
\makecell[{{p{\linewidth}}}]{\includegraphics[width=\linewidth, height=\linewidth]{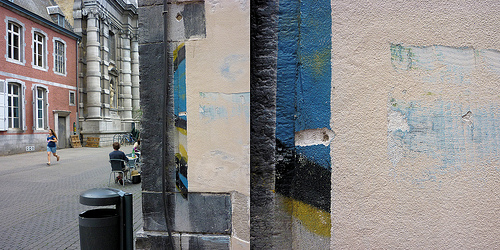}\\[1mm]
{\bf GT}: There is a wall of a building with several different colors painted on it and in the distance one person sitting down and another walking. \\[1mm]
{\bf HN}: [0.06] A woman with luggage walks along a street in front of a large advertisement. \\
}
&
\makecell[{{p{\linewidth}}}]{\includegraphics[width=\linewidth, height=\linewidth]{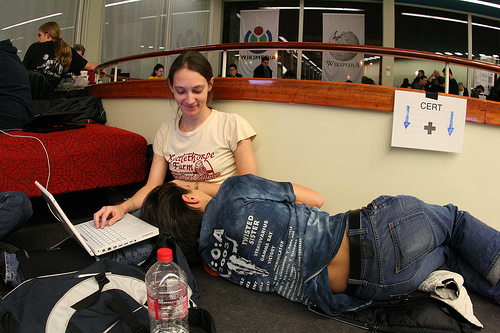}\\[1mm]
{\bf GT}: A man is laying on a girl's lap, she is looking at him, she also has her hand on her notebook computer. \\[1mm]
{\bf HN}: [0.18] A woman sits on a carpeted floor with a baby. \\[11mm]
}
&
\makecell[{{p{\linewidth}}}]{\includegraphics[width=\linewidth, height=\linewidth]{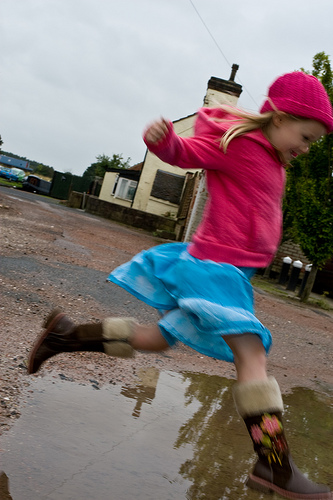}\\[1mm]
{\bf GT}: A young blond girl in a pink sweater, blue skirt, and brown boots is jumping over a puddle on a cloudy day. \\[1mm]
{\bf HN}: [0.51] An Indian woman is sitting on the ground, amongst drawings, rocks and shrubbery. \\[5mm]
}
&
\makecell[{{p{\linewidth}}}]{\includegraphics[width=\linewidth, height=\linewidth]{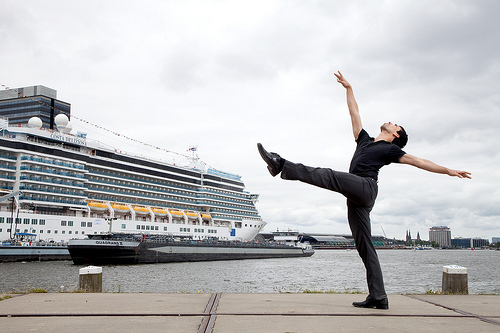}\\[1mm]
{\bf GT}: One man dressed in black is stretching his leg up in the air, behind him is a massive cruise ship in the water. \\[1mm]
{\bf HN}: [0.24] A topless man straps surfboards on top of his blue car. \\[5mm]
}
\\
\end{tabular}
\caption{Examples from the \fthk{} training set along with their hard negatives 
    in a random mini-batch according to the loss of a trained \VSEpp{} model.  
    The value in brackets is the cost of the hard negative and is in the range 
    $[0, 2]$ in our implementation.  HN is the hardest negative in a random 
    sample of size $128$.  GT is the positive caption used to compute the cost 
    of NG.}\label{fig:sample_hard_negatives}
\end{figure*}

Fig.~\ref{fig:sample_outputs} provides additional examples comparing the 
outputs of \VSEpp{} and \VSEz{}.

\begin{figure*}[t]
\centering
\scriptsize
\begin{tabular}[h]{>{\centering\arraybackslash}m{0.22\linewidth}
>{\centering\arraybackslash}m{0.22\linewidth}
>{\centering\arraybackslash}m{0.22\linewidth}
>{\centering\arraybackslash}m{0.22\linewidth}}
\makecell[{{p{\linewidth}}}]{\includegraphics[width=\linewidth, height=\linewidth]{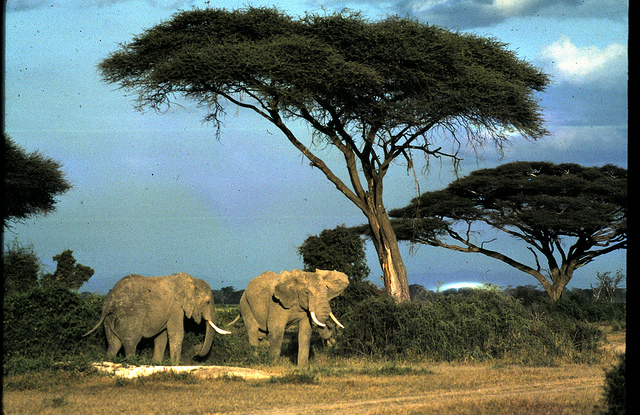}\\[1mm]
{\bf GT}: Two elephants are standing by the trees in the wild.  \\[1mm]
{\bf \VSEz{}}: [9] Three elephants kick up dust as they walk through the flat by the bushes. \\[1mm]
{\bf \VSEpp{}}: [1] A couple elephants walking by a tree after sunset.\\[2mm]
}
&
\makecell[{{p{\linewidth}}}]{\includegraphics[width=\linewidth, height=\linewidth]{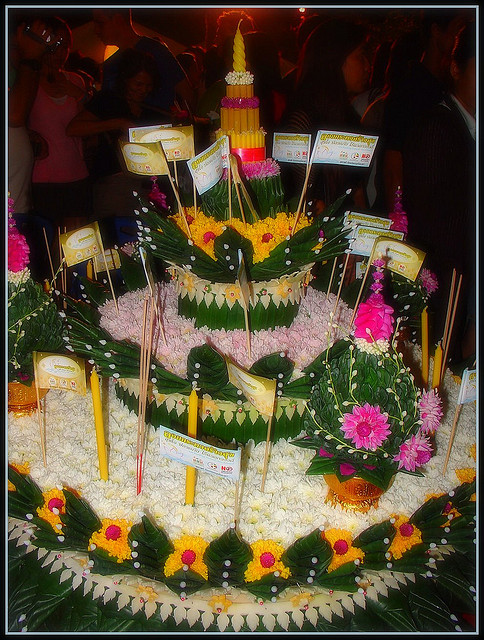}\\[1mm]
{\bf GT}: A large multi layered cake with candles sticking out of it.  \\[1mm]
{\bf \VSEz{}}: [1] A party decoration containing flowers, flags, and candles. \\[1mm]
{\bf \VSEpp{}}: [1] A party decoration containing flowers, flags, and candles. \\[4mm]
}
&
\makecell[{{p{\linewidth}}}]{\includegraphics[width=\linewidth, height=\linewidth]{}\\[1mm]
{\bf GT}: The man is walking down the street with no shirt on. \\[1mm]
{\bf \VSEz{}}: [24] A person standing on a skate board in an alley. \\[1mm]
{\bf \VSEpp{}}: [10] Two young men are skateboarding on the street. \\[4mm]
}
&
\makecell[{{p{\linewidth}}}]{\includegraphics[width=\linewidth, height=\linewidth]{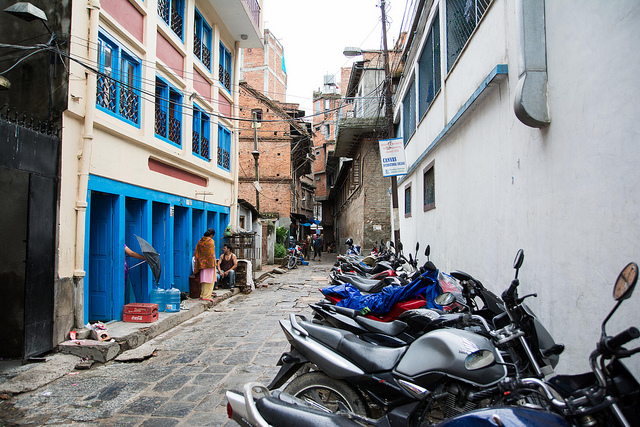}\\[1mm]
{\bf GT}: A row of motorcycles parked in front of a building. \\[1mm]
{\bf \VSEz{}}: [2] a parking area for motorcycles and bicycles along a street \\[1mm]
{\bf \VSEpp{}}: [1] A number of motorbikes parked on an alley \\[4mm]
}
\\
\makecell[{{p{\linewidth}}}]{\includegraphics[width=\linewidth, height=\linewidth]{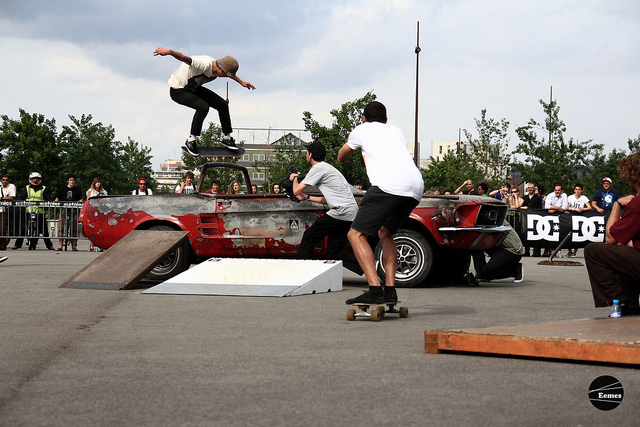}\\[1mm]
{\bf GT}: some skateboarders doing tricks and people watching them \\[1mm]
{\bf \VSEz{}}: [39] Young skateboarder displaying skills on sidewalk near field. \\[1mm]
{\bf \VSEpp{}}: [3] Two young men are outside skateboarding together. \\
}
&
\makecell[{{p{\linewidth}}}]{\includegraphics[width=\linewidth, height=\linewidth]{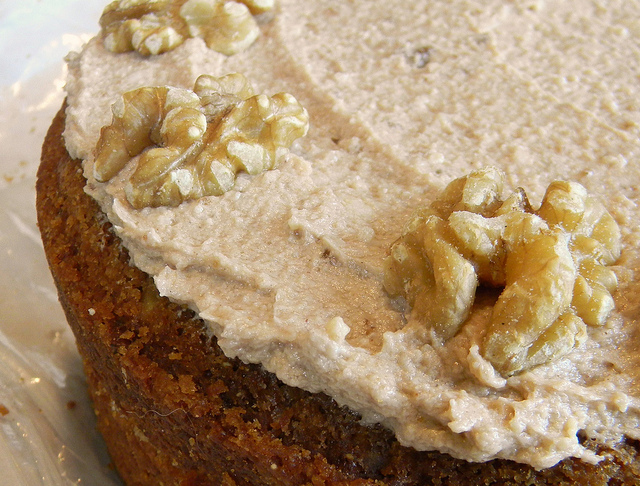}\\[1mm]
{\bf GT}: a brown cake with white icing and some walnut toppings \\[1mm]
{\bf \VSEz{}}: [6] A large slice of angel food cake sitting on top of a plate. \\[1mm]
{\bf \VSEpp{}}: [16] A baked loaf of bread is shown still in the pan. \\
}
&
\makecell[{{p{\linewidth}}}]{\includegraphics[width=\linewidth, height=\linewidth]{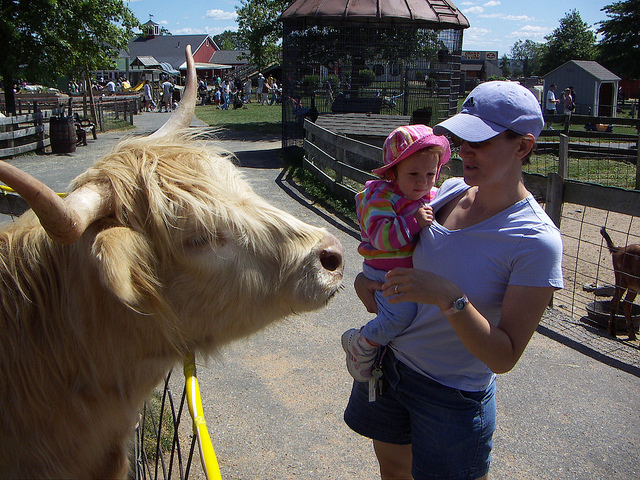}\\[1mm]
{\bf GT}: A woman holding a child and standing near a bull. \\[1mm]
{\bf \VSEz{}}: [1] A woman holding a child and standing near a bull. \\[1mm]
{\bf \VSEpp{}}: [1] A woman holding a child looking at a cow. \\
}
&
\makecell[{{p{\linewidth}}}]{\includegraphics[width=\linewidth, height=\linewidth]{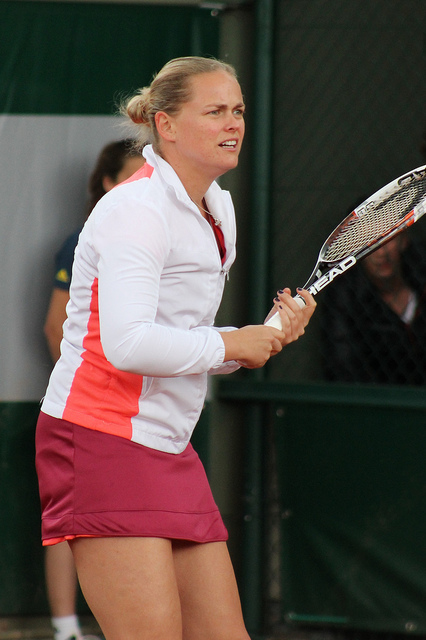}\\[1mm]
{\bf GT}: A woman in a short pink skirt holding a tennis racquet. \\[1mm]
{\bf \VSEz{}}: [6] A man playing tennis and holding back his racket to hit the ball. \\[1mm]
{\bf \VSEpp{}}: [1] A woman is standing while holding a tennis racket. \\
}
\end{tabular}
    \vspace{.3cm}
\caption{Examples of \coco{} test images and the top 1 retrieved captions for \VSEz{} and \VSEpp{} 
(ResNet)-finetune. The value in brackets is the rank of the highest ranked 
ground-truth caption. GT is a sample from the ground-truth captions.}
    \label{fig:sample_outputs}
\end{figure*}%

\subsection{Distribution of distances}

Fig.~\ref{fig:dists_f30k} compares the distribution of distances
for hard negatives using \MAX{} loss versus the \SUM{} loss after $10$ and $30$ 
epochs.
For the \MAX{} loss, the distribution of distances is negatively skewed after 
$10$ epochs, while the distribution using the \SUM{} loss is approximately 
symmetric. The reason is, the \MAX{} loss disproportionately focuses on the 
hardest negatives that results in a sharp drop in the number of hard negatives 
after cosine similarity $0.4$ while a large stack of hard negatives is created 
before $0.4$. As the training progresses to epoch $30$, the stack of hard 
negatives is dispersed but the drop to zero probability is still sharper than 
the \SUM{} loss.  At Epoch $30$, this margin is reduced to $0.3$.

\begin{figure}[t]
    \centering
    \includegraphics[width=.5\linewidth]{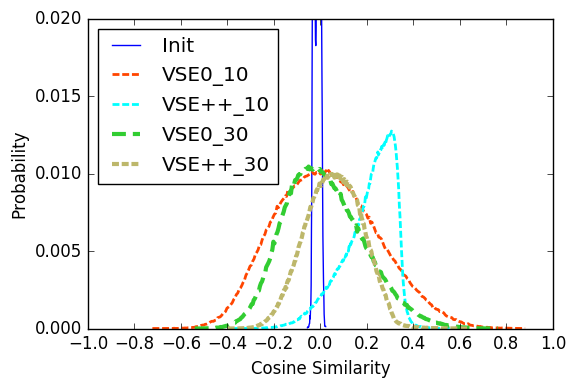}
        \caption{Distribution of distances for \fthk{} training set averaged 
        over $10$ fixed examples over the course of training. The distribution 
        for the \MAX{} loss and the \SUM{} loss is shown for snapshots of the 
        model at epoch $10$ and $30$. For comparison, the distribution for 
        a randomly initialized model is also depicted.}
        \label{fig:dists_f30k}
\end{figure}

\subsection{Effect of Negative Set Size on \MAX{} Loss}

\begin{figure*}[t]
    \centering
    \includegraphics[width=0.5\linewidth]{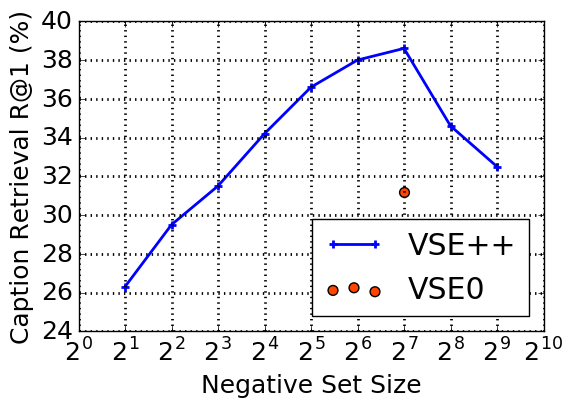}
    \caption{The effect of the negative set size on the R@$1$ performance. The 
    optimal R@$1$ is achieved near mini-batch size $128$.}
    \label{fig:neg_size}
\end{figure*}

Our proposed hard negatives are found in the mini-batch of triplets in each 
training step that is a uniformly sampled subset of the training set. Instead, 
one could find the hardest negatives within the training set which is 
computationally expensive but potentially performs better. We study the 
effective sample size over which we searched for negatives (while keeping the 
mini-batch size fixed at $128$). In the extreme case, when the negative set is 
the training set, we get the hardest negatives in the entire training set.  As 
discussed in Sec.~\ref{sec:hard_neg}, sampling a negative set smaller than the 
training set can potentially be more robust to label errors.

In \cref{fig:neg_size}, we show the effect of the negative sample set size on 
the performance of \MAX{} loss.  We compare the caption retrieval performance 
for different negative set sizes varied from $2$ to $512$.  In practice, for 
negative set sizes smaller than the mini-batch size, $128$, we randomly sample 
the negative set from the mini-batch, and when the mini-batch size is smaller 
than the negative set size, we randomly sample the mini-batch from the negative 
set.  We observe that on this dataset, the optimal negative set size is around 
$128$.  Interestingly, for negative sets as small as $2$, R@$1$ is slightly 
below \VSEz{}. We observe a drop in performance as the negative set size is 
increased to $512$. We hypothesize that this number is dataset dependant and 
for a small dataset like \fthk{}, the probability of sampling a noisy example 
increases significantly. Even though the performance drops with larger 
mini-batch sizes, it still performs better than the \SUM{} loss.

\section{Conclusion}
This chapter focused on learning visual-semantic embeddings for
cross-modal, image-caption retrieval.  Inspired by structured prediction, 
we proposed a new loss based on violations incurred by relatively hard 
negatives compared to current methods that used expected 
errors~\citep{kiros2014unifying,vendrov2015order}. We performed experiments  on 
the \coco{} and \fthk{} datasets and showed that our proposed loss 
significantly  improves performance on these datasets. We observed that the 
improved loss can better guide a more powerful image encoder, ResNet152, and 
also guide better when fine-tuning an image encoder.  At the time of 
publication, with all modifications, our \VSEpp{} model achieved 
state-of-the-art performance on the \coco{} dataset, and was slightly below the 
best model at the time on the \fthk{} dataset.  Our proposed loss function can 
be used to train more sophisticated models that have been using a similar 
ranking loss for training.

    \chapter{Gradient Clustering and A Study of Gradient Variance in Deep Learning}
\label{ch:gvar}

The second problem addressed in this thesis is training efficiency as a general 
challenge in training deep neural networks. Instead of focusing on a particular 
task-specific loss function, we seek improvements to optimization speed in 
gradient-based optimization methods for deep learning.
We revisit our hypothesis from the first problem that training data are not 
equally important. We hypothesize that training deep learning models on diverse 
and heterogeneous data distributions is slower than homogeneous data 
distributions with less diversity.

In this context, we study the distribution of gradients during training.  We 
introduce a method, Gradient Clustering, to minimize the variance of average 
mini-batch gradient with stratified sampling.  We prove that the variance of 
average mini-batch gradient is minimized if the elements are sampled from 
a weighted clustering in the gradient space.  We measure the gradient variance 
on common deep learning benchmarks and observe that, contrary to common 
assumptions, gradient variance increases at the beginning of the training, and 
smaller learning rates coincide with higher variance.
In addition, we introduce normalized gradient variance as a statistic that 
better correlates with the speed of convergence compared to gradient variance.

The content of this chapter have appeared in the following publication:

\begin{itemize}
    \item {\bf Faghri, Fartash} and Duvenaud, David and Fleet, David J.\ and Ba, 
        Jimmy,
        {\sl ``Gluster: Variance Reduced Mini-Batch SGD with Gradient Clustering"}, 
        Conference on Neural Information Processing Systems (NeurIPS), Workshop on 
        Beyond First Order Methods in ML, 2019.
\end{itemize}

Additionally, the results have been further developed in the following 
publications:

\begin{itemize}
    \item {\bf Faghri, Fartash}, Tabrizian, Iman and Markov, Ilia and Alistarh, Dan 
        and Roy, Dan M.\ and Ramezani-Kebrya, Ali,
        {\sl ``Adaptive Gradient Quantization for Data-Parallel SGD"}, Conference 
        on Neural Information Processing Systems (NeurIPS), 2020.
    \item Ramezani-Kebrya, Ali and {\bf Faghri, Fartash} and
        Markov, Ilya and Aksenov, Vitalii and Alistarh, Dan  and Roy, Dan M.\ 
        et al.,
        {\sl ``NUQSGD: Provably Communication-efficient Data-parallel SGD via 
        Nonuniform Quantization"}, Journal of Machine Learning Research 
        22.114 (2021): 1-43.
\end{itemize}

To ensure reproducibility, our code is publicly 
available~\footnote{\url{https://github.com/fartashf/gvar_code}}~\footnote{\url{https://github.com/fartashf/foptim}}~\footnote{\url{https://github.com/fartashf/nuqsgd}}.

\def\figdir{ch-gvar-tex/figures}

\section{Introduction}

Many machine learning tasks entail the minimization of the risk,
$\E_{\bm{x}}[\ell(\bm{x}; \param)]$,
where $\bm{x}$ is an i.i.d.\ sample from a data distribution, and $\ell$ is the 
per-example loss parametrized by $\param$.  In supervised learning, inputs and 
ground-truth labels comprise $\bm{x}$, and $\param$ is a vector of model 
parameters.
Empirical risk approximates the population risk by the risk of a sample set 
$\{\xxi\}_{i=1}^N$, the training set, as $L(\param)=\SI \ell(\xxi; \param)/N$.
Empirical risk is often minimized using gradient-based optimization 
(first-order methods).
For differentiable loss functions, the gradient of $\bm{x}$ is defined as 
$\dth{}\ell(\bm{x}; \param)$, i.e., the gradient of the loss with respect to 
the parameters evaluated at a point $\bm{x}$.
Popular in deep learning, Mini-batch Stochastic Gradient Descent (mini-batch 
SGD) iteratively takes small steps in the opposite direction of the average 
gradient of $B$ training samples.
The mini-batch size is a hyper-parameter that provides flexibility in trading 
per-step computation time for potentially fewer total steps. In GD the 
mini-batch is the entire training set while in SGD it is a single sample.

In general, using any unbiased stochastic estimate of the gradient and 
sufficiently small step sizes, SGD is guaranteed to converge to a minimum for 
various function classes~\citep{robbins1951stochastic}. Common convergence 
bounds in stochastic optimization improve with smaller gradient 
variance~\citep{bottou2018optimization}. Mini-batch SGD is said to converge 
faster because the variance of the gradient estimates is reduced by a rate 
linear in the mini-batch size. In practice however, we observe {\bf diminishing 
returns}\/ in speeding up the training of almost any deep model on deep 
learning benchmarks~\citep{shallue2018measuring}.
The transition point to diminishing returns is known to depend on the choice of 
data, model and optimization method.
\citet{zhang2019algorithmic} observed that the limitation of acceleration in 
large batches is reduced when momentum or preconditioning is used.  Other works 
suggest that very small mini-batch sizes can still converge fast enough using 
a collection of tricks~\citep{golmant2018, masters2018revisiting, lin2018}. One 
hypothesis is that the stochasticity due to small mini-batches improves 
generalization by finding ``flat minima'' and avoiding ``sharp 
minima''~\citep{goodfellow2015qualitatively,keskar2017large}.  But this 
hypothesis does not explain why diminishing returns also happens in the 
training loss.

Motivated by the diminishing returns phenomena, we study and model the 
distribution of the gradients. Given a data distribution characterized by its 
probability density function, $\Proba(\bm{x})$, the gradient distribution is 
defined as a transformation of the data distribution by the gradient of the 
loss function, $\dth{}\ell(\bm{x}; \param)$. The transformation is a function 
of the model parameters that can be deterministic, e.g., with a linear model or 
stochastic, for example, when using random data augmentation or dropout.  As 
such, the gradient distribution varies across model architectures and evolves 
during the training.

An unbiased gradient estimator is an estimate of the mean of the gradient 
distribution that is commonly evaluated by its variance.
In the noisy gradient view, the average mini-batch gradient (or the mini-batch 
gradient) is an unbiased estimator of the expected gradient where increasing 
the mini-batch size reduces the variance of this estimator.
We propose a distributional view and argue that knowledge of the gradient 
distribution can be exploited to analyze and improve optimization speed as well 
as generalization to test data. A mean-aware optimization method is at best as 
strong as a distributional-aware optimization method.
In our distributional view, the mini-batch gradient is only an estimate of the 
mean of the gradient distribution.

{\bf Questions:} We identify the following questions about the gradient 
distribution.
\begin{itemize}
    \item[] {\bf Structure of gradient distribution.}\/ Is there structure in 
        the distribution over gradients of standard learning problems?
    \item[] {\bf Impact of gradient distribution on optimization.}\/ What 
        characteristics of the gradient distribution correlate with the 
        convergence speed and the minimum training/test loss reached?
    \item[] {\bf Impact of optimization on gradient distribution.}\/ To what 
        extent do the following factors affect the gradient distribution:  data 
        distribution, learning rate, model architecture, mini-batch size, 
        optimization method, and the distance to local optima?
\end{itemize}
As we review in \cref{sec:gvar:related}, recent work have begun to investigate 
aforementioned questions but we are far from a comprehensive understanding.

\begin{figure}[t]
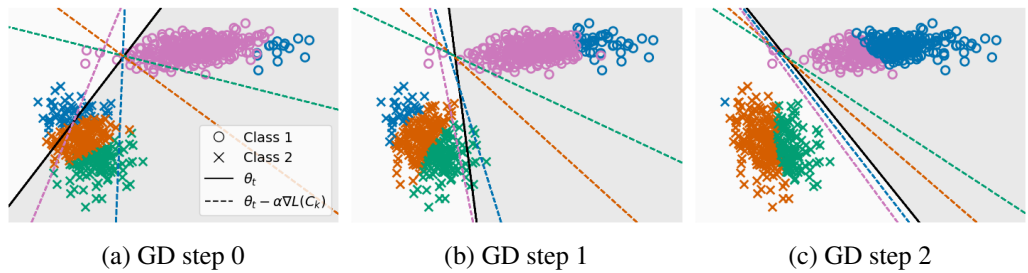

    \centering
    \begin{subfigure}[b]{\threecolfigwidth}
        \includegraphics[width=\textwidth]{\figdir/figs_2dvis/logreg_iter0}
        \caption{GD step 0}
    \end{subfigure}
    \begin{subfigure}[b]{\threecolfigwidth}
        \includegraphics[width=\textwidth]{\figdir/figs_2dvis/logreg_iter1}
        \caption{GD step 1}
    \end{subfigure}
    \begin{subfigure}[b]{\threecolfigwidth}
        \includegraphics[width=\textwidth]{\figdir/figs_2dvis/logreg_iter2}
        \caption{GD step 2}
    \end{subfigure}
    \caption{{\bf Example of clusters found using Gradient Clustering.}\/ 
    A linear classifier visualized during training with gradient descent on 
    2 linearly separable classes (o, x).  Gradients are assigned to $4$ 
    clusters (different colors) using Gradient Clustering (\Gluster).  Black 
    line depicts current decision boundary.  Colored dashed lines depict 
    decision boundaries predicted from current boundary and each of the $4$ 
    individual clusters.  Here, blue points belong to both classes; they have 
    similar gradients, but are far apart in input space. By exploiting the 
    knowledge of \Gluster we can get low variance average mini-batch gradients.
    }
    \label{fig:2dvis_single}
\end{figure}

{\bf Contributions:}
\begin{itemize}
    \item[] {\bf Exploiting clustered distributions.}\/ We consider gradient 
        distributions with distinct modes, i.e., the gradients can be 
        clustered.
        We prove that the variance of average mini-batch gradient is minimized 
        if the elements are sampled from a weighted clustering in gradient 
        space (\cref{sec:gvar}).
    \item[] {\bf Efficient clustering to minimize variance.}\/ We propose 
        Gradient Clustering (\Gluster) as a computationally efficient method 
        for clustering in the gradient space (\cref{sec:gluster}).
        \cref{fig:2dvis_single} shows an example of clusters found by \Gluster.
    \item[] {\bf Relation between gradient variance and optimization.}\/ We 
        study the gradient variance on common deep learning benchmarks (MNIST, 
        CIFAR-10, and ImageNet) as well as Random Features models recently 
        studied in deep learning theory (\cref{sec:exp}). We observe that 
        gradient variance increases for most of the training, and smaller 
        learning rates coincide with higher variance.
    \item[] {\bf An alternative statistic.}\/ We introduce normalized gradient 
        variance as a statistic that better correlates with the speed of 
        convergence compared to gradient variance (\cref{sec:exp}).
\end{itemize}

We emphasize that some of our contributions are primarily empirical yet 
unexpected. We believe our results provide an opportunity for future 
theoretical and empirical work.

\section{Related Work}
\label{sec:gvar:related}

{\bf Modeling gradient distribution.}\/
Despite various assumptions on the mini-batch gradient variance, only recently 
these assumptions have been scrutinized for deep learning models.
It is common to assume bounded variance in convergence 
analyses~\citep{bottou2018optimization}. Works on variance reduction propose 
alternative estimates of the gradient mean with low 
variance~\citep{leroux2012stochastic, johnson2013accelerating} but they do not 
plot the variance which is the actual quantity they seek to reduce.  Their 
ineffectiveness in deep learning has been observed but still requires 
explanation~\citep{defazio2019ineffectiveness}.
There are works that present gradient variance plots~\citep{mohamed2020monte, 
wen2019interplay} but they are usually for a single gradient coordinate and 
synthetic problems.
The Central limit theorem is also used to argue that the distribution of the 
mini-batch gradient is a Gaussian~\citep{zhu2018anisotropic}, which has been 
challenged only recently~\citep{simsekli2019tail,wu2020noisy}. The observation 
that the gradient noise is heavy tailed has been used to justify the 
superiority of the Adam optimizer in training attention 
models~\citep{zhang2019adam}. There also exists a link between the 
Fisher~\citep{amari1998natural}, Neural Tangent Kernel~\citep{jacot2018neural}, 
and the gradient covariance matrix~\citep{martens2014new, 
kunstner2019limitations, thomas2020interplay}. As such, any analysis of 
one~\citep[e.g.,][]{karakida2019pathological} could potentially be used to 
understand others.

{\bf Variance reduction as part of optimization methods for deep learning.}
Variance reduction is an important technique in gradient estimation at the core 
of many machine learning problems~\citep{mohamed2020monte}. A variance reduced 
gradient estimator can then be used with a gradient-based optimization method.  
In such approaches, the performance of the optimizer depends on the gradient 
estimator but there is no feedback from the optimizer to improve the gradient 
estimator.  Our work is related to variance reduction methods that directly 
modify the optimization method.
\citet{le2011improving} considered the difference between the covariance matrix 
of the gradients and the Fisher matrix and proposed incorporating the 
covariance matrix as a measure of model uncertainty in optimization. It has 
also been suggested that the division by the second moments of the gradient in 
Adam can be interpreted as variance adaptation~\citep{kunstner2019limitations}.
Although we do not use Gradient Clustering for optimization, the formulation 
can be interpreted as a unifying approach that defines variance reduction as an 
objective.

{\bf Importance Sampling for Optimization.}
Our work is also closely related to importance sampling for stochastic 
optimization where data points are sampled according to a measure of importance 
such as the loss or the norm of the gradient~\citep{zhao2015stochastic, 
katharopoulos2017biased, csiba2018importance, johnson2018training, 
alain2015variance, needell2014stochastic}.
There are also myriad papers on ad-hoc sampling and re-weighting methods for 
reducing dataset imbalance and increasing data 
diversity~\citep{bengio2008adaptive, jiang2017mentornet, vodrahalli2018, 
jiang2019accelerating}. Based on empirical results, \citet{wu2020curricula} 
suggests that several sampling and ordering methods have only marginal benefits 
on standard datasets when training time is long enough. Our method can be 
viewed as an importance sampling method where the relative size of the cluster 
denotes the importance of its data points. Compared with using only gradient 
norm or the loss for importance sampling, we exploit the entire gradient vector 
for each data point that makes the method significantly more powerful.

{\bf Clustering gradients.}\/
Methods related to gradient clustering have been proposed in low-variance 
gradient estimation~\citep{hofmann2015variance, zhao14accelerating, 
chen2019fast} supported by promising theory.  However, these methods have 
either limited their experiments to linear models or treated a deep model as 
a linear one.  Our proposed \GC method performs efficient clustering in the 
gradient space with very few assumptions.
\GC is also related to works on model visualization where the entire training 
set is used to understand the behaviour of a model~\citep{raghu2017svcca}.

\section{Mini-batch Gradient with Stratified Sampling}\label{sec:gvar}

An important factor affecting optimization trade-offs is the diversity of 
training data. SGD entails a sampling process, often uniformly sampling from 
the training set.  However, as illustrated in the following example, uniform 
sampling is not always ideal. Suppose there are duplicate data points in 
a training set. We can save computation time by removing all but one of the 
duplicates.  To get the same gradient mean in expectation, it is sufficient to 
rescale the gradient of the remaining sample in proportion to the number of 
duplicates. In this example, mini-batch SGD will be inefficient because 
duplicates increase the variance of the average gradient mean.

Suppose we are given i.i.d.\ training data, $\{\xxi\}_{i=1}^N$, and 
a partitioning of their gradients, $\gi=\dth{}\ell(\xxi; \param)$, into $K$ 
clusters, where $\Nk$ is the size of the $k$-th cluster. We can estimate the 
gradient mean on the training set, $\bm{g}=\dth{}L(\param)$, by averaging $K$ 
gradients, one from each of $K$ clusters, uniformly sampled:
\begin{align}
    \gC(\Aa) &= \frac{1}{N} \SK \Nk\, \gpk\,,\quad \gpk \sim \U(S_{k})\,,
    \label{eq:strat_sampl}
\end{align}
where $\gpk$ is a uniformly sampled gradient from the $k$-th cluster 
$S_{k}=\{\gi|\Ai=k\}$, and $\Aa\in\{1,\ldots,K\}^N$ where $\Ai$ is the index of 
the cluster to which $i$-th data point is assigned, so $\Nk=\SI \I(\Ai=k)$.
Each sample is treated as a representative of its cluster and weighted by the 
size of that cluster.  In the limit of $K=N$, we recover the batch gradient 
mean used in GD and for $K=1$ we recover the single-sample stochastic gradient 
in SGD.

\begin{prop}\label{thm:gvar}
    {\bf (Bias/Variance of Mini-batch Gradient with Stratified Sampling).}\/
    For any partitioning of data, the estimator of the gradient mean using 
    stratified sampling (\cref{eq:strat_sampl}) is unbiased ($\E[\gC]=g$) and
    $\V[\gC] = N^{-2} \SK \Nk^2 \V[\gpk]$,
    where $\V[\cdot]$ is defined as the trace of the covariance matrix.
    (Proof in \cref{sec:gvar_proof})
\end{prop}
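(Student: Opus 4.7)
The plan is to verify the two claims (unbiasedness and variance formula) by direct computation, relying on linearity of expectation and the independence of sampling across clusters. The argument is short and essentially mechanical; the only conceptual point worth stating explicitly is that the $K$ random draws $\gpk$ for $k=1,\ldots,K$ are mutually independent, since each cluster is sampled from separately.

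First I would establish unbiasedness. Since $\gpk \sim \U(S_k)$, we have
\begin{align*}
\E[\gpk] \;=\; \frac{1}{\Nk} \sum_{i: \Ai=k} \gi.
\end{align*}
Linearity of expectation then gives
\begin{align*}
\E[\gC(\Aa)] \;=\; \frac{1}{N} \SK \Nk \cdot \E[\gpk] \;=\; \frac{1}{N} \SK \sum_{i:\Ai=k} \gi \;=\; \frac{1}{N} \SI \gi \;=\; g,
\end{align*}
where the second-to-last equality uses that the clusters partition $\{1,\ldots,N\}$.

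Next I would compute the variance. Because the draws $\{\gpk\}_{k=1}^K$ are independent across clusters (each $\gpk$ depends only on the uniform sample inside $S_k$), the covariance cross-terms vanish. Using $\V[\cdot]=\tr(\mathrm{Cov}[\cdot])$ together with the standard scaling $\V[c X] = c^2 \V[X]$ for a constant $c$ applied componentwise,
\begin{align*}
\V[\gC(\Aa)] \;=\; \V\!\left[\frac{1}{N}\SK \Nk \gpk\right] \;=\; \frac{1}{N^2}\SK \Nk^2\, \V[\gpk],
\end{align*}
which is the claimed identity.

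There is no real obstacle here; the only subtlety is making the independence across clusters explicit, since without it the cross-covariance terms would not drop out. I would therefore state the independence assumption clearly at the outset of the proof (as part of the sampling model in \eqref{eq:strat_sampl}) and then the two lines of computation above complete the argument.
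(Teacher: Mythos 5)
Your proof is correct and follows essentially the same route as the paper's: compute $\E[\gC]$ by linearity, then expand $\V[\gC]$ and observe that the cross-covariance terms between clusters vanish. The one place where you diverge, and where I think you are actually clearer, is the justification for why the cross-terms drop out. You invoke independence of the within-cluster uniform draws, conditional on the fixed training set and partition; the paper instead says ``Since we assume the training set is sampled i.i.d., the covariance between gradients of any two samples is zero.'' The paper's phrasing mixes randomness over the training data with randomness over the mini-batch sampling, which is a slightly different probability model and is subtler than it looks (cluster assignments themselves depend on the realized data, so marginalizing over the data is not obviously innocuous). Your version — condition on the data and partition, note the $K$ draws are mutually independent by construction of the sampler — is the more direct and self-contained argument, and is the one that actually matches the model in \cref{eq:strat_sampl}. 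Good instinct to flag it explicitly.
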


\begin{rem}
    Under a stratified sampling scheme, in a dataset with duplicate samples, 
    the gradients of duplicates do not contribute to the variance if assigned 
    to the same partition with no other data points.
\end{rem}

\subsection{Weighted Gradient Clustering}
\label{sec:cluster}

Suppose, for a given number of clusters, $K$, we want to find the 
optimal partitioning, i.e., one that minimizes the variance of the gradient 
mean estimator, $\gC$.  For $d$-dimensional gradient vectors, minimizing the 
variance in \cref{thm:gvar}, is equivalent to finding a weighted clustering of 
the gradients of data points,
\begin{align}
    \min_\Aa \V[\gC(\Ai)]
    &= \min_\Aa \SK \Nk^2 \V[\gpk]
    = \min_{\bm{C}, \Aa} \SK \SI \Nk\, \|\Ck-\gi\|^2 \,\I(\Ai=k)\,,
    \label{eq:obj}
\end{align}
where a cluster center, $\Ck\in\R^d$, is the average of the gradients in the 
$k$-th cluster, and ${\V[\gpk]=\frac{1}{\Nk} \SI \|\Ck-\gi\|^2 \,\I(\Ai=k)}$.  
If we did not have the factor $\Nk$, this objective would be equivalent to the 
K-Means objective.  The additional $\Nk$ factors encourage larger clusters to 
have lower variance, with smaller clusters comprising scattered data points.  

If we could store the gradients for the entire training set, the clustering 
could be performed iteratively as a form of block coordinate descent, 
alternating between the following {\em Assignment}\ and {\em Update}\ steps,
i.e., computing the cluster assignments and then the cluster centers:

\noindent\begin{tabular}{>{\centering\arraybackslash} m{.45\textwidth}
    >{\centering\arraybackslash} m{.45\textwidth}}
    \setlength{\tabcolsep}{10pt}
    \centering
    \begin{equation}
    {\text{\bA:}}\quad \Ai = \argmin_k \Nk\, \|\Ck-\gi\|^2
    \label{Estep}
    \end{equation}
    &
    \begin{equation}
    {\text{\bU:}}\quad \Ck = \frac{1}{\Nk} \SI \gi\, \I(\Ai=k) %
    \label{Mstep}
    \end{equation}
\end{tabular}

The \bA step is still too complex given the $\Nk$ multiplier. As such, we first 
solve it for fixed cluster sizes then update $\Nk$ before another \bU step.
These updates are similar to Lloyd's algorithm for K-Means, but 
with the $\Nk$ multipliers, and to Expectation-Maximization for Gaussian
Mixture Models, but here we use hard assignments.  In contrast, the additional 
$\Nk$ multiplier makes the objective more complex in that performing \bAU 
updates does not always guarantee a decrease in the clustering objective.

\subsection{Efficient Gradient Clustering (\Gluster)}\label{sec:gluster}

Performing exact \bAU updates (\cref{Estep,Mstep}) is computationally expensive 
as they require the gradient of every data point. Deep learning libraries 
usually provide efficient methods that compute average mini-batch gradients 
without ever computing full individual gradients.
We introduce Gradient Clustering (\Gluster) for performing efficient \bAU 
updates by breaking them into per-layer operations and introducing a low-rank 
approximation to cluster centers.

For any feed-forward network, we can decompose terms in \bAU updates into 
independent per-layer operations as shown in \cref{alg:full}. The main 
operations are computing $\|\bm{C}_{kl} - \bm{g}_{il}\|^2$ and cluster updates 
$\bm{C}_{\Ai,l} \mathrel{+}= \bm{g}_{il} / N_{\Ai}$ per layer $l$; henceforth, 
we drop the layer index for simplicity.

\begin{figure}[t]
\begin{minipage}[t]{0.33\textwidth}
    \centering
    \begin{algorithm}[H]
        \centering
        \caption{\bA step using \cref{eq:fc:Estep}}
        \begin{algorithmic}
            \FOR{$i=1$ {\bf to} $N$}
              \FOR{$k=1$ {\bf to} $K$} %
                \FOR{$l=1$ {\bf to} $L$} %
                  \STATE $D_{kl} = \| \bm{C}_{kl} - \bm{g}_{il}\|^2$
                \ENDFOR
              \ENDFOR
              \STATE $\bm{S} = \sum_l D_{\cdot l}$
              \STATE $\Ai = \argmin_k \Nk \bm{S}$
            \ENDFOR
        \end{algorithmic}
        \label{alg:A_step}
    \end{algorithm}
\end{minipage}
\hfill
\begin{minipage}[t]{0.30\textwidth}
    \centering
    \begin{algorithm}[H]
        \centering
        \caption{$\Nk$ update}
        \begin{algorithmic}
            \STATE $\Nk = 0,\quad \forall k={1,\cdots,K}$
            \FOR{$i=1$ {\bf to} $N$}
              \STATE $N_{\Ai} \mathrel{+}= 1$
            \ENDFOR
        \end{algorithmic}
        \label{alg:A_step}
    \end{algorithm}
\end{minipage}
\hfill
\begin{minipage}[t]{0.33\textwidth}
    \centering
    \begin{algorithm}[H]
        \centering
        \caption{\bU step using \cref{eq:fc:Mstep}}
        \begin{algorithmic}
            \STATE $\bm{C}_k = 0,\quad \forall k={1,\cdots,K}$
            \FOR{$i=1$ {\bf to} $N$}
              \FOR{$l=1$ {\bf to} $L$} %
                \STATE $\bm{C}_{\Ai,l} \mathrel{+}= \bm{g}_{il} / N_{\Ai}$
              \ENDFOR
            \ENDFOR
        \end{algorithmic}
        \label{alg:U_step}
    \end{algorithm}
\end{minipage}
    \caption{\textbf{Gradient Clustering Algorithm steps} applied on the 
    gradients of a deep neural network with $L$ layers.  $\bm{g}_{il}$ denotes 
    the gradients of the $i$-th example w.r.t.\ the parameters of the $l$-th 
    layer, $\bm{C}_{kl}$ denotes $k$-th cluster center for the $l$-th layer, 
    $a_i$ denotes the assignment index for the $i$-th data point and $N_k$ 
    denotes the size of the $k$-th cluster.}
    \label{alg:full}
\end{figure}

For a single fully-connected layer, we denote the layer weights by $\lparam\in 
\R^{\Di\times \Do}$, where $\Di$ and $\Do$ denote the input and output 
dimensions for the layer.  We denote the gradient with respect to $\lparam$ for 
the training set by $\bm{g}=\AAA \DDD^\top$, where $\AAA\in \R^{\Di\times N}$ 
comprises the input activations to the layer, and $\DDD\in \R^{\Do\times N}$ 
represents the gradients with respect to the layer outputs.
The coordinates of cluster centers corresponding to this layer are denoted by 
$C\in \R^{K\times \Di\times \Do}$.  We index the clusters using $k$ and the 
data by $i$. The $k$-th cluster center is approximated as
${\Ck=\ck \dk^\top}$, using vectors ${\ck\in \R^\Di}$ and ${\dk\in\R^\Do}$.

In the \bA step we need to compute ${\|\Ck-\gb\|_F^2}$ as part of the 
assignment cost, where $\|\cdot\|_F$ is the Frobenius-norm. We expand this term 
into three inner-products, and compute them separately. In particular, the term 
$\vvv{\Ck}\odot\vvv{\gb}$ can be written as,
\begin{align}
    \vvv{\Ck}\odot \vvv{\Ab \Db^\top}
    = (\Ab \odot \ck)(\Db\odot \dk)\,,
    \label{eq:fc:Estep}
\end{align}
where $\odot$ denotes inner product, and the RHS is the product of two scalars.  
Similarly, we compute the other two terms in the expansion of the assignment 
cost, i.e., $\vvv{\Ck}\odot\vvv{\Ck}$ and $\vvv{\gb}\odot\vvv{\gb}$ 
(\citet{goodfellow2015efficient} proposed a similar idea to compute the 
gradient norm).

The \bU step in \cref{Mstep} is written as,
$\ck \dk^\top
= \Nk^{-1} \SB \Ab \Db^\top \I(\Ai=k)$.
This equation might have no exact solution for $\ck$ and $\dk$ because the sum 
of rank-$1$ matrices is not necessarily rank-$1$.  One approximation is the 
min-Frobenius-norm solution to $\ck, \dk$ using truncated SVD, where we use 
left and right singular-vectors corresponding to the largest singular-value of 
the RHS.
However, the following updates are exact if
activations and gradients of the outputs are uncorrelated, i.e., $\E_i[\Ab 
\Db]=\E_i[\Ab]E_i[\Db]$ (similar to assumptions in 
K-FAC~\citep{martens2015kfac}),
\begin{align}
    \ck = \frac{1}{\Nk} \SB \Ab\I(\Ai=k)
    \qquad\qquad
    \dk = \frac{1}{\Nk} \SB \Db\I(\Ai=k)\,.
    \label{eq:fc:Mstep}
\end{align}

In \cref{sec:conv}, we describe similar update rules for convolutional layers 
and in \cref{sec:comp}, we provide complexity analysis of \Gluster.
We can make the cost of \Gluster negligible by making sparse incremental 
updates to cluster centers using mini-batch updates. The assignment step can 
also be made more efficient by processing only a portion of data as is common 
for training on large datasets. The rank-$1$ approximation can be extended to 
higher rank approximations with multiple independent cluster centers though 
with challenges in the implementation.

\section{Experiments}\label{sec:exp}

In this section, we evaluate the accuracy of estimators of the gradient mean.  
This is a surrogate task for evaluating the performance of a model of the 
gradient distribution. We encourage the reader to predict the behaviour of 
gradient estimators before advancing in this section. In particular, does the 
reader expect the gradient variance to increase or decrease during the 
training? Surprisingly, we find that the gradient variance often increases in 
the majority of training.

We compare our proposed \GC estimator to average mini-batch Stochastic Gradient 
(\SG), and \SG with double the mini-batch size (\SGdB).
\SGdB is an important baseline for two reasons. First, it is a competitive 
baseline that always reduces the variance by a factor of $2$ and requires at 
most twice the memory size and twice the run-time per 
mini-batch~\citep{shallue2018measuring}.
Second, the extra overhead of \Gluster is approximately the same as keeping an 
extra mini-batch in the memory when the number of clusters is equal to the 
mini-batch size.
We also include Stochastic Variance Reduced Gradient 
(SVRG)~\citep{johnson2013accelerating} as a method with the sole objective of 
estimating gradient mean with low variance.

We compare methods on \textbf{a single trajectory of mini-batch SGD} to 
decouple the optimization from gradient estimation.  That is, we do not train 
with any of the estimators (hence no `D' in \SG and \SGdB).  This allows us to 
continue analyzing a method even after it fails in reducing the variance.  For 
training results using \SG, \SGdB and, SVRG, we refer the reader to 
\citet{shallue2018measuring, defazio2019ineffectiveness}.  For training with 
\GC, it suffices to say that behaviours observed in this section are directly 
related to the performance of \GC used for optimization.

As all estimators in this work are unbiased, the estimator with lowest variance 
is better estimating the gradient mean.
We define {\it Average Variance}\/ (variance in short) as the average over all 
coordinates of the variance of the gradient mean estimate for a fixed model 
snapshot.  Average variance is the normalized trace of the covariance matrix 
and of particular interest in random matrix theory~\citep{tao2012topics}.

We also measure {\it Normalized Variance}, defined as $\V[g]/E[g^2]$ where the 
variance of a $1$-dimensional random variable is divided by its second 
non-central moment.
In signal processing, the inverse of this quantity is the signal to noise ratio 
(SNR). If SNR is less than one (normalized variance larger than one), the power 
of the noise is greater than the signal.
Normalized variance also appears in standard convergence analysis of stochastic 
gradient descent~\citep{friedlander2012hybrid}. As we show in 
\cref{sec:normalized_var_analysis}, it can be shown that for Lipschitz 
continuous functions, variance reduction is only useful if the normalized 
variance is larger than constant $1$. As such, it better correlates with the 
convergence speed compared with variance.

Additional details of the experimental setup can be found in \cref{app:exp}.

\subsection{MNIST: Low Variance, CIFAR-10: Noisy Estimates, ImageNet: No 
Structure}\label{sec:exp_image}

\begin{figure}[t]
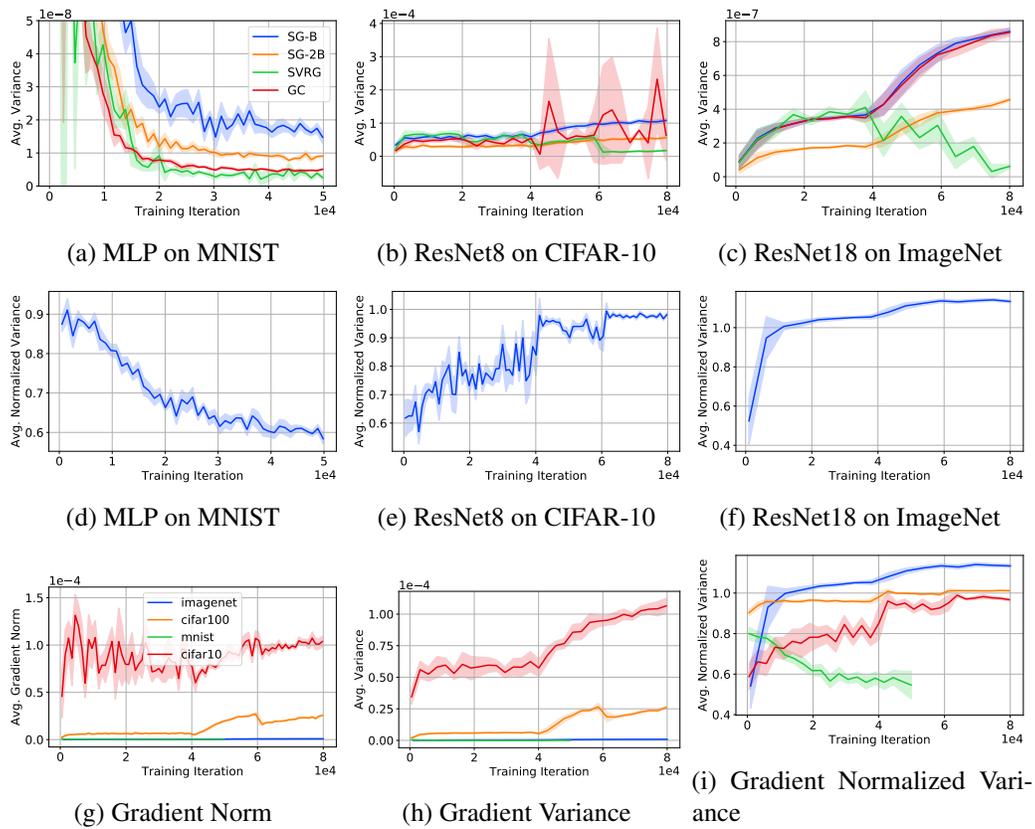

    \centering
    \begin{subfigure}[b]{\threecolfigwidth}
        \includegraphics[width=\textwidth]{\figdir/figs_neurips2020/mnist_mlp/est_var}
        \caption{MLP on MNIST}
        \label{fig:mnist_var}
    \end{subfigure}
    \begin{subfigure}[b]{\threecolfigwidth}
        \includegraphics[width=\textwidth]{\figdir/figs_neurips2020/cifar10_resnet8_nodataprod/est_var_nolegend}
        \caption{ResNet8 on CIFAR-10}
        \label{fig:cifar10_var}
    \end{subfigure}
    \begin{subfigure}[b]{\threecolfigwidth}
        \includegraphics[width=\textwidth]{\figdir/figs_neurips2020/imagenet_kahan/est_var_nolegend}
        \caption{ResNet18 on ImageNet}
        \label{fig:imagenet_var}
    \end{subfigure}\\
    \begin{subfigure}[b]{\threecolfigwidth}
        \includegraphics[width=\textwidth]{\figdir/figs_neurips2020/mnist_mlp_nvar/est_nvar_nolegend}
        \caption{MLP on MNIST}
        \label{fig:mnist_nvar}
    \end{subfigure}
    \begin{subfigure}[b]{\threecolfigwidth}
        \includegraphics[width=\textwidth]{\figdir/figs_neurips2020/cifar10_resnet8_nodataprod_nvar/est_nvar_nolegend}
        \caption{ResNet8 on CIFAR-10}
        \label{fig:cifar10_nvar}
    \end{subfigure}
    \begin{subfigure}[b]{\threecolfigwidth}
        \includegraphics[width=\textwidth]{\figdir/figs_neurips2020/imagenet_kahan_nvar/est_nvar_nolegend}
        \caption{ResNet18 on ImageNet}
        \label{fig:imagenet_nvar}
    \end{subfigure}\\
    \begin{subfigure}[b]{\threecolfigwidth}
        \includegraphics[width=\textwidth]{\figdir/mnist_cifar10_cifar100_imagenet/est_gnorm}
        \caption{Gradient Norm}
        \label{fig:four_gnorm}
    \end{subfigure}
    \begin{subfigure}[b]{\threecolfigwidth}
        \includegraphics[width=\textwidth]{\figdir/mnist_cifar10_cifar100_imagenet/est_var_nolegend}
        \caption{Gradient Variance}
        \label{fig:four_var}
    \end{subfigure}
    \begin{subfigure}[b]{\threecolfigwidth}
        \includegraphics[width=\textwidth]{\figdir/mnist_cifar10_cifar100_imagenet/est_nvar_nolegend}
        \caption{Gradient Normalized Variance}
        \label{fig:four_nvar}
    \end{subfigure}
    \caption{{\bf Image classification models.}\/ Variance (top) and normalized 
    variance plots (middle). Bottom plots compare gradient statistics for SGD 
    across datasets.
    We observe:
    normalized variance correlates with optimization difficulty,
    variance is decreasing on MNIST but increasing on CIFAR-10 and ImageNet,
    and variance fluctuates with \GC on CIFAR-10.
    }
    \label{fig:image}
\end{figure}

In this section, we study the evolution of gradient variance during training of an 
MLP on MNIST~\citep{lecun1998gradient}, ResNet8~\citep{he2016deep} on 
CIFAR-10~\citep{krizhevsky2009learning}, and ResNet18 on 
ImageNet~\citep{deng2009imagenet}. Curves shown are from a single run and 
statistics are smoothed out over a rolling window.  The standard deviation 
within the window is shown as a shaded area.

\textbf{Normalized variance correlates with the time required to improve 
accuracy.}
In \cref{fig:mnist_var,fig:cifar10_var,fig:imagenet_var}, the variance of \SGdB 
is always half the variance of {\SG}. A drawback of the variance is that it is 
not comparable across different problems.  For example, on CIFAR-10 the 
variance of all methods reaches $10^{-4}$ while on ImageNet where usually 
$10\times$ more iterations are needed, the variance is below $10^{-6}$. In 
contrast, as we show by convergence analysis in 
\cref{sec:normalized_var_analysis}, for Lipschitz continuous functions, 
variance reduction is only useful if the normalized variance is larger than 
constant $1$. In \cref{fig:mnist_nvar,fig:cifar10_nvar,fig:imagenet_nvar}, 
normalized variance better correlates with the convergence speed.  Normalized 
variance on both MNIST and CIFAR-10 is always below $1$ while on ImageNet it 
quickly goes above $1$ (noise stronger than gradient).
Notice that the denominator in the normalized variance is shared between all 
methods on the same trajectory of mini-batch SGD\@. As such, the normalized 
variance retains the relation of curves and is a scaled version of variance 
where the scaling varies during training as the norm of the gradient changes.  
For clarity, we only show the curve for {\SG}.

{\bf How does the difficulty of optimization change during training?}\/
The variance on MNIST for all methods is constantly decreasing 
(\cref{fig:mnist_var}), i.e., the strength of noise decreases as we get closer 
to a local optima. These plots suggest that training an MLP on MNIST satisfies 
the Strong Growth Condition (SGC)~\citep{schmidt2013fast} as the variance is 
numerically zero (below $10^{-8}$).
Normalized variance (\cref{fig:mnist_nvar}) decreases over time and is well 
below $1$ (gradient mean has larger magnitude than the variance). SVRG performs 
particularly well by the end of the training because the training loss has 
converged to near zero (cross-entropy less than $0.005$).  Promising published 
results with SVRG are usually on datasets similar to MNIST where the loss 
reaches relatively small values.
In contrast, on both CIFAR-10 (\cref{fig:cifar10_var,fig:cifar10_nvar}) and 
ImageNet (\cref{fig:imagenet_var,fig:imagenet_nvar}), the variance and 
normalized variance of all methods increase from the beginning for almost the 
entire training and especially after the learning rate drops.  This means 
gradient variance depends on the distance to local optima.  We hypothesize that 
the gradient of each training point becomes more unique as training progresses.
Models we considered do not reach zero training loss within the given training 
time. If we increase the model size and train long enough and decrease the 
learning rate, eventually the gradient variance decreases to zero.

\textbf{Variance can widely change during training but it happens only on 
particularly noisy data.}
On CIFAR-10, the variance of \Gluster suddenly goes up but comes back down 
before any updates to the cluster centers (\cref{fig:cifar10_var}) while the 
variance of SVRG monotonically increases between updates.  To explain these 
behaviours, notice that immediately after cluster updates, \Gluster and SVRG 
should always have at most the same average variance as {\SG}.   We observed 
this behaviour consistently across different architectures such as other 
variations of ResNet and {VGG} on CIFAR-10.  \cref{fig:cifar10_noise} shows the 
effect of adding noise on CIFAR-10.  Label 
smoothing~\citep{szegedy2016rethinking} reduces fluctuations but not 
completely. On the other hand, label corruption, where we randomly change the 
labels for $10\%$ of the training data eliminates the fluctuations. We 
hypothesize that the model is oscillating between different states with 
significantly different gradient distributions.  The experiments with corrupt 
labels suggest that mislabeled data might be the cause of fluctuations such 
that having more randomness in the labels forces the model to ignore originally 
mislabeled data.

\textbf{Is the gradient distribution clustered in any dataset?}
The variance of \GC on MNIST (\cref{fig:mnist_var}) is consistently lower than 
\SGdB which means it is exploiting clustering in the gradient space. On 
CIFAR-10 (\cref{fig:cifar10_var}) the variance of \GC is lower than \SG but not 
lower than \SGdB except when fluctuating. The improved variance is more 
noticeable when training with corrupt labels.
On ImageNet (\cref{fig:imagenet_var,fig:imagenet_nvar}), the variance of \GC is 
overlapping with {\SG}. An example of a gradient distribution where \GC is 
overlapping with \SG is a uniform distribution.

\textbf{Can \GC speed-up training?}
Although on MNIST the variance is reduced using \GC, there improvements on 
CIFAR-10 and ImageNet are inconsistent. As such, \GC does not improve the 
convergence speed on CIFAR-10 and ImageNet. On CIFAR-10, we considered 
increasing the frequency of updates to the sampling until the fluctuations 
disappear. To remove all fluctuations, the frequency of updates after the first 
learning rate drop has to be less than every $100$ optimization steps which 
increases the overall wall-clock time of the method unless the assignment step 
is performed fast in parallel with a distributed system.

\subsection{Random Features Models: How Does Overparametrization Affect the 
Variance?}\label{sec:exp_rf}

The Random Features (RF) model~\citep{rahimi2007random} provides an effective 
way to explore the behaviour of optimization methods across a family of 
learning problems.
The RF model facilitates the discovery of optimization behaviours including 
the double-descent shape of the risk curve~\citep{hastie2019surprises, 
mei2019generalization}.
We train a student RF model with hidden dimensions $h_s$ on a fixed training 
set,
$(\bm{x}_i, \bm{y}_i)\in\R^I\times\{\pm1\}$,
$i=1,\ldots,N$,
sampled from a model,
${\bm{x}_i\sim \N(0, \I)}$,
${\bm{y}_i 
= \sign(\sigma(\bm{x}_i^\top\bm{\hat{\theta_1}})^\top\bm{\hat{\theta_2}}+b)}$
where $\sigma$ is the ReLU activation function, and the teacher hidden features 
$\bm{\hat{\theta_1}}\in\R^{I\times h_t}$, and second layer weights and bias, 
$\bm{\hat{\theta_2}}\in\R^{h_t\times 1}, b\in\R$, are sampled from the standard 
normal distribution.  Each $I$ dimensional random feature of the teacher is 
scaled to $\ell_2$ norm 1.  We train a student RF model with random features 
$\bm{\theta_1}\in\R^{I\times h_s}$ and second layer weights 
$\bm{\theta_2}\in\R^{h_s\times 1}$ by minimizing the cross-entropy loss.
In \cref{fig:rf}, we train hundreds of Random Features models and plot the 
average variance and normalized variance of gradient estimators.  We show both 
maximum and mean of the statistics during training.  The maximum better 
captures fluctuations of a gradient estimator and allows us to link our 
observations of variance to generalization using standard convergence bounds 
that rely on bounded noise~\citep{bottou2018optimization}.

\begin{figure}[t]
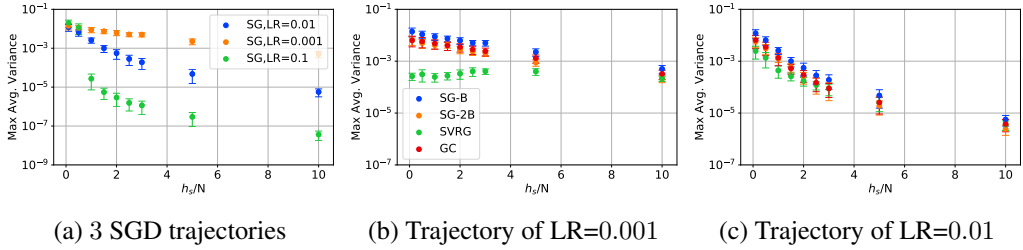

    \centering
    \begin{subfigure}[b]{\threecolfigwidth}
        \includegraphics[width=\textwidth]{\figdir/figs_rf/rf_agg_sgd/est_var_max_70}
        \caption{$3$ SGD trajectories}
        \label{fig:rf_sgd_var}
    \end{subfigure}
    \begin{subfigure}[b]{\threecolfigwidth}
        \includegraphics[width=\textwidth]{\figdir/figs_rf/rf_agg_lr0.001/est_var_max_70}
        \caption{Trajectory of LR=$0.001$}
        \label{fig:rf_lr0.001_var}
    \end{subfigure}
    \begin{subfigure}[b]{\threecolfigwidth}
        \includegraphics[width=\textwidth]{\figdir/figs_rf/rf_agg_lr0.01/est_var_max_70_nolegend}
        \caption{Trajectory of LR=$0.01$}
        \label{fig:rf_lr0.01_var}
    \end{subfigure}
    \caption{{\bf Random Features models.}\/
    Variance (log-scale) versus the over-parametrization coefficient (student's 
    hidden divided by the training set size).
    We observe: teacher's hidden is not influential,
    variance is low in overparametrized regime,
    and with larger learning rates.
    We aggregate results from hyper-parameters not shown.
    }
    \label{fig:rf}
\end{figure}

\begin{figure}[t]
    \centering
    \begin{minipage}{\threecolfigwidth}
    \centering
   \begin{subfigure}[b]{\textwidth}
       \includegraphics[width=\textwidth]{\figdir/figs_rf/rf_agg_sgd/est_nvar_max_70}
       \caption{SGD max norm.\ var.}
       \label{fig:rf_sgd_nvar_max}
   \end{subfigure}\\
   \begin{subfigure}[b]{\textwidth}
       \includegraphics[width=\textwidth]{\figdir/figs_rf/rf_agg_sgd/est_nvar_mean_70_nolegend}
       \caption{SGD mean norm.\ var.}
       \label{fig:rf_sgd_nvar_mean}
   \end{subfigure}
    \caption{{\bf Normalized variance on overparam.\ RF} is less than $1$.}
    \label{fig:rf_sgd_nvar}
    \end{minipage}
    \hfill
    \begin{minipage}{0.32\textwidth}
    \centering
   \begin{subfigure}[b]{\textwidth}
       \includegraphics[width=\textwidth]{\figdir/figs_neurips2020/cifar10_resnet8_smoothing/est_var}
       \caption{Label smoothing}
       \label{fig:cifar10_label_smooth}
   \end{subfigure}\\
   \begin{subfigure}[b]{\textwidth}
       \includegraphics[width=\textwidth]{\figdir/figs_neurips2020/cifar10_resnet8_corrupt/est_var_nolegend}
       \caption{Corrupt labels}
       \label{fig:cifar10_corrupt}
   \end{subfigure}
        \caption{{\bf CIFAR-10 Fluctuations} disappear with corrupt labels.}
    \label{fig:cifar10_noise}
    \end{minipage}
    \hfill
    \begin{minipage}{\threecolfigwidth}
        \centering
   \begin{subfigure}[b]{\textwidth}
        \includegraphics[width=\textwidth]{\figdir/figs_neurips2020/cifar10_resnet8_dup/est_var}\\
       \caption{CIFAR-10}
       \label{fig:cifar10_dup}
   \end{subfigure}
   \begin{subfigure}[b]{\textwidth}
        \includegraphics[width=\textwidth]{\figdir/figs_neurips2020/cifar100_resnet32_dup/est_var_nolegend}
       \caption{CIFAR-100}
       \label{fig:cifar100_dup}
   \end{subfigure}
        \caption{{\bf Image classification with duplicates} exploited by \GC.}
        \label{fig:image_dup}
    \end{minipage}
\end{figure}

{\bf Do models with small generalization gap converge faster?}\/
Based on small error bars, the only hyper-parameters that affect the variance 
are learning rate and the ratio of the size of the student hidden layer over 
the training set size.  In contrast, in analysis of risk and the double descent 
phenomena, we usually observe a dependence on ratio of the student hidden layer 
size to the teacher hidden layer size~\citep{mei2019generalization}. This 
suggests that models that generalize better are not necessarily ones that train 
faster.

{\bf Does ``diminishing returns'' happen because of  overparametrization?}\/
\cref{fig:rf_lr0.001_var,fig:rf_lr0.01_var} show that with the same learning 
rate, all methods achieve similar variance in the overparametrized regime.  
Note that due to the normalization of random features, the gradients in each 
coordinate are expected to decrease as overparametrization increases. We 
conjecture that the diminishing returns in increasing the mini-batch size 
should also be observed in overparametrized random features models similar to 
linear and deeper models~\citep{zhang2019algorithmic, shallue2018measuring}.

{\bf How does the variance change as learning rate varies?}
\cref{fig:rf_sgd_var} shows that the variance is smaller for trajectories with 
larger learning rates and that the gap grows as overparametrization grows.  
This is a direct consequence of the dependence of the noise in the gradient on 
current parameters. In \cref{sec:exp_image} we observe the opposite of this 
behaviour in deep models.   In contrast, \cref{fig:rf_sgd_nvar} shows that for 
overparametrization less than $5$, all trajectories have similar normalized 
variance that is larger than one (noise is more powerful than the gradient).  

\subsection{Duplicates: Back to the Motivation for Gradient Clustering}
\label{sec:exp_dup}
\begin{figure}[t]
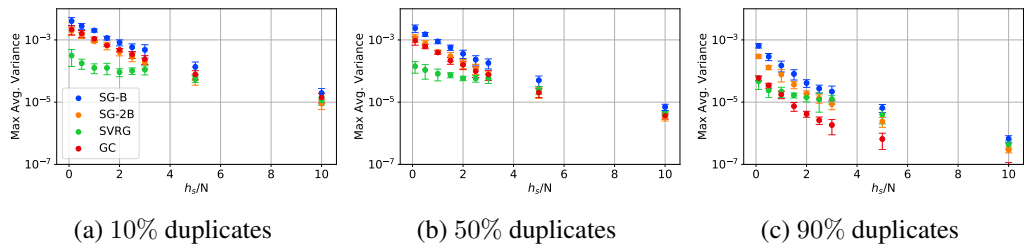

    \centering
    \begin{subfigure}[b]{\threecolfigwidth}
        \includegraphics[width=\textwidth]{\figdir/figs_rf/rf_agg_dup_5,0.1_lr0.01/est_var_max_70}
        \caption{$10\%$ duplicates}
        \label{fig:dup_10}
    \end{subfigure}
    \begin{subfigure}[b]{\threecolfigwidth}
        \includegraphics[width=\textwidth]{\figdir/figs_rf/rf_agg_dup_5,0.5_lr0.01/est_var_max_70_nolegend}
        \caption{$50\%$ duplicates}
        \label{fig:dup_50}
    \end{subfigure}
    \begin{subfigure}[b]{\threecolfigwidth}
        \includegraphics[width=\textwidth]{\figdir/figs_rf/rf_agg_dup_5,0.9_lr0.01/est_var_max_70_nolegend}
        \caption{$90\%$ duplicates}
        \label{fig:dup_90}
    \end{subfigure}
    \caption{{\bf Training RF Models with Duplicates.}\/ \GC identifies and 
    exploits duplicates.  Plots are similar to \cref{fig:rf}.  Learning rate in 
    all three is $0.01$.  In each training, there are $5$ data points that are 
    repeated equally to make up $10\%$ (left), $50\%$ (middle), and $90\%$ 
    (right) of the training set.
    }
    \label{fig:rf_dup}
\end{figure}

In \cref{fig:rf_dup}, we trained random features models with additional 
duplicated data points.  We observe that as the ratio of duplicates to 
non-duplicates increases, the gap between the variance of \Gluster and other 
methods improve.  Without duplicate data, \Gluster is always between \SG and 
\SGdB.  It is almost never worse than \SG and never better than \SGdB.  
\Gluster is as good as \SGdB at mild overparametrization ($1-4$). We need 
a degree of overparametrization for \Gluster to reduce the variance but too 
much overparametrization leaves no room for improvement. When duplicates exist, 
\Gluster performs well with a gap that does not decrease by 
overparametrization.

Similarly, experiments on CIFAR-10 and CIFAR-100 (\cref{fig:image_dup}) show 
that \Gluster significantly reduces the variance when duplicate data points 
exist. In this experiment, $10$ training points are selected randomly and 
duplicated $10000$ times that dominate the training. As the training set size 
is $50000$, the ratio of redundant points to the original data is $20\times$.  
Note that because of common data augmentations, duplicate data points are not 
exactly duplicate in the input space and there is no guarantee that their 
gradients would be similar.

\section{Conclusion}
In this chapter, we introduced an efficient gradient clustering method and 
showed that stratified sampling based on the gradient clusters induces a low 
variance gradient estimator. We observed challenges in employing gradient 
clustering for optimization. To investigate, we studied the variance of the 
gradients for various optimization trajectories on standard benchmarks.

In the following, we summarize the hypotheses, results, and future work:
\begin{itemize}
    \item We proved that stratified sampling based on a weighted clustering in 
        the gradient space minimizes the variance of a mini-batch gradient 
        estimator for a constant distribution of Gradients (\cref{sec:gvar}).  
        We designed an efficient gradient clustering method with 
        a per-iteration computational cost comparable to a single back-prop.
    \item We demonstrated the success of gradient clustering in reducing the 
        gradient variance on MNIST as well as in the presence of redundancy 
        such as duplicate data (\cref{sec:exp_image,sec:exp_dup}).  Future work 
        could consider detecting and exploiting other types of redundancy and 
        within-class imbalances using gradient clustering.
    \item We provided preliminary evidence and justification for the 
        correlation between the normalized gradient variance and convergence 
        speed of SGD (\cref{sec:exp,sec:normalized_var_analysis}). On MNIST, 
        the normalized gradient variance decreases during training while it is 
        increasing on CIFAR-10 and ImageNet for the models we tested.  Future 
        work could verify this correlation empirically for more model 
        architectures and datasets. If verified, future work can study 
        optimization methods such as stratified sampling with gradient 
        clustering to minimize the normalized variance instead of the variance 
        of gradients.
    \item We observed that the distribution of gradient changes significantly 
        during the training of CIFAR-10 models studied in this chapter. This 
        observation does not challenge standard optimization methods but 
        consistently impedes our proposed stratified sampling method.
        We hypothesized that a small subset of the training set such as a few 
        mislabeled data might be responsible for the sudden changes in gradient 
        clusters. In our follow-up work (not included in this thesis), we 
        observed that the mean and variance of the normalized gradient do not 
        change significantly during the training~\citep{faghri2020adaptive}.  
        This observation aligns with our hypothesis.  Future work can test our 
        hypothesis by comparing the gradient cluster centers which can provide 
        a method for detecting mislabeled or ambiguous data in the training 
        set.
    \item We sought an answer for the question "Is there structure in the 
        gradient space?" by applying weighted gradient clustering and measuring 
        the objective, i.e., gradient variance. On MNIST and CIFAR-10, we 
        successfully reduced the objective by performing the efficient but 
        approximate clustering algorithm. This suggests that a clustered 
        structure exists in the gradient space for the studied models on these 
        datasets. In contrast, on ImageNet the variance is not reduced which 
        implies a lack of clustered structure in the gradient space.  This 
        hypothesis is not fully tested and requires future work.  For example, 
        clustering with no weighting can be tried as well as whether the impact 
        of the approximation error in efficient gradient clustering operations.  
        It is also possible that alternative distance metrics to Euclidean 
        distance would induce clusters that could be found by a modified 
        gradient clustering method.
\end{itemize}

    \chapter{Bridging the Gap Between Adversarial Robustness and Optimization Bias}
\label{ch:robust}

The third and last problem studied in this thesis concerns adversarial 
robustness in deep learning. We hypothesize that robustness depends on the 
explicit training mechanisms and their implicit biases. The implication is that 
optimization choices not only impact the speed of training, computational 
resource requirements, and generalization performance but also indirectly 
affect the security and robustness of the models.

We demonstrate that the choice of optimizer,
neural network architecture,
and  regularizer significantly affect the adversarial 
robustness of linear neural networks, providing guarantees without
the need for adversarial training.
To this end, we revisit a known result linking maximally robust classifiers and minimum norm solutions, and combine it with recent results on the implicit bias of optimizers.
First, we show that, under certain conditions, it is possible to achieve both perfect standard accuracy and a certain degree of robustness,
simply by training an overparametrized model using the implicit bias of the optimization. In that regime, there is a direct relationship between the type of the optimizer and the attack to which the model is robust.
To the best of our knowledge, this work is the first to
study the impact of optimization methods such as sign gradient descent and proximal methods on adversarial robustness.
Second, we characterize the robustness of
linear convolutional models, showing that they resist attacks subject to a constraint on the Fourier-$\ell_\infty$ norm.
To illustrate these findings we design a novel
Fourier-$\ell_\infty$ attack that finds
adversarial examples with controllable frequencies.
We evaluate Fourier-$\ell_\infty$ robustness of adversarially-trained
deep CIFAR-10 models from the standard RobustBench
benchmark and visualize adversarial perturbations.

The content of this chapter have appeared in the following publication:

\begin{itemize}
    \item {\bf Faghri, Fartash} and Gowal, Sven and Vasconcelos, Cristina and 
        Fleet, David J.\ and Pedregosa, Fabian and Le Roux, Nicolas, {\sl 
        ``Bridging the Gap Between Adversarial Robustness and Optimization 
        Bias"}, Workshop on Security and Safety in Machine Learning Systems, 
        International Conference on Learning Representations (ICLR), 
        2021.
\end{itemize}

To ensure reproducibility, our code is publicly 
available~\footnote{\url{https://github.com/fartashf/robust_bias}}.

\def\figdir{ch-robust-tex/figures}

\section{Introduction}
\label{sec:intro}

Deep neural networks achieve high accuracy on standard test
sets, yet \citet{szegedy2013intriguing} showed that
any natural input correctly classified by a neural network
can be modified with adversarial perturbations.
Such perturbations fool the network into misclassification,
even when they are constrained to be imperceptible to humans.
Adversarial training improves model robustness by augmenting the training set 
with adversarial perturbations~\citep{goodfellow2014explaining}
and can be interpreted as approximately
solving a saddle-point problem~\citep{madry2017towards}.
Adversarial training is the state-of-the-art approach to adversarial
robustness~\citep{gowal2020uncovering, croce2020robustbench} and
alternative approaches are more likely to exhibit
spurious robustness~\citep{tramer2020adaptive}.
Nevertheless, adversarial training is computationally expensive compared to 
standard training, as it involves an alternating optimization.
Adversarial training also exhibits a trade-off between standard generalization 
and adversarial robustness. 
That is, it achieves improved {\it robust accuracy}, on
adversarially perturbed data, at the expense of {\it standard accuracy}, the
probability of correct predictions on natural data \citep{tsipras2018robustness}.
This adversarial robustness trade-off has been shown to be intrinsic in a
number of toy examples~\citep{fawzi2018analysis}, independent of the
learning algorithm in some cases~\citep{schmidt2018adversarially}.
Alternatives to adversarial training have been proposed to
reduce this trade-off, but a gap 
remains in practice~\citep{zhang2019theoretically}.

Here we consider connections between the adversarial robustness trade-off 
and optimization biases in training overparametrized models.
Deep learning models can often achieve interpolation, i.e., they have 
the capacity to exactly fit the training data~\citep{zhang2016understanding}. 
Their ability to generalize well in such cases has been attributed 
to an implicit bias toward simple
solutions~\citep{gunasekar2018characterizing,hastie2019surprises}.

Our main contribution is 
to
connect two large bodies of work on adversarial robustness
and optimization bias.
Focusing on models that achieve interpolation, we use the formulation of a {\it 
Maximally Robust Classifier}\/
from robust optimization~\citep{ben2009robust}.
We theoretically demonstrate that the choice of optimizer
(\cref{cor:implicit_robust}), neural network architecture
(\cref{cor:max_robust_to_min_norm_conv}), and regularizer
(\cref{cor:reg_is_max_margin}), significantly affect
the adversarial robustness of linear neural networks.
Even for linear models, the impact of these choices
had not been characterized precisely prior to our work.
We observe that, in contrast to adversarial training,
under certain
conditions we can find maximally robust classifiers at no additional
computational cost.

Based on our theoretical results on the robustness of linear convolutional
models to Fourier attacks,
we introduce a new class of attacks in the Fourier
domain. In particular, we design the Fourier-$\ell_\infty$
attack and illustrate our theoretical results. Extending to
non-linear models, we attack adversarially-trained deep models on
CIFAR-10 from the RobustBench benchmark~\citep{croce2020robustbench}
and find low and high frequency adversarial perturbations
by directly controlling
spectral properties through Fourier constraints.
This example demonstrates how
understanding maximal robustness of linear models is a
stepping stone to understanding and guaranteeing robustness
of non-linear models.

\section{No Trade-offs with Maximally Robust Classifiers}
\label{sec:def_robust}

We start by defining adversarial robustness and
the robustness trade-off in adversarial training.
Then, \cref{sec:max_robust} provides an alternative formulation to
adversarial robustness that avoids the robustness trade-off.
Let
$\mathcal{D}=\{(\xx_i, y_i)\}_{i=1}^n$
denote a training set sampled i.i.d.\ from a distribution,
where
${\xx_i\in \R^d}$ are features and ${y_i\in \{-1,+1\}}$
are binary labels.~\footnote{We restrict our theoretical analysis to binary 
classification but we expect direct extensions to multi-class classification.}
A binary classifier is a function
 ${\varphi : \R^d \rightarrow \R}$, and its
prediction on an input $\xx$ is given by
$\sign(\varphi(\xx))\in\{-1,+1\}$.
The aim in supervised
learning is to find a classifier that accurately classifies
the training data and generalizes to unseen test data.
One standard framework for training a classifier is
Empirical Risk Minimization (ERM),
$\argmin_{\varphi\in\Phi}
\mathcal{L}(\varphi)$,
where
$\mathcal{L}(\varphi)
\coloneqq \E_{(\xx,y) \sim \mathcal{D}} \,
\zeta(y\varphi(\xx))$,
$\Phi$ is a family of classifiers,
and $\zeta:\R\rightarrow\R^+$
is a loss function that we assume to be
strictly monotonically decreasing to $0$,
i.e., $\zeta' < 0$.
Examples are the exponential loss, $\exp{(-\hat{y}y)}$,
and the logistic loss, $\log{(1+\exp{(-\hat{y}y)})}$, where $\hat{y}, y$ are 
the model prediction and the ground-truth label.

Given a classifier, an adversarial perturbation
$\ddelta \in \R^d$ is any small perturbation that
changes the model prediction, i.e.,
$\sign(\varphi(\xx+\ddelta))\neq
\sign(\varphi(\xx)),\,
\|\ddelta\|\leq\varepsilon$,
where $\|\cdot\|$ is a norm on $\R^d$,
and $\varepsilon$ is an arbitrarily chosen constant.
It is common to use norm-ball constraints to
ensure perturbations are small (e.g., imperceptible in images)
but other constraints exist~\citep{brown2017adversarial}.
Commonly used are the $\ell_p$ norms, 
$\|\vv\|_p=\left(\sum_{i=0}^{d-1} [\vv]_i^p\right)^{1/p}$,
where $[\vv]_i$ denotes
the $i$-th element of a vector $\vv$,
for $i=0,\ldots,d-1$.
In practice, an adversarial perturbation, $\ddelta$,
is found as an approximate solution
to the following optimization problem,
\begin{equation}
\label{eq:find_adv}
\max_{\ddelta : \|\ddelta\|\leq \varepsilon}
\zeta(y\varphi(\xx+\ddelta))\,.
\end{equation}
Under certain conditions, closed form solutions exist
to the optimization problem in \eqref{eq:find_adv}.
For example, \citet{goodfellow2014explaining} observed that
the maximal {$\ell_\infty$-bounded} adversarial perturbation
against a linear model
(i.e., one causing the maximum change in the output)
is the sign gradient direction scaled by $\varepsilon$.

\citet{madry2017towards} defined an adversarially robust classifier
as the solution to the saddle-point optimization problem,
\begin{equation}
\label{eq:saddle_point}
\argmin_{\varphi\in\Phi}\, \E_{(\xx,y) \sim \mathcal{D}}
\max_{\ddelta : \|\ddelta\|\leq \varepsilon}
\zeta(y\varphi(\xx+\ddelta))\,.
\end{equation}
The saddle-point adversarial robustness problem
is the robust counter-part to empirical risk minimization where the expected loss is minimized on
worst-case adversarial samples defined as solutions
to \eqref{eq:find_adv}.
Adversarial Training~\citep{goodfellow2014explaining}
refers to solving \eqref{eq:saddle_point}
using an alternated optimization. It is
computationally expensive because
it often requires solving
\eqref{eq:find_adv} many times.

The main drawback of defining the adversarially robust classifier 
using \eqref{eq:saddle_point}, and a drawback of
adversarial training,
is that the parameter $\varepsilon$
needs to be known or tuned.
The choice of $\varepsilon$ controls a trade-off
between standard accuracy on samples
of the dataset $\mathcal{D}$
versus the robust accuracy, i.e., the accuracy on adversarial samples.
At one extreme $\varepsilon=0$, where
\eqref{eq:saddle_point} reduces to {ERM}.
At the other, 
as $\varepsilon\rightarrow\infty$,
all inputs in $\R^d$ are within the $\varepsilon$-ball
of every training point and can be an adversarial input.
The value of the inner max in \eqref{eq:saddle_point}
for a training point $\xx,y$ is the loss of the most confident prediction over $\R^d$ that is predicted as
$-y$. For large enough $\varepsilon$, the solution to
\eqref{eq:saddle_point} is a classifier predicting
the most frequent label, i.e., 
$\varphi(\cdot)=p^\ast$, where $p^\ast$ is the solution to
$\argmin_p n_{-1}\zeta(-p)+n_{+1}\zeta(p)$, and
$n_{-1}, n_{+1}$ are the number of negative and positive
training labels.

Robust accuracy is often a complementary generalization
metric to standard test accuracy.
In practice, we prefer a classifier that is accurate on the test set,
and that additionally, achieves maximal robustness.
The saddle-point formulation makes this challenging
without the knowledge of the maximal $\varepsilon$.
This trade-off has been studied in various works~\citep{%
tsipras2018robustness, zhang2019theoretically, fawzi2018adversarial,
fawzi2018analysis, schmidt2018adversarially}.
Regardless of the trade-off imposed by $\varepsilon$,
adversarial training
is considered to be the state-of-the-art for adversarial
robustness. The evaluation is
based on the robust accuracy achieved at
fixed $\varepsilon$'s
even though the standard accuracy is usually lower
than a comparable non-robust model~\citep{croce2020robustbench,
gowal2020uncovering}.

\subsection{Maximally Robust Classifier}
\label{sec:max_robust}
In order to avoid the trade-off imposed by $\varepsilon$
in adversarial robustness, we revisit a definition from
robust optimization.
\begin{defn}%
\label{def:max_robust}
A \textbf{Maximally Robust Classifier} (\citet{ben2009robust}) is a solution to
\begin{align}
\label{eq:max_robust0}
    \argmax_{\varphi\in\Phi} \{\varepsilon\,|\,y_i \varphi(\xx_i + \ddelta) > 0 
    ,\,~\forall i,\|\ddelta\| \leq \varepsilon\}\,.
\end{align}
\end{defn}
Compared with the saddle-point formulation~\eqref{eq:saddle_point},
$\varepsilon$ in \eqref{eq:max_robust0}
is not an arbitrary constant. Rather, it is
maximized as part of the optimization problem.
Moreover, the maximal $\varepsilon$ in this definition does
not depend on a particular loss function.
Note, a maximally robust classifier
is not necessarily unique.

The downside of \eqref{eq:max_robust0}
is that the formulation requires
the training data to be separable
so that \eqref{eq:max_robust0} is non-empty,
i.e., there exists $\varphi\in\Phi$
such that $\forall i, y_i\varphi(\xx_i)>0$.
In most deep learning settings,
this is not a concern as models are large
enough that they
can interpolate the training data, i.e.,
for any dataset there exists $\varphi$ such that $\varphi(\xx_i)=y_i$.
An alternative formulation is to modify
the saddle-point problem and
include an outer maximization on $\varepsilon$ by allowing
a non-zero slack loss. However, the new slack loss
reimposes a trade-off between standard and robust accuracy
(See \cref{sec:max_robust_gen}).

One can also show that adversarial training, i.e.,
solving the saddle-point problem \eqref{eq:saddle_point}, 
does not necessarily find a maximally robust classifier.
To see this, suppose we are given the maximal
$\varepsilon$ in \eqref{eq:max_robust0}.
Further assume the minimum of \eqref{eq:saddle_point} is non-zero.
Then the cost in the saddle-point problem does not distinguish
between the following two models: 1) a model that makes no
misclassification errors but has low confidence,
i.e.,
$\forall i,\, 0<\max_{\ddelta} y_i \varphi(\xx_i+\ddelta)\leq c_1$
for some small $c_1$
2) a model that classifies a training point, $\xx_j$, incorrectly
but is highly confident on all other training data and adversarially
perturbed ones, i.e.,
$\forall i\neq j,\,
0<c_2 < \max_{\ddelta} y_i\varphi(\xx_i+\ddelta)$.
The second model can incur a loss $n\zeta(c_1)-(n-1)\zeta(c_2)$
on $\xx_j$ while being no worse than the first model according
to the cost of the saddle-point problem.
The reason is another trade-off between
standard and robust accuracy caused by
taking the expectation over data points.

\subsection{Linear Models: Maximally Robust is the Minimum Norm Classifier}
Given a dataset and a norm, what is the maximally robust
linear classifier with respect to that norm? In this section, we revisit a result from~\citet{ben2009robust}
for classification.

\begin{defn}[Dual norm]
Let $\|\cdot\|$ be a norm on $\R^{n}$. The associated \emph{dual norm}, denoted 
    $\| \cdot \|_\ast$, is defined as
$\|\ddelta\|_\ast = \sup_{\xx}\{ |\langle \ddelta, \xx \rangle|\; |\; \|\xx\| \leq 1 \}\;$.
\end{defn}

\begin{defn}[Linear Separability]
    \label{def:sep}
    We say a dataset is linearly separable if there exists $\ww,b$ such that
    ${y_i (\ww^\top \xx_i +b)> 0}$ for all $i$.
\end{defn}

\begin{restatable}[Maximally Robust Linear Classifier (\citet{ben2009robust}, \S12)]{lem}{lemmaxrobust}
\label{lem:max_robust_to_min_norm_linear}
For linear models and linearly separable data,
the following problems are equivalent; i.e., 
from a solution of one, a solution of the other is readily found.
\begin{align}
\label{eq:max_robust}
   \text{Maximally robust classifier:}\quad&
   \!\! \argmax_{\ww,b} \{\varepsilon\,|\,
    y_i (\ww^\top (\xx_i + \ddelta) +b) > 0 
    ,\,~\forall i,\|\ddelta\| \leq \varepsilon\}\, ,\\
\label{eq:max_margin}
   \text{Maximum margin classifier:}\quad&
    \argmax_{\ww,b : \|\ww\|_\ast\leq 1} \{\varepsilon\,|\,
    y_i (\ww^\top  \xx_i + b)
    \geq \varepsilon,\,~\forall i\}\,,\\
\label{eq:min_norm}
   \text{Minimum norm classifier:}\quad&
    \argmin_{\ww,b} \{\|\ww\|_\ast\,|\,
    y_i (\ww^\top  \xx_i+b) \geq 1,\,~\forall i\}\,.
\end{align}
The expression $\min_i y_i (\ww^\top \xx_i+b)/\|\ww\|$
is the margin
of a classifier $\ww$ that is the distance of the nearest training
point to the classification boundary,
i.e., the line ${\{\vv:\ww^\top \vv=-b\}}$.
\end{restatable}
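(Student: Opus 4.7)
The plan is to establish the chain of equivalences (\ref{eq:max_robust}) $\Leftrightarrow$ (\ref{eq:max_margin}) $\Leftrightarrow$ (\ref{eq:min_norm}) separately, since each link requires a distinct tool: duality for the first, and positive homogeneity (scaling) for the second. Linear separability (\cref{def:sep}) guarantees the feasible set of (\ref{eq:max_robust}) is non-empty with $\varepsilon>0$, and in particular that an optimal $\ww$ is non-zero, which justifies the divisions that follow.

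For (\ref{eq:max_robust}) $\Leftrightarrow$ (\ref{eq:max_margin}), the key step is to eliminate the universal quantifier over $\ddelta$ using the dual norm. For a fixed $(\ww, b)$ and training point $(\xx_i, y_i)$, the adversarial constraint $y_i(\ww^\top(\xx_i+\ddelta)+b) > 0$ for all $\|\ddelta\|\le\varepsilon$ is equivalent to
\begin{align*}
y_i(\ww^\top \xx_i + b) + \min_{\|\ddelta\|\le\varepsilon} y_i\,\ww^\top\ddelta > 0\,,
\end{align*}
and by the definition of the dual norm, $\min_{\|\ddelta\|\le\varepsilon} y_i\,\ww^\top\ddelta = -\varepsilon\|\ww\|_\ast$ (since $|y_i|=1$). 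Thus (\ref{eq:max_robust}) reduces to maximizing $\varepsilon$ subject to $y_i(\ww^\top\xx_i+b) > \varepsilon\|\ww\|_\ast$ for all $i$. Because the feasible set and objective are invariant to positive rescaling of $(\ww,b)$, I would normalize $\|\ww\|_\ast = 1$ to recover (\ref{eq:max_margin}) exactly, with a closure argument handling the passage from strict to non-strict inequality (the supremum in (\ref{eq:max_robust}) is the maximum of (\ref{eq:max_margin})).

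For (\ref{eq:max_margin}) $\Leftrightarrow$ (\ref{eq:min_norm}), I would apply a direct two-sided scaling argument. Given any feasible $(\ww,b)$ for (\ref{eq:max_margin}) achieving margin $\varepsilon>0$, the rescaled pair $(\ww/\varepsilon, b/\varepsilon)$ is feasible for (\ref{eq:min_norm}) with dual-norm value $1/\varepsilon$; conversely, given any feasible $(\ww,b)$ for (\ref{eq:min_norm}), the rescaled pair $(\ww/\|\ww\|_\ast, b/\|\ww\|_\ast)$ is feasible for (\ref{eq:max_margin}) with margin $1/\|\ww\|_\ast$. This bijection between feasible points shows that maximizing $\varepsilon$ in (\ref{eq:max_margin}) is the same as minimizing $\|\ww\|_\ast$ in (\ref{eq:min_norm}), and that optimizers of one yield optimizers of the other up to positive scaling.

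The main obstacle I anticipate is bookkeeping around the strict inequality in (\ref{eq:max_robust}) versus the non-strict inequalities in (\ref{eq:max_margin}) and (\ref{eq:min_norm}): for a finite separable dataset the supremum $\varepsilon^\ast=\min_i y_i(\ww^\top\xx_i+b)/\|\ww\|_\ast$ is attained by the latter formulations but not strictly by the former. I would resolve this with a standard continuity/compactness step, noting that the set of optimal directions $\ww/\|\ww\|_\ast$ coincides across the three problems, so the phrase ``from a solution of one, a solution of the other is readily found'' holds as stated. A minor secondary subtlety is ensuring $\ww\neq 0$ at the optimum, which follows because linear separability implies a strictly positive achievable margin and hence any $\ww$ with $\varepsilon>0$ must be nonzero.
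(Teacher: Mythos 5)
Your proposal is correct and follows essentially the same route as the paper's proof: eliminate $\ddelta$ via the dual norm to reduce the robust constraints to $y_i(\ww^\top\xx_i+b) > \varepsilon\|\ww\|_\ast$, then use positive homogeneity and rescaling (normalizing $\|\ww\|_\ast$ and the change of variables $\ww/\varepsilon$) to pass between the margin and minimum-norm formulations, with linear separability guaranteeing $\ww\neq 0$. Your explicit two-sided scaling bijection and the careful treatment of strict versus non-strict inequalities are slightly more detailed than the paper's wording, but they are the same argument.
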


We provide a proof for
general norms based on \citet{ben2009robust} in \cref{proof:max_robust_to_min_norm_linear}.
Each formulation in \cref{lem:max_robust_to_min_norm_linear}
is connected to a wide array of results that can be transferred
to other formulations. Maximally robust classification is
one example of a problem in robust optimization that can
be reduced and solved efficiently. Other problems
such as robust regression as well as
robustness to correlated input perturbations have been studied
prior to deep learning~\citep{ben2009robust}.

On the other hand, maximum margin and minimum norm
classification have long been popular because of their
generalization guarantees.
Recent theories for overparametrized models
link the margin and the norm of a model
to generalization~\citep{hastie2019surprises}.
Although the tools are different, connecting the margin and
the norm of a model has also been the basis of generalization
theories for Support Vector Machines 
and AdaBoost~\citep{shawe1998structural,telgarsky2013margins}.
Maximum margin classification does not require linear separability,
because there can exist a classifier with $\varepsilon<0$ that
satisfies the margin constraints.
Minimum norm classification is the easiest formulation
to work with in practice
as it does not rely on $\varepsilon$ nor $\ddelta$ and
minimizes a function of the weights subject to a set of
constraints.

In what follows, we use
\cref{lem:max_robust_to_min_norm_linear}
to transfer recent results about minimum norm classification
to maximally robust classification. These results 
have been the basis for explaining generalization properties
of deep learning models~\citep{hastie2019surprises,nakkiran2019deep}.

\section{Implicit Robustness of Optimizers}
\label{sec:implicit_bias}

The most common approach to empirical risk minimization (ERM) is through 
gradient-based optimization. 
As we will review shortly, \citet{gunasekar2018characterizing} showed
that gradient  descent, and more generally steepest descent methods, 
have an implicit bias towards minimum norm solutions. 
From the infinitely many solutions that minimize 
the empirical risk,
we can characterize the one found by steepest descent.
Using \cref{lem:max_robust_to_min_norm_linear}, we show that
such a classifier is also maximally robust
w.r.t.\ a specific norm.

Recall that ERM is defined as
$\argmin_{\varphi\in\Phi}\mathcal{L}(\varphi)$, where
$\mathcal{L}(\varphi)
= \E_{(\xx,y) \sim \mathcal{D}} \zeta(y\varphi(\xx))$.
Here we assume $\mathcal{D}$ is a finite dataset of size $n$.
For the linear family of functions, we write $\mathcal{L}(\ww,b)$.
Hereafter, we rewrite the loss as
$\mathcal{L}(\ww)$ and use an augmented representation with
a constant $1$ dimension.
For linearly separable data and overparametrized models
($d>n$), there exist infinitely many linear
classifiers that minimize the empirical
risk~\citep{gunasekar2018characterizing}.
We will find it convenient to ignore the scaling
and focus on the normalized vector $\ww/\|\ww\|$, i.e.,
the direction of $\ww$.
We will say that the sequence $\ww_1, \ww_2, \ldots$ 
converges in direction to a vector $\vv$
if $\lim_{t\rightarrow\infty} \ww_t/\|\ww_t\|=\vv$.

\subsection{Steepest Descent on Fully-Connected Networks}

\begin{defn}[Steepest Descent]
    Let $\langle\cdot\rangle$ denote an inner product and $\|\cdot\|$ its 
    associated norm, $f$ a function to be minimized, and
$\gamma$ a step size. The steepest descent method associated with this norm finds
\begin{align}\label{eq:steepest_descent}
\ww_{t+1} &= \ww_t + \gamma \Delta\ww_t,\nonumber\\
\text{where}~~\Delta \ww_{t} &\in \argmin_\vv \langle\nabla f(\ww_t), \vv \rangle
+ \frac{1}{2}\|\vv\|^2\,.
\end{align}
The steepest descent step, $\Delta \ww_{t}$, can be equivalently written as
${-\|\nabla f(\ww_t)\|_\ast\, g_{\text{nst}}}$,
where
$g_{\text{nst}} \in \argmin\left\{ \langle\nabla f(\ww_t), \vv\rangle\; |\; \|\vv\| = 1\right\}$.
A proof can be found in \citep[\S9.4]{boyd2004convex}.
\end{defn}

\paragraph{Remark.} For some $p$-norms,
steepest descent steps have closed form expressions.
Gradient Descent (GD) is steepest descent w.r.t.\ $\ell_2$ norm 
where ${- \nabla f(\ww_t)}$ is a steepest descent step.
Sign gradient descent is steepest descent w.r.t.\
$\ell_\infty$ norm where  
${- \|\nabla f(\ww_t)\|_1 \sign(\nabla f(\ww_t))}$ is a steepest descent step.
Coordinate Descent (CD) is steepest descent w.r.t.\ $\ell_1$ norm where
${- \nabla f(\ww_t)_i {\boldsymbol{e}}_i }$ is a steepest descent step
($i$ is the coordinate for which the gradient has the largest absolute magnitude).

\begin{thm}[Implicit Bias of Steepest Descent (\citet{gunasekar2018characterizing} (Theorem 5))]
\label{thm:gunasekar_linear}
For any separable dataset $\{\xx_i, y_i\}$
and any norm $\|\cdot\|$,
consider the steepest descent updates
from \eqref{eq:steepest_descent} for minimizing
the empirical risk $\mathcal{L}(\ww)$ (defined in \cref{sec:def_robust})
with the exponential loss, $\zeta(z)=\exp{(-z)}$. For all
initializations $\ww_0$, and
all bounded step-sizes satisfying a known upper bound,
the iterates $\ww_t$ satisfy
\begin{equation}
\label{eq:gunasekar_linear}
    \lim_{t \to \infty}
    \min_i \frac{y_i \ww_t^\top \xx_i}{\|\ww_t\|}
    = \max_{\ww : \|\ww\| \leq 1}
    \min_i y_i \ww^\top \xx_i\,.
\end{equation}
In particular, if a unique maximum margin classifier
$\ww_{\|\cdot\|}^\ast=\argmax_{\ww : \|\ww\| \leq 1} \min_i y_i \ww^\top \xx_i$
exists, the limit direction converges to it, i.e.,
$\lim_{t \to \infty}\frac{\ww_t}{\|\ww_t\|}
=\ww^\ast_{\|\cdot\|}$.
\end{thm}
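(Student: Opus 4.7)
The plan is to adapt the standard proof from the implicit-bias literature, which proceeds in three main stages: (i) show the iterates diverge with the loss vanishing, (ii) control the normalized margin from below by the max-margin value, and (iii) combine these to establish convergence of the normalized margin, and then of the direction.

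First, I would record structural facts about steepest descent on the exponential loss over separable data. Writing $\mathcal{L}(\ww) = \sum_i \exp(-y_i \ww^\top \xx_i)$, the gradient is $\nabla \mathcal{L}(\ww) = -\sum_i y_i \xx_i \exp(-y_i \ww^\top \xx_i)$. Since the data is separable there exists some $\tilde\ww$ with $y_i \tilde\ww^\top \xx_i > 0$ for all $i$, so $\mathcal{L}$ has no minimizer and $\inf \mathcal{L} = 0$. Using the upper bound on the step size one verifies the standard descent lemma, $\mathcal{L}(\ww_{t+1}) \le \mathcal{L}(\ww_t) - c\|\nabla \mathcal{L}(\ww_t)\|_\ast^2$ for some $c>0$, and hence $\mathcal{L}(\ww_t) \to 0$ and $\|\ww_t\|\to\infty$. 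The monotone decrease combined with separability also gives $y_i \ww_t^\top \xx_i \to \infty$ for every $i$.

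Second, I would prove the lower bound that matches the maximum margin $\gamma := \max_{\|\ww\|\le 1} \min_i y_i \ww^\top \xx_i$ with maximizer $\hat\ww$. The key observation is the squeeze
\begin{equation*}
    \exp\bigl(-\min_i y_i \ww_t^\top \xx_i\bigr) \;\le\; \mathcal{L}(\ww_t) \;\le\; n \exp\bigl(-\min_i y_i \ww_t^\top \xx_i\bigr),
\end{equation*}
so $-\log \mathcal{L}(\ww_t) = \min_i y_i \ww_t^\top \xx_i + O(1)$. On the other hand, inspecting the steepest descent step $\ww_{t+1}-\ww_t = -\gamma_t \|\nabla \mathcal{L}(\ww_t)\|_\ast g_{\text{nst}}$ and taking the inner product with $\hat\ww$ yields $\hat\ww^\top(\ww_{t+1}-\ww_t) \ge \gamma\, \gamma_t\|\nabla\mathcal{L}(\ww_t)\|_\ast$ by the definitions of the dual norm and of $\gamma$ (since $\nabla\mathcal{L}$ is a nonnegative combination of the $-y_i\xx_i$). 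Telescoping, $\hat\ww^\top \ww_t$ grows like $\sum_s \gamma_s \|\nabla \mathcal{L}(\ww_s)\|_\ast$, while the triangle inequality and $\|g_{\text{nst}}\|=1$ give $\|\ww_t\| \le \|\ww_0\| + \sum_s \gamma_s \|\nabla\mathcal{L}(\ww_s)\|_\ast$. Dividing, $\hat\ww^\top \ww_t / \|\ww_t\| \to \gamma$, and since $\min_i y_i \ww_t^\top \xx_i / \|\ww_t\| \ge \hat\ww^\top \ww_t / \|\ww_t\| - o(1)$ after normalizing, the limit in the display \eqref{eq:gunasekar_linear} is at least $\gamma$.

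Third, the matching upper bound is immediate: $\min_i y_i (\ww_t/\|\ww_t\|)^\top \xx_i \le \gamma$ by the definition of $\gamma$. Combined with the previous step this gives the limit equals $\gamma$. For the uniqueness claim, if the maximizer $\ww^\ast_{\|\cdot\|}$ is unique then any subsequential limit $\vv$ of $\ww_t/\|\ww_t\|$ satisfies $\|\vv\|\le 1$ and attains the max-margin value, hence $\vv = \ww^\ast_{\|\cdot\|}$, giving direction convergence.

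The main obstacle is the second step: the telescoping argument requires that the sum $\sum_s \gamma_s \|\nabla\mathcal{L}(\ww_s)\|_\ast$ dominates the additive $O(1)$ slack coming from the log-sum-exp approximation and from the initialization, which must be combined with the descent lemma to show both that this sum diverges and that $\|\ww_t\|$ does not outrun it. Handling non-smooth norms (such as $\ell_\infty$ or $\ell_1$, where the steepest descent direction $g_{\text{nst}}$ is not unique) requires choosing any valid subgradient selection and verifying the inner product bounds still hold; this is where the proof separates from a clean gradient-descent calculation.
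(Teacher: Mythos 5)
The paper does not prove this theorem: it is quoted verbatim from \citet{gunasekar2018characterizing} (their Theorem 5), as the bracketed attribution in the theorem title makes explicit, and the paper uses it as a black box to derive \cref{cor:implicit_robust}. There is therefore no in-paper proof to compare your attempt against. Your outline does follow the broad shape of proofs in the implicit-bias literature (divergence of the loss, a squeeze between $-\log\mathcal{L}$ and the unnormalized margin, a telescoping bound against a max-margin direction $\hat\ww$, and a trivial upper bound), which is the right territory.

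However, two steps in your second stage do not hold as written. First, the per-step inequality $\hat\ww^\top(\ww_{t+1}-\ww_t) \ge \gamma\,\gamma_t\|\nabla\mathcal{L}(\ww_t)\|_\ast$ does not follow from ``$-\nabla\mathcal{L}$ is a nonnegative combination of the $y_i\xx_i$'' together with the definitions of $\gamma$ and the dual norm. What that observation actually yields is $\hat\ww^\top(-\nabla\mathcal{L}(\ww_t)) \ge \gamma\sum_i c_i$ with $c_i=\exp(-y_i\ww_t^\top\xx_i)$, but $\|\nabla\mathcal{L}(\ww_t)\|_\ast$ is not controlled from above by $\sum_i c_i$ without a normalization assumption $\|\xx_i\|_\ast\le 1$, which is not part of the theorem statement. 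For $\ell_2$ steepest descent the claimed inequality reduces to $\hat\ww^\top(-\nabla\mathcal{L})\ge\gamma\|\nabla\mathcal{L}\|_2$, and this already fails for two points in $\R^2$, say $\xx_1=(2,0),\,\xx_2=(0,1)$ with $y_1=y_2=1$, when the exponential weights are lopsided (e.g.\ $c_1\gg c_2$). Second, the closing deduction that $\min_i y_i\ww_t^\top\xx_i/\|\ww_t\| \ge \hat\ww^\top\ww_t/\|\ww_t\| - o(1)$ is not justified: alignment of $\ww_t$ with the fixed direction $\hat\ww$ does not, by itself, bound the worst-case margin coordinate, and in fact the limit you claim for the alignment is off — for $\ell_2$ with $\|\hat\ww\|_2=1$ one expects $\hat\ww^\top\ww_t/\|\ww_t\|\to 1$, not $\gamma$. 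The actual argument in \citet{gunasekar2018characterizing} works through a potential built on $\log\mathcal{L}(\ww_t)$ and a more careful dual-norm accounting that sidesteps the per-step bound above; both of the flagged steps would have to be reworked along those lines.
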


In other words, the margin converges to the maximum margin
and if the maximum margin classifier is unique, the
iterates converge in direction to $\ww^\ast_{\|\cdot\|}$.
We use this result to derive our \cref{cor:implicit_robust}.

\begin{cor}[Implicit Robustness of Steepest Descent]
\label{cor:implicit_robust}
For any linearly separable dataset and any norm $\|\cdot\|$,
steepest descent iterates
minimizing the empirical risk, $\mathcal{L}(\ww)\,$,
satisfying the conditions of \cref{thm:gunasekar_linear},
converge in direction to a maximally robust classifier,
\begin{align*}
    \argmax_{\ww}
    \{\varepsilon\,|\,y_i \ww^\top (\xx_i + \ddelta) > 0 
    ,\,~\forall i,\,\|\ddelta\|_\ast \leq \varepsilon\}\,.
\end{align*}
In particular, a maximally robust classifier against
$\ell_1$, $\ell_2$, and $\ell_\infty$ is reached, respectively, by
sign gradient descent, gradient descent, and coordinate descent.
\end{cor}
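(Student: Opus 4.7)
The plan is to obtain the corollary by directly composing the two prior results available to us: Theorem~\ref{thm:gunasekar_linear} characterizes the limiting direction of steepest descent as a maximum margin classifier, and Lemma~\ref{lem:max_robust_to_min_norm_linear} identifies maximum margin classifiers with maximally robust ones under a norm/dual-norm swap. First, I would invoke Theorem~\ref{thm:gunasekar_linear} with the chosen norm $\|\cdot\|$ used to define steepest descent. Since the data are linearly separable, the loss is the exponential loss (or, by standard extensions, any strictly monotonically decreasing loss satisfying the theorem's hypotheses) and the step size is suitably bounded, the theorem yields that $\ww_t/\|\ww_t\|$ achieves the maximum normalized margin in $\|\cdot\|$, i.e., it approaches $\argmax_{\|\ww\|\leq 1}\min_i y_i \ww^\top \xx_i$ (a unique direction when the max-margin classifier is unique; otherwise the margin value is attained in the limit).

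Next, I would apply Lemma~\ref{lem:max_robust_to_min_norm_linear} with the roles of norm and dual norm reversed relative to how they appear in its statement. Concretely, if we set the perturbation norm in the lemma to $\|\cdot\|_\ast$, then its dual is $\|\cdot\|$, so the lemma identifies the maximum margin classifier measured in $\|\cdot\|$ with a maximally robust classifier against $\|\ddelta\|_\ast \le \varepsilon$ perturbations. Combined with the previous step, this identifies the limiting direction of steepest descent w.r.t. $\|\cdot\|$ as a maximally robust classifier against dual-norm perturbations. Because the max-margin and max-robust formulations are invariant to positive scaling of $\ww$, the fact that we only control the direction of $\ww_t$ (and not its magnitude, which diverges under the exponential loss) is harmless.

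Finally, the three concrete examples follow by instantiating this general statement using standard dual-norm identities ($\ell_\infty^\ast=\ell_1$, $\ell_2^\ast=\ell_2$, $\ell_1^\ast=\ell_\infty$) together with the well-known characterizations noted in the remark after the definition of steepest descent: sign gradient descent, gradient descent, and coordinate descent are steepest descent with respect to $\ell_\infty$, $\ell_2$, and $\ell_1$ respectively. Hence their limit directions are maximally robust against $\ell_1$, $\ell_2$, and $\ell_\infty$ perturbations. The main obstacle is not technical but notational: one must track carefully which norm plays which role when composing the two results, since Theorem~\ref{thm:gunasekar_linear} uses a single norm for both the descent geometry and the margin, whereas Lemma~\ref{lem:max_robust_to_min_norm_linear} separates the perturbation norm from its dual that measures the classifier. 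A secondary care point is that the bias term $b$ is handled implicitly via the augmented representation introduced earlier, so no separate argument for $b$ is needed.
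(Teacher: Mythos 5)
Your proposal follows exactly the paper's own argument: invoke Theorem~\ref{thm:gunasekar_linear} to conclude that steepest descent w.r.t.\ $\|\cdot\|$ converges in direction to a maximum margin classifier in $\|\cdot\|$, then apply Lemma~\ref{lem:max_robust_to_min_norm_linear} (with the perturbation norm taken to be $\|\cdot\|_\ast$ so that the classifier-norm constraint is $\|\cdot\|$) to identify that limit as a maximally robust classifier against $\|\ddelta\|_\ast$-bounded perturbations, with the three concrete cases following from the dual-norm pairings and the remark identifying sign GD, GD, and CD as steepest descent w.r.t.\ $\ell_\infty$, $\ell_2$, and $\ell_1$. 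The only difference is that you spell out the norm/dual-norm bookkeeping and the scale-invariance of the max-margin/max-robust objectives more explicitly than the paper's two-sentence proof, which is a helpful clarification rather than a departure.
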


\begin{proof}
By \cref{thm:gunasekar_linear},
the margin of the steepest descent iterates,
$\min_i \frac{y_i \ww_t^\top \xx_i}{\|\ww_t\|}$ ,
converges as $t \to \infty$ to the maximum margin,
$\max_{\ww : \|\ww\| \leq 1} \min_i y_i \ww^\top \xx_i$.
By  \cref{lem:max_robust_to_min_norm_linear},
any maximum margin classifier w.r.t. $\|\cdot\|$
gives a maximally robust classifier w.r.t. $\|\cdot\|_\ast$.
\end{proof}

\cref{cor:implicit_robust} implies that for
overparametrized linear models,
we obtain guaranteed robustness by an appropriate choice of
optimizer without the additional cost
and trade-off of adversarial training.
We note that \cref{thm:gunasekar_linear} and
\cref{cor:implicit_robust},
characterize linear models, but do not account for the bias $b$.
We can close the gap with an augmented input representation,
to include the bias explicitly.
Or one could preprocess the data, 
removing the mean before training.

To extend \cref{cor:implicit_robust} to deep learning models
one can use generalizations of \cref{thm:gunasekar_linear}.
For the special case of gradient descent,
\cref{thm:gunasekar_linear} has been generalized to multi-layer
fully-connected linear networks and a larger family
of  strictly monotonically decreasing loss functions
including the logistic loss~\citep[Theorem 2]{nacson2019convergence}.

\subsection{Gradient Descent on Linear Convolutional Networks}
\label{sec:implicit_conv}

In this section, we show that even for linear models,
the choice of the architecture affects implicit robustness,
which gives another alternative for achieving maximal robustness.
We use a generalization of
\cref{thm:gunasekar_linear} to linear convolutional models.

\begin{defn}[Linear convolutional network]
An $L$-layer convolutional network with
$1$-D circular convolution is parameterized using
weights of $L-1$ convolution layers,
$\ww_1,\ldots,\ww_{L-1}\in\R^d$,
and weights of a final linear layer, $\ww_L\in\R^d$,
such that the linear mapping of the network is
\begin{equation*}
\varphi_{\text{conv}}(\xx;\ww_1,\ldots,\ww_L)
\coloneqq
\ww_L^\top (\ww_{L-1} \star \cdots (\ww_1 \star \xx))\,.
\end{equation*}
Here, circular convolution is defined as
$ [\ww\star\xx]_i
\coloneqq
\frac{1}{\sqrt{d}} \sum_{k=0}^{d-1}
[\ww]_{\overline{-k}} [\xx]_{\overline{i+k}}$, where
$[\vv]_i$ denotes the $i$-th element of a vector $\vv$
for $i=0,\ldots,d-1$,
and $\overline{\phantom{i}i \phantom{i}}=i\bmod d$.~\footnote{%
We use the usual definition of circular
convolution in signal processing, rather than cross-correlation,
$\ww^\downarrow \star \xx$ with $[\vv^\downarrow]_i=[\vv]_{\overline{-i}}$, which is used in deep learning literature, but not associative.}
\end{defn}

A linear convolutional network is equivalent
to a linear model with weights
$ \ww= \ww_L\star(\cdots\star (\ww_{2}\star\ww_1)) $ because
of the associative property of convolution.
In particular, for two-layer linear convolutional networks
$\ww=\ww_2\star\ww_1$.

\begin{defn}[Discrete Fourier Transform]
$\mathcal{F}(\ww)\in\C^d$ denotes the Fourier coefficients of $\ww$
where
$[\mathcal{F}(\ww)]_d = \frac{1}{\sqrt{d}}
\sum_{k=0}^{d-1} [\ww]_k \exp(-\frac{2\pi j}{d} k d)$
and $j^2=-1$.
\end{defn}

\begin{thm}[Implicit Bias towards Fourier Sparsity
(\citet{gunasekar2018implicit}, Theorem 2, 2.a)]
\label{thm:gunasekar_conv}
Consider the family of
$L$-layer linear convolutional networks and
the sequence of gradient descent iterates, $\ww_t$,
minimizing the empirical risk, $\mathcal{L}(\ww)$, with
the exponential loss, $\exp{(-z)}$.
For almost all linearly separable datasets
under known conditions on the step size and
convergence of iterates,
$\ww_t$  converges in direction
to the classifier minimizing
the norm of the Fourier coefficients given by
\begin{align}
\argmin_{\ww_1,\ldots,\ww_L} \{ \|\mathcal{F}(\ww)\|_{2/L}\, |\,
y_i \langle\ww, \xx_i\rangle\geq 1,\,\forall i\}.
\end{align}
In particular, for two-layer linear convolutional networks
the implicit bias is towards the solution with minimum $\ell_1$ norm of the Fourier coefficients,
$\|\mathcal{F}(\ww)\|_1$.
For $L>2$, the convergence is to a first-order stationary
point.
\end{thm}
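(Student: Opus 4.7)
My plan would be to reduce the convolutional problem to a diagonal product-parameterized linear problem in the Fourier domain, and then invoke implicit-bias machinery developed for such product factorizations. The starting observation is that circular convolution diagonalizes under the DFT: for any $L$-layer linear convolutional network,
\begin{equation*}
\mathcal{F}(\ww) \;=\; \mathcal{F}(\ww_L)\odot\mathcal{F}(\ww_{L-1})\odot\cdots\odot\mathcal{F}(\ww_1),
\end{equation*}
where $\odot$ is pointwise (Hadamard) product. By Parseval's identity the inner product $\langle \ww,\xx_i\rangle$ equals $\mathrm{Re}\,\langle\mathcal{F}(\ww),\mathcal{F}(\xx_i)\rangle$, so the empirical risk $\mathcal{L}$ depends on $(\ww_1,\ldots,\ww_L)$ only through the frequency-wise products $\hat w_k = \prod_\ell [\mathcal{F}(\ww_\ell)]_k$. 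Writing $[\mathcal{F}(\ww_\ell)]_k = u_{\ell,k}$, the problem decouples across frequencies at the level of parameterization: we have a linear classifier in $\hat\ww\in\mathbb{C}^d$ with the constraint $\hat w_k = \prod_\ell u_{\ell,k}$.

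The second step is to translate gradient descent on $(\ww_1,\ldots,\ww_L)$ into a flow on $\hat\ww$. A direct chain-rule computation gives, for gradient flow,
\begin{equation*}
\tfrac{d}{dt}\,\hat w_{k,t} \;=\; -\,\alpha_{k,t}\,[\nabla_{\hat\ww}\mathcal{L}(\hat\ww_t)]_k,\qquad \alpha_{k,t}\;=\;\sum_{\ell=1}^{L}\prod_{\ell'\neq \ell}|u_{\ell',k}|^2,
\end{equation*}
i.e.\ a preconditioned (and coordinate-wise reweighted) gradient flow in the Fourier domain. For the exponential loss on linearly separable data, the gradient $\nabla_{\hat\ww}\mathcal{L}$ lies in the span of $\{\mathcal{F}(\xx_i)\}$ with positive combining coefficients that asymptotically concentrate on the support vectors. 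This aligns the flow with a dual-feasible direction, which is the hook for a KKT-style argument.

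The third step is the implicit-bias computation. I would mimic the two-layer case of Gunasekar et al.: define $\rho_k(t)=|\hat w_{k,t}|^{2/L}$ and show that the induced flow on $\rho$ is a mirror flow whose potential generates the $\ell_{2/L}$-quasi-norm, so that as $\|\hat\ww_t\|\to\infty$ the normalized direction satisfies the KKT conditions of $\min \|\mathcal{F}(\ww)\|_{2/L}$ subject to $y_i\langle\ww,\xx_i\rangle\ge 1$. For $L=2$ this potential is convex (the $\ell_1$ norm), so the KKT point is a global minimizer, recovering the minimum-$\|\mathcal{F}(\ww)\|_1$ claim. Convergence in direction follows by combining this with the standard argument (Soudry et al.) that for exponentially tailed losses on separable data, $\|\ww_t\|\to\infty$ and the normalized margin increases monotonically.

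The main obstacle is step three for $L>2$: the induced potential is no longer convex, and one can only guarantee that $\hat\ww_t/\|\hat\ww_t\|$ converges to a first-order stationary point of the $\ell_{2/L}$ margin problem rather than a global optimum, which matches the theorem statement. A second, more technical obstacle is justifying the exchange of limit and normalization---this requires controlling the layer norms $\|\ww_\ell\|$ to ensure they grow at comparable rates, which in turn needs the standard step-size conditions and the assumption that iterates converge in direction (both assumed in the statement). Handling the genericity caveat (``almost all datasets'') amounts to excluding a measure-zero set where multiple frequencies become simultaneously active with degenerate alignment, so that the KKT stationary point is unique; I would defer to the argument in Gunasekar et al.\ for this exceptional set.
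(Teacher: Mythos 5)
The thesis does not actually prove this statement: \cref{thm:gunasekar_conv} is imported verbatim from \citet{gunasekar2018implicit} and used as a black box; the chapter's own contribution built on it is \cref{cor:max_robust_to_min_norm_conv}, whose proof in the appendix is only a dual-norm computation for Fourier-$\ell_1$. So your proposal has to be judged as a reconstruction of the cited proof. Your first two steps do match that proof's skeleton: circular convolution diagonalizes under the DFT, the end-to-end predictor's Fourier coefficients are coordinate-wise products of the layers' Fourier coefficients, and the exponential loss on separable data drives the norm to infinity so only the limit direction matters.

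The gap is in your third step. The argument of \citet{gunasekar2018implicit} is not a mirror-flow/potential argument on $\rho_k=|\hat w_k|^{2/L}$: they show that for homogeneous (degree-$L$) polynomial parameterizations the full parameter vector converges in direction to a first-order stationary (KKT) point of minimizing the $\ell_2$ norm of \emph{all layer parameters} subject to the margin constraints, and then translate this to predictor space through the per-frequency identity $\min\{\sum_{\ell}|u_{\ell,k}|^2 \,:\, \prod_{\ell}u_{\ell,k}=\hat w_k\}=L\,|\hat w_k|^{2/L}$ (AM--GM), which is exactly where the $\ell_{2/L}$ quasi-norm, and hence $\ell_1$ for $L=2$, appears; the ``stationary point only'' caveat for $L>2$ is inherited from the nonconvexity of this reduction. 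Your mirror-flow claim is not justified as stated: the preconditioner $\alpha_{k,t}=\sum_{\ell}\prod_{\ell'\neq\ell}|u_{\ell',k}|^2$ depends on the individual layer magnitudes, not on $\hat w_k$ alone, so the induced dynamics on $\rho_k$ are not autonomous unless the layers are balanced --- and balancedness is conserved only for gradient \emph{flow} from balanced initialization, whereas the theorem concerns gradient \emph{descent} from arbitrary initialization under step-size conditions. Moreover, potential/mirror arguments of the kind you invoke are native to the regression/interpolation setting and generally yield initialization-dependent regularizers; in the classification setting with exponentially tailed losses the established route to the asymptotic bias is precisely the margin/KKT analysis you would be bypassing. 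Your remarks that $L=2$ gives a convex problem (so the KKT point is a global minimizer), and your deferral of the directional-convergence and ``almost all datasets'' assumptions to the original paper, are consistent with the statement; but as written, step three would need to be replaced by (or reduced to) the homogeneous-parameterization KKT argument plus the AM--GM identity to constitute a proof.
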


We use this result to derive our \cref{cor:max_robust_to_min_norm_conv}.

\begin{restatable}[Maximally Robust to Perturbations with Bounded Fourier Coefficients]{cor}{cormaxrobustconv}
\label{cor:max_robust_to_min_norm_conv}
Consider the family of two-layer linear convolutional networks
and the gradient descent iterates, $\ww_t$,
minimizing the empirical risk.
For almost all linearly separable datasets under conditions
of \cref{thm:gunasekar_conv}, $\ww_t$ converges in direction
to a maximally robust classifier,
\begin{align*}
    \argmax_{\ww_1,\ldots,\ww_L}
    \{\varepsilon\,|\,y_i
    \varphi_{\text{conv}}(\xx_i+\ddelta;\{\ww_l\}_{l=1}^L) > 0
    ,\,~\forall i,\,\|\mathcal{F}(\ddelta)\|_\infty \leq \varepsilon\}\,.
\end{align*}
\end{restatable}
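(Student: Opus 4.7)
The plan is to reduce the statement to the linear case already handled by \cref{lem:max_robust_to_min_norm_linear}, by passing into the Fourier domain and combining with \cref{thm:gunasekar_conv}. First I would observe that a two-layer linear convolutional network is equivalent, as a function of its input, to a single linear model with weights $\ww = \ww_2 \star \ww_1$; this follows from associativity of circular convolution. Thus $\varphi_{\text{conv}}(\xx; \ww_1, \ww_2) = \langle \ww, \xx\rangle$ and the constraint $y_i\,\varphi_{\text{conv}}(\xx_i + \ddelta; \ww_1, \ww_2) > 0$ is equivalent to $y_i\,\langle \ww, \xx_i + \ddelta\rangle > 0$.

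Next I would invoke \cref{thm:gunasekar_conv} at $L=2$, which asserts that the gradient descent iterates $\ww_t$ converge in direction to a solution of $\argmin_\ww \{ \|\mathcal{F}(\ww)\|_1 \,|\, y_i \langle \ww, \xx_i\rangle \geq 1,\ \forall i\}$. I would then rewrite every inner product using Parseval/Plancherel: since $\ww$ and $\xx_i$ are real, $\langle \ww, \xx_i\rangle = \mathrm{Re}\langle \mathcal{F}(\ww), \mathcal{F}(\xx_i)\rangle$ and likewise for $\langle \ww, \ddelta\rangle$. Letting $\tilde\ww = \mathcal{F}(\ww)$, $\tilde\xx_i = \mathcal{F}(\xx_i)$, and $\tilde\ddelta = \mathcal{F}(\ddelta)$, the minimization becomes a minimum-$\ell_1$-norm problem in the Fourier variables under the linear constraints $y_i\,\mathrm{Re}\langle \tilde\ww, \tilde\xx_i\rangle \geq 1$.

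Now I would apply \cref{lem:max_robust_to_min_norm_linear} in the Fourier representation with norm $\|\cdot\| = \|\cdot\|_\infty$, whose dual is $\|\cdot\|_1$. The lemma then equates the minimum-$\ell_1$-Fourier-norm classifier with the maximally robust classifier against Fourier perturbations $\tilde\ddelta$ satisfying $\|\tilde\ddelta\|_\infty \leq \varepsilon$. Unitarity of $\mathcal{F}$ (a bijection between real conjugate-symmetric time-domain signals and their Fourier coefficients) means perturbations $\tilde\ddelta$ are in one-to-one correspondence with time-domain perturbations $\ddelta$ via $\tilde\ddelta = \mathcal{F}(\ddelta)$, so robustness in Fourier $\ell_\infty$ translates back to robustness against $\ddelta$ with $\|\mathcal{F}(\ddelta)\|_\infty \leq \varepsilon$. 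Chaining the two equivalences yields the claim.

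The main obstacle is ensuring the passage to the Fourier domain is done carefully given that \cref{lem:max_robust_to_min_norm_linear} is written for real vectors while $\mathcal{F}(\ww)$ is complex. Two subtleties arise: one must track that $\mathrm{Re}\langle \cdot, \cdot\rangle$ on the conjugate-symmetric subspace defines a real inner product whose induced $\ell_\infty$ and $\ell_1$ norms on magnitudes remain dual (a direct application of Hölder on complex sequences), and one must verify that the worst-case perturbation achieving $|\langle \mathcal{F}(\ww), \mathcal{F}(\ddelta)\rangle| = \|\mathcal{F}(\ww)\|_1 \|\mathcal{F}(\ddelta)\|_\infty$ can be realized by a $\tilde\ddelta$ that is the Fourier transform of some real $\ddelta$, i.e., is itself conjugate-symmetric. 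Both reduce to aligning the phase of $\tilde\ddelta$ with that of $\tilde\ww$ in a symmetric way, so the reduction goes through and the corollary follows.
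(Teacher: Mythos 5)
Your proposal is correct in substance, and its reliance on the duality of $\|\mathcal{F}(\cdot)\|_1$ and $\|\mathcal{F}(\cdot)\|_\infty$ is exactly the engine that drives the paper's argument, but you frame the application of \cref{lem:max_robust_to_min_norm_linear} differently. The paper stays entirely in the time domain: it treats $\ddelta\mapsto\|\mathcal{F}(\ddelta)\|_\infty$ as a norm on $\R^d$ and computes its dual directly, showing
\begin{equation*}
\|\mathcal{F}(\ww)\|_1 \;=\; \sup_{\|\mathcal{F}(\ddelta)\|_\infty\leq 1} |\langle\ww,\ddelta\rangle|
\end{equation*}
via the change of variables $\ddelta\mapsto\bF^\ast\ddelta$ and unitarity $\bF^{-1}=\bF^\ast$. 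With that identity in hand, \cref{lem:max_robust_to_min_norm_linear} is invoked as written, with no need to interpret it over complex vectors. You instead push the entire problem — weights, data, perturbations — into the Fourier domain, apply the lemma there with the pair $(\ell_\infty,\ell_1)$, and transport back through the unitary correspondence. That route requires re-reading the lemma over the conjugate-symmetric subspace of $\C^d$ (equivalently a real $d$-dimensional space), and you correctly flag the two issues this raises: that the $\ell_1$/$\ell_\infty$ duality restricted to that subspace still holds, and that the worst-case $\tilde\ddelta$ attaining Hölder equality can be chosen conjugate-symmetric because $\mathcal{F}(\ww)$ itself is. Your phase-alignment argument resolves both. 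In the end the two routes establish the same duality identity and both apply the lemma once; the paper's version is shorter because it never leaves $\R^d$, while yours makes the change of basis explicit at the cost of having to justify transplanting the lemma into a complex-valued representation. As a minor polish, your opening observation that $\varphi_{\text{conv}}$ collapses to $\langle\ww,\cdot\rangle$ with $\ww=\ww_2\star\ww_1$ is a useful explicit step that the paper leaves to the linearization already noted after the definition of linear convolutional networks.
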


Proof in \cref{proof:max_robust_to_min_norm_conv}.
\cref{cor:max_robust_to_min_norm_conv} implies
that, at no additional cost,
linear convolutional models are already maximally robust,
but w.r.t.\ perturbations in the Fourier domain.
We call attacks with $\ell_p$ constraints in the
Fourier domain \textit{Fourier-$\ell_p$} attacks.
\cref{sec:unit_norm_balls} depicts various norm-balls
in $3$D to illustrate the significant geometrical difference
between the Fourier-$\ell_\infty$ and other commonly used norm-balls for adversarial robustness.
One way to understand \cref{cor:max_robust_to_min_norm_conv}
is to think of perturbations that succeed in fooling a
linear convolutional network.
Any such
adversarial perturbation must have at least one frequency beyond the maximal robustness of the model.
This condition is satisfied for perturbations
with small $\ell_0$ norm in the spatial domain,
i.e., only a few pixels are perturbed and similarly by $\ell_1$ norm 
perturbations as they are constrained to be more sparse than other $\ell_p$ 
norm perturbations. Sparse perturbations in the spatial domain can be dense in 
the Fourier domain.

\subsection{Fourier Attacks}
    \begin{figure*}
\begin{algorithm}[H]
   \caption{Fourier-$\ell_\infty$ Attack
   (see \cref{sec:fourier_ops})}
   \label{alg:fourier_linf_attack}
\begin{algorithmic}
   \STATE {\bfseries Input:} data $\xx$, label $y$,
   loss function $\zeta$,
   classifier $\varphi$,
   perturbation size $\varepsilon$,
   number of attack steps $m$,
   dimensions $d$,
   Fourier transform $\mathcal{F}$
   \FOR{$k=1$ {\bfseries to} $m$}
       \STATE $\hat{\gg} = \mathcal{F}(\nabla_\xx \zeta(y\varphi(\xx)))$
       \STATE $[\ddelta]_i =
       \varepsilon\frac{[\hat{\gg}]_i}{|[\hat{\gg}]_i|},\,
       \forall i\in \{0,\ldots,d-1\}$
    \STATE $\xx = \xx + \mathcal{F}^{-1}(\ddelta)$
   \ENDFOR
\end{algorithmic}
\end{algorithm}
    \end{figure*}

The predominant motivation for designing new attacks is to
\emph{fool} existing models. 
In contrast, our results characterize the attacks 
that existing models \emph{perform best} 
against, as measured by maximal robustness.
Based on \cref{cor:max_robust_to_min_norm_conv} we
design the Fourier-$\ell_p$ attack to verify our results.
Some adversarial attacks exist with
Fourier constraints~\citep{tsuzuku2019structural,guo2019low}.
\citet{sharma2019effectiveness} proposed a Fourier-$\ell_p$ 
attack that includes Fourier constraints
in addition to $\ell_p$ constraints in the spatial domain.
Our theoretical results suggest a more general class of
attacks with only Fourier constraints.

The maximal {$\ell_p$-bounded} adversarial perturbation
against a linear model in \eqref{eq:find_adv} consists of real-valued
constraints with a closed form solution.
In contrast, maximal Fourier-$\ell_p$ has complex-valued constraints.
In \cref{sec:fourier_ops} we derive the Fourier-$\ell_\infty$
attack in closed form for linear models and provide the pseudo-code
in \cref{alg:fourier_linf_attack}.
To find perturbations as close as possible to
natural corruptions such as
blur, $\varepsilon$ can be a matrix of constraints
that is multiplied elementwise by $\ddelta$.
As our visualizations in \cref{fig:image_attack} show,
adversarial perturbations under bounded
Fourier-$\ell_\infty$ 
can be controlled to be high frequency and
concentrated on subtle details of the image, or low frequency
and global.
We observe that high frequency Fourier-$\ell_\infty$ attacks
succeed more easily with smaller perturbations compared
with low frequency attacks.
The relative success of our
band-limited Fourier attacks matches
the empirical observation
that the amplitude spectra
of common $\ell_p$ attacks are largely 
band-limited as such attacks succeed more easily~\citep{yin2019fourier}.

\begin{figure*}[t]
    \centering
    \begin{subfigure}[b]{.31\textwidth}
    \begin{tabular}{c}
    $\xx$\hfill
    $\xx+\ddelta$\hfill
    \,$\ddelta$\hfill
    \\
    \includegraphics[width=.95\textwidth]{\figdir/03_autoattack/neurips/standard_linf.pdf}
    \end{tabular}
    \caption{$\ell_\infty$ attack}
    \label{fig:image_attack_linf}
    \end{subfigure}
    \hfill
    \begin{subfigure}[b]{.21\textwidth}
    \begin{tabular}{c}
    $\xx+\ddelta$\hfill
    \,$\ddelta$\hfill
    \\
    \includegraphics[width=.95\textwidth]{\figdir/03_autoattack/neurips/standard_dftinf_any.pdf}
    \end{tabular}
    \caption{Fourier-$\ell_\infty$ attack}
    \label{fig:image_attack_dftinf_any}
    \end{subfigure}
    \hfill
    \begin{subfigure}[b]{.21\textwidth}
    \begin{tabular}{c}
    $\xx+\ddelta$\hfill
    \,$\ddelta$\hfill
    \\
    \includegraphics[width=.95\textwidth]{\figdir/03_autoattack/neurips/standard_dftinf_highf.pdf}
    \end{tabular}
    \caption{High freq. F-$\ell_\infty$}
    \label{fig:image_attack_dftinf_highf}
    \end{subfigure}
    \hfill
    \begin{subfigure}[b]{.21\textwidth}
    \begin{tabular}{c}
    $\xx+\ddelta$\hfill
    \,$\ddelta$\hfill
    \\
    \includegraphics[width=.95\textwidth]{\figdir/03_autoattack/neurips/standard_dftinf_lowf.pdf}
    \end{tabular}
    \caption{Low Freq. F-$\ell_\infty$}
    \label{fig:image_attack_dftinf_lowf}
    \end{subfigure}
    \hfill
    \caption{%
    \textbf{Adversarial attacks ($\ell_\infty$ and Fourier-$\ell_\infty$)
    against CIFAR-10 classification models.}
    Fourier-$\ell_\infty$ perturbations
    (\subref{fig:image_attack_dftinf_any}) in the spatial
    domain are concentrated around subtle details of the object
    (darker means stronger perturbation).
    In contrast, $\ell_\infty$ perturbations
    (\subref{fig:image_attack_linf})
    are perceived by people as random noise.
    Fourier-$\ell_\infty$ can also be controlled to be
    high or low frequency
    (\subref{fig:image_attack_dftinf_highf},
    \subref{fig:image_attack_dftinf_lowf}). It is more difficult to attack a standard model
    with only low frequency perturbations
    (for all attacks $\varepsilon=8/255$ but for low frequency Fourier-$\ell_\infty$ $\varepsilon=50/255$, otherwise
    attack fails).
    \cref{sec:fourier_vis} shows 
    visualizations for variety of models in RobustBench.
    }
    \label{fig:image_attack}
\end{figure*}

\section{Explicit Regularization}
\label{sec:explicit_reg}

Above we discussed the impact of optimization method
and model architecture on robustness. Here, 
we discuss explicit regularization as
another choice that affects robustness.

\begin{defn}[Regularized Classification]
\label{def:reg_classifier}
The regularized empirical risk minimization problem
for linear classification
is defined as
    \begin{align*}
        \hat{\ww}(\lambda) &=
\argmin_\ww \E_{(\xx,y) \sim \mathcal{D}}
\zeta(y\ww^\top \xx) + \lambda \|\ww\|,
    \end{align*}
where $\lambda$ denotes a regularization constant,
$\zeta$ is a monotone loss function,
and $\mathcal{D}$ is a dataset. For simplicity we assume
this problem has a unique solution while the original ERM can
have multiple solutions.
\end{defn}

\begin{thm}[Maximum Margin Classifier using Regularization
(\citet{rosset2004margin}, Theorem 2.1)]
\label{thm:rosset2004_reg}
Consider
linearly separable finite
datasets and monotonically non-increasing
loss functions. Then as $\lambda\rightarrow 0$,
the sequence of solutions, $\hat{\ww}(\lambda)$,
to the regularized problem  in
\cref{def:reg_classifier}, 
converges in direction to a maximum margin classifier
as defined in \eqref{eq:max_margin}. Moreover, if the maximum margin
classifier is unique,
\begin{align}
\lim_{\lambda\rightarrow0}
\frac{\hat{\ww}(\lambda)}{\|\hat{\ww}(\lambda)\|}
&= \argmax_{\ww : \|\ww\| \leq 1}
\min_i y_i \ww^\top \xx_i\,.
\end{align}
\end{thm}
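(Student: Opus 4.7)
The plan is to translate the limiting behavior of $\hat{\ww}(\lambda)$ into a statement about margins, by exploiting the fact that, under linear separability and a monotonically non-increasing loss, decreasing $\lambda$ makes norm regularization negligible and effectively converts the regularized objective into an implicit norm-constrained loss-minimization problem on a sphere whose radius tends to infinity. The key ingredients are (i) divergence of $\|\hat{\ww}(\lambda)\|$, (ii) compactness of the unit sphere with respect to $\|\cdot\|$, and (iii) a pointwise comparison between the normalized solution and the maximum-margin direction.

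First, I would show that $\|\hat{\ww}(\lambda)\|\to\infty$ as $\lambda\to 0$. Because the data is linearly separable, there exists $\ww^{\circ}$ with $y_i\,{\ww^{\circ}}^{\!\top}\xx_i>0$ for every $i$. For any $t>0$, evaluating the regularized objective at $t\,\ww^{\circ}$ gives a loss term $\E\,\zeta(t\,y\,{\ww^{\circ}}^{\!\top}\xx)$ that decreases monotonically to some limit (to $0$ for strictly decreasing losses) as $t\to\infty$. Hence the optimal objective value must approach the infimum of the loss, which forces the norm of the minimizer to grow unboundedly — otherwise a strictly better competitor can be produced by rescaling $\ww^{\circ}$.

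Next, write $\hat{\vv}(\lambda)\;\defas\;\hat{\ww}(\lambda)/\|\hat{\ww}(\lambda)\|$ and $r(\lambda)\;\defas\;\|\hat{\ww}(\lambda)\|$. By compactness of the unit sphere, any sequence $\lambda_k\to 0$ admits a subsequence along which $\hat{\vv}(\lambda_k)\to\tilde{\vv}$ for some $\tilde{\vv}$ with $\|\tilde{\vv}\|=1$. Let $\ww^{\ast}$ denote the maximum-margin classifier of norm $\leq 1$ from \eqref{eq:max_margin}, with margin $\varepsilon^{\ast}=\min_i y_i\,{\ww^{\ast}}^{\!\top}\xx_i>0$, and let $\tilde{\varepsilon}=\min_i y_i\,\tilde{\vv}^{\!\top}\xx_i$. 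Optimality of $\hat{\ww}(\lambda)$ applied to the competitor $r(\lambda)\,\ww^{\ast}$ (which has the same regularization value) gives
\begin{align}
\E\,\zeta\!\bigl(r(\lambda)\,y\,\hat{\vv}(\lambda)^{\!\top}\xx\bigr)
\;\leq\;
\E\,\zeta\!\bigl(r(\lambda)\,y\,{\ww^{\ast}}^{\!\top}\xx\bigr).
\end{align}
The hard part — and where the argument really lives — is turning this loss inequality into the margin inequality $\tilde{\varepsilon}\geq\varepsilon^{\ast}$, which together with the definition of $\ww^{\ast}$ forces $\tilde{\varepsilon}=\varepsilon^{\ast}$. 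For the exponential loss this is transparent because $\E\,\zeta(r\,y\,\vv^{\!\top}\xx)\sim c_{\vv}\exp(-r\min_i y_i\vv^{\!\top}\xx_i)$ as $r\to\infty$, so taking logarithms, dividing by $r(\lambda)$, and passing to the limit yields the desired comparison directly. For general monotonically non-increasing $\zeta$ one instead picks a threshold $\tau$ slightly below $\varepsilon^{\ast}$ and argues that if $\tilde{\varepsilon}<\tau$, then at least one training point contributes a loss bounded below by $\zeta(r(\lambda)\tau)$ on the left-hand side, which — by monotonicity and the strict margin gap — eventually exceeds the entire right-hand side; this contradiction gives $\tilde{\varepsilon}\geq\varepsilon^{\ast}$.

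Finally, having shown every subsequential limit direction $\tilde{\vv}$ is a maximum-margin classifier, the first assertion (convergence of the margin of $\hat{\vv}(\lambda)$ to $\varepsilon^{\ast}$) follows from continuity of $\vv\mapsto\min_i y_i\vv^{\!\top}\xx_i$ on the unit sphere. If the maximum-margin classifier is unique, then the limit set of $\{\hat{\vv}(\lambda)\}$ is the singleton $\{\ww^{\ast}\}$, which, combined with compactness, yields $\hat{\vv}(\lambda)\to\ww^{\ast}$ and establishes the displayed limit. I expect the main obstacle to be handling loss functions that are merely monotonically non-increasing (rather than strictly decreasing or having a clean asymptotic expansion), since one cannot rely on sharp exponential-style tail behavior; this is where the threshold/contradiction argument above is essential.
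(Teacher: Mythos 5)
The paper never proves this statement itself: it is quoted from \citet{rosset2004margin} (their Theorem 2.1), with the only original remark being that the $\ell_p$-norm proof carries over verbatim to any convex norm. So your proposal has to be measured against that original argument, and your skeleton for exponential-type losses (divergence of the norm, compactness of the unit sphere, comparison against the rescaled maximum-margin direction, then an asymptotic comparison of losses to transfer the inequality to margins) is indeed essentially the Rosset--Zhu--Hastie route.

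There are, however, two genuine gaps, both located exactly where you say ``the argument really lives.'' First, the preliminary claim $\|\hat{\ww}(\lambda)\|\to\infty$ is false for monotone losses that attain their infimum at a finite argument, e.g.\ the hinge loss: there the regularized minimizers stay bounded (the norm converges to $1/\varepsilon^\ast$), rescaling a separator eventually buys nothing, and the entire ``radius tends to infinity'' framework collapses; that case needs a separate finite-threshold analysis, which the cited theorem's hypothesis is designed to cover. Second, and more seriously, the threshold/contradiction step does not follow from monotonicity. The comparison you derive only yields, for the worst training point, $\zeta(r\tau)\le n\,\zeta(r\varepsilon^\ast)$ with $n$ the number of training points, and to contradict $\tau<\varepsilon^\ast$ you need the ratio $\zeta\bigl(t(1-\epsilon)\bigr)/\zeta(t)$ to diverge as $t$ grows---precisely the tail condition assumed by \citet{rosset2004margin}. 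For a merely monotone (even strictly decreasing) loss with, say, a polynomial tail, $\zeta(r\tau)\le n\,\zeta(r\varepsilon^\ast)$ is perfectly consistent with $\tau<\varepsilon^\ast$, and in fact the margin-maximization conclusion itself fails for such losses, so no argument from monotonicity alone can close this step. In short, your proof is sound for the exponential loss and, once the tail condition is made explicit, for the loss class the cited theorem actually covers; the ``general monotonically non-increasing'' extension you sketch is not a technicality to be absorbed by a threshold trick but exactly the missing hypothesis.
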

The original proof in \citep{rosset2004margin}
was given specifically for $\ell_p$
norms, however we observe that their proof only requires
convexity of the norm, so we state it more generally.
Quasi-norms such as $\ell_p$
for $p<1$ are not covered by this theorem. In addition,
the condition on the loss function is weaker than
our strict monotonic decreasing condition as shown
in \citep[Appendix A]{nacson2019convergence}.

We use this result to derive our \cref{cor:reg_is_max_margin}.

\begin{cor}[Maximally Robust Classifier via
Infinitesimal Regularization]
\label{cor:reg_is_max_margin}
For linearly separable data, under conditions of
\cref{thm:rosset2004_reg},
the sequence of solutions to regularized classification problems
converges in direction to a maximally robust classifier. That is,
$\lim_{\lambda\rightarrow 0}\hat{\ww}(\lambda)/\|\hat{\ww}(\lambda)\|$
converges to a solution of
$\argmax_{\ww}
\{\varepsilon\,|\,y_i \ww^\top (\xx_i + \ddelta) > 0 
,\,~\forall i,\,\|\ddelta\|_\ast \leq \varepsilon\}$.
\end{cor}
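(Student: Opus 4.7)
The plan is to observe that this corollary is essentially a composition of two results already in hand: \cref{thm:rosset2004_reg} on the limit of regularized classifiers, and \cref{lem:max_robust_to_min_norm_linear} on the equivalence between maximum margin and maximal robustness. Neither step requires new machinery; the proof should be about three lines once the norm/dual-norm bookkeeping is set up. The only thing worth stating carefully is that the regularizer $\|\ww\|$ in \cref{def:reg_classifier} is the same norm controlling the margin in \eqref{eq:max_margin}, while its dual $\|\cdot\|_\ast$ is what constrains the adversarial perturbations in \eqref{eq:max_robust}.

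First, I would invoke \cref{thm:rosset2004_reg} directly. Since the data is linearly separable and the loss $\zeta$ satisfies the monotonicity hypothesis of that theorem, the sequence $\hat{\ww}(\lambda)/\|\hat{\ww}(\lambda)\|$ converges (as $\lambda\to 0$) to a solution of
\begin{equation*}
    \argmax_{\ww\,:\,\|\ww\|\leq 1} \min_i y_i \ww^\top \xx_i,
\end{equation*}
i.e., a maximum margin classifier with respect to the norm $\|\cdot\|$ used in the regularizer. If this maximizer is not unique, the statement is that any limit point of the normalized sequence lies in the set of maximum margin classifiers, which suffices for the corollary (which allows any one such classifier).

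Second, I would apply \cref{lem:max_robust_to_min_norm_linear} to identify this limit direction with a maximally robust classifier. The lemma gives the equivalence between the maximum margin problem (with weights constrained in $\|\cdot\|$) and the maximally robust classification problem (with adversarial perturbations constrained in the dual norm $\|\cdot\|_\ast$). Composing the two facts, the limit direction is a solution of
\begin{equation*}
    \argmax_{\ww}\,\{\varepsilon\,|\, y_i \ww^\top(\xx_i+\ddelta) > 0,\ \forall i,\ \|\ddelta\|_\ast \leq \varepsilon\},
\end{equation*}
which is exactly the claim of \cref{cor:reg_is_max_margin}.

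The main subtlety, rather than obstacle, is purely notational: one must be careful that the norm appearing in the regularizer corresponds to the primal norm in \eqref{eq:max_margin} and hence controls perturbations in its dual. A secondary subtle point is that \cref{thm:rosset2004_reg} is stated under the assumption the maximum margin classifier is unique for the strong directional-convergence conclusion; in the non-unique case one interprets the conclusion as convergence of any subsequential limit into the argmax set, and since \cref{lem:max_robust_to_min_norm_linear} is an equivalence between two argmax sets rather than between unique minimizers, the corollary's ``converges in direction to a maximally robust classifier'' is recovered without further work.
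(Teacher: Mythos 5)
Your proposal is correct and follows the same two-step route as the paper's proof: apply \cref{thm:rosset2004_reg} to conclude the normalized regularized iterates converge in direction to a maximum margin classifier, then invoke \cref{lem:max_robust_to_min_norm_linear} to translate that into a maximally robust classifier with respect to the dual norm. Your additional remarks about the non-unique case and about the primal/dual bookkeeping are accurate and, if anything, slightly more careful than the paper's terse version.
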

\begin{proof}
By \cref{thm:rosset2004_reg},
the margin of the sequence of regularized classifiers,
$\min_i y_i \frac{\hat{\ww}(\lambda)^\top}{\|\hat{\ww}(\lambda)\|} \xx_i$,
converges to the maximum margin,
$\max_{\ww : \|\ww\| \leq 1} \min_i y_i \ww^\top \xx_i$.
By \cref{lem:max_robust_to_min_norm_linear},
any maximum margin classifier w.r.t. $\|\cdot\|$
gives a maximally robust classifier w.r.t. $\|\cdot\|_\ast$.
\end{proof}
Assuming the solution to the
regularized problem is unique,
the regularization term replaces other implicit
biases in minimizing the empirical risk.
The regularization coefficient
controls the trade-off between robustness
and standard accuracy.
The advantage of this formulation compared with
adversarial training is that we do not need the knowledge
of the maximally robust $\varepsilon$ to find
a maximally robust classifier. It suffices to choose
an infinitesimal regularization coefficient.
\citet[Theorem 4.1]{wei2019regularization}
generalized \cref{thm:rosset2004_reg} for a family of
classifiers that includes fully-connected networks with ReLU
non-linearities, which allows for potential extension of
\cref{cor:reg_is_max_margin} to non-linear models.
There remain gaps in this extension
(see \cref{sec:nonlinear}).

Explicit regularization has been explored
as an alternative approach to
adversarial training~\citep{hein2017formal,
sokolic2017robust,
zhang2019theoretically,
qin2019adversarial,
guo2020connections}.
To be clear, we do not propose a new regularization method but rather,
we provide a framework for deriving and guaranteeing
the robustness of existing and future regularization methods.

\section{Experiments}
\label{sec:experiments}

\begin{figure*}[t]
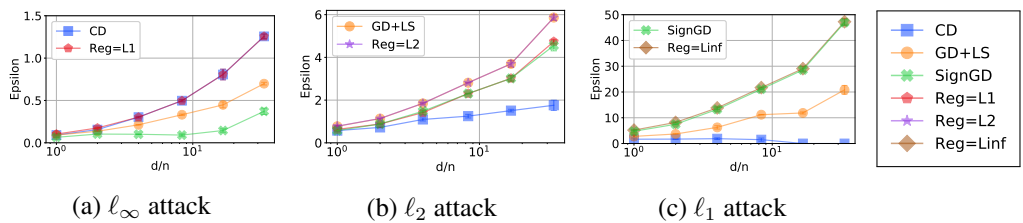

\centering
\begin{subfigure}[c]{0.26\textwidth}
\includegraphics[width=\textwidth]{\figdir/runs_linear_postnorm_randinit_linf/dim_num_train_risk_train_adv_linf_nolegend_notitle.pdf}
\caption{$\ell_\infty$ attack}
\label{fig:linear_max_eps_linf}
\end{subfigure}
\hspace{1pt}
\begin{subfigure}[c]{0.26\textwidth}
\includegraphics[width=\textwidth]{\figdir/runs_linear_postnorm_randinit_l2/dim_num_train_risk_train_adv_l2_nolegend_notitle.pdf}
\caption{$\ell_2$ attack}
\label{fig:linear_max_eps_l2}
\end{subfigure}
\hspace{1pt}
\begin{subfigure}[c]{0.26\textwidth}
\includegraphics[width=\textwidth]{\figdir/runs_linear_postnorm_randinit_l1/dim_num_train_risk_train_adv_l1_nolegend_notitle.pdf}
\caption{$\ell_1$ attack}
\label{fig:linear_max_eps_l1}
\end{subfigure}
\hspace{1pt}
\begin{subfigure}[c]{0.12\textwidth}
\vspace*{-25pt}
\fbox{%
\includegraphics[width=\textwidth]{\figdir/runs_linear_postnorm_randinit_l2/dim_num_train_risk_train_adv_l2_legend.pdf}}
\end{subfigure}
\caption{\textbf{Maximally robust perturbation size ($\varepsilon$)
for linear models against $\ell_\infty$, $\ell_2$, and $\ell_1$ attacks.}
For each attack, there exists one optimizer and one regularization method
that finds a maximally robust classifier
(inner legends). 
We compare Coordinate Descent (CD), Gradient Descent with Line Search
(GD+LS), Sign Gradient Descent (SignGD), and explicit $\ell_1$, $\ell_2$,
and $\ell_\infty$ regularization.
The gap between methods grows with the overparametrization ratio ($d/n$).
(More figures in \cref{sec:margin})}
\label{fig:linear_max_eps}
\end{figure*}

This section empirically compares approaches to finding
maximally robust classifiers.
\cref{sec:cifar10} evaluates the robustness of
CIFAR-10~\citep{krizhevsky2009learning}
image classifiers against our Fourier-$\ell_\infty$ attack. We
implement our attack in AutoAttack~\citep{croce2020reliable}
and  evaluate the robustness
of recent defenses available in
RobustBench~\citep{croce2020robustbench}.
Details of the experiments and additional
visualizations are in \cref{sec:exp_ext}.

\subsection{Maximally Robust to
\texorpdfstring{$\ell_\infty,\ell_2,\ell_1$}{Linf, L2, L1}, and Fourier-\texorpdfstring{$\ell_\infty$}{Linf} Bounded Attacks}
    \begin{figure*}
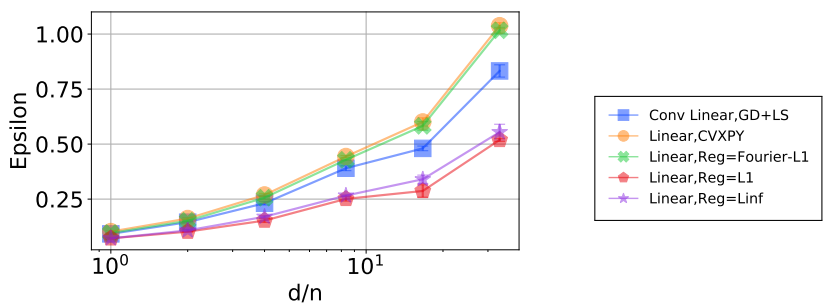

\centering
\begin{subfigure}[r]{.5\linewidth}
\includegraphics[width=\textwidth]{\figdir/runs_linear_conv_all_dftinf/dim_num_train_risk_train_adv_dftinf_nolegend_notitle.pdf}
\end{subfigure}
        \hspace*{20pt}
\begin{subfigure}[l]{.2\linewidth}
\fbox{%
\includegraphics[width=\textwidth]{\figdir/runs_linear_conv_all_dftinf/dim_num_train_risk_train_adv_dftinf_legend.pdf}}
\end{subfigure}
\caption{\textbf{Maximally robust $\varepsilon$ against
Fourier-$\ell_\infty$ attack.}
Explicit Fourier-$\ell_1$ regularization finds
a maximally robust classifier as it achieves similar robustness
as CVXPY's solution.
A linear convolutional model converges to a solution
but a small gap exists.}
\label{fig:conv}
    \end{figure*}

\cref{fig:linear_max_eps} and \ref{fig:conv} plot the maximally robust 
$\epsilon$ as a function of the overparametrization ratio $d/n$, where $d$ is 
the model dimension and $n$ is the number of data points.
\cref{fig:linear_max_eps} shows robustness against $\ell_\infty$, $\ell_2$, and $\ell_1$ attacks for linear models.
Coordinate descent and explicit $\ell_1$ regularization
find a maximally robust $\ell_\infty$ classifier.
Gradient descent and $\ell_2$ regularization
find a maximally robust $\ell_2$ classifier.
Sign gradient descent and $\ell_\infty$ regularization
find a maximally robust $\ell_1$ classifier.
The gap between margins grows as $d/n$ increases.
\cref{fig:conv} shows robustness against
Fourier-$\ell_\infty$ attack; training a 2-layer
linear convolutional network with gradient descent converges to a maximally 
robust classifier.  A gap exists between the linear convolutional network and 
the maximally robust classifier that should theoretically disappear with 
training budget greater than what we have used.

For these plots we synthesized linearly separable data focusing on overparametrized classification problems (i.e., $ d > n $).
Plotting the overparametrization ratio shows how robustness
changes as models become more complex.
We compare models by computing the maximal $\varepsilon$
against which they are robust, or equivalently,
the margin for linear models, $\min_i y_i\ww^\top\xx_i/\|\ww\|$.
As an alternative to the margin, we estimate the maximal $\varepsilon$
for a model by choosing a range of potential values,
generating adversarial samples, and finding the largest value against which the classification error is zero.
Generating adversarial samples involves optimization, and requires
more implementation detail compared with computing the margin.
Plots in this section are based on generating adversarial
samples to match common practice in the evaluation of non-linear models.
Matching margin plots are presented in \cref{sec:margin},
which compare against the solution found using CVXPY~\citep{diamond2016cvxpy}
and adversarial training given the maximal $\varepsilon$.
Our plots depict mean and error bars for $3$ random seeds.

\subsection{Plotting the Trade-offs}
\label{sec:tradeoffs}
\begin{figure*}[t]
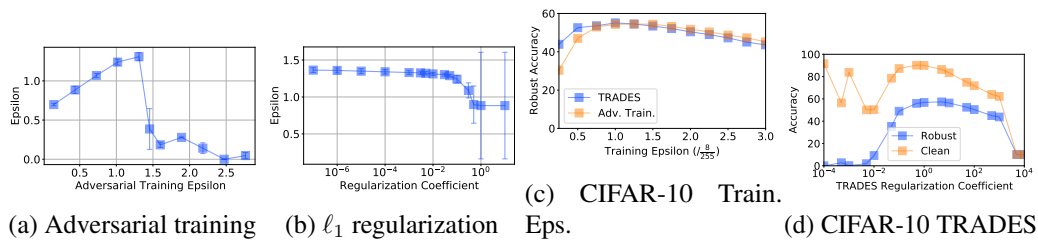

\centering
\begin{subfigure}[b]{0.24\textwidth}
\includegraphics[width=\textwidth]{\figdir/runs_linear_tradeoff_linf/optim_adv_train_eps_tot_risk_train_adv_linf_nolegend_notitle.pdf}
\caption{Adversarial training}
\label{fig:tradeoffs_at}
\end{subfigure}
\begin{subfigure}[b]{0.24\textwidth}
\includegraphics[width=\textwidth]{\figdir/runs_linear_tradeoff_linf/model_reg_coeff_risk_train_adv_linf_nolegend_notitle.pdf}
\caption{$\ell_1$ regularization}
\label{fig:tradeoffs_l1}
\end{subfigure}
\begin{subfigure}[b]{0.24\textwidth}
\includegraphics[width=\textwidth]{\figdir/04_cifar10/trades_at_eps_trend.pdf}
\caption{CIFAR-10 Train. Eps.}
\label{fig:tradeoffs_cifar10_eps}
\end{subfigure}
\begin{subfigure}[b]{0.24\textwidth}
\includegraphics[width=\textwidth]{\figdir/04_cifar10/trades_beta_trend.pdf}
\caption{CIFAR-10 TRADES}
\label{fig:tradeoffs_cifar10_trades}
\end{subfigure}
\caption{\textbf{Trade-off
in robustness against $\ell_\infty$ attack
in linear models and CIFAR-10.}
We plot the maximally robust $\varepsilon$ for adversarial
training and explicit regularization.
Robustness is controlled using $\varepsilon$ in adversarial training (\subref{fig:tradeoffs_at})
and regularization coefficient in explicit regularization (\subref{fig:tradeoffs_l1}).
Using adversarial training we have to search for the maximal
$\varepsilon$ but for explicit regularization it suffices
to choose a small regularization coefficient.
Similarly, on CIFAR-10,
the highest robustness at a fixed test $\varepsilon$
is achieved for $\varepsilon$ used during
training (\subref{fig:tradeoffs_cifar10_eps}).
In contrast to optimal linear regularizations,
TRADES shows degradation as the
regularization coefficient decreases
(\subref{fig:tradeoffs_cifar10_trades}).
Discussion in \cref{sec:tradeoffs}
}
\label{fig:tradeoffs_linf}
\end{figure*}

\cref{fig:tradeoffs_linf} illustrates the trade-off between
standard accuracy and adversarial robustness.
Adversarial training finds the maximally robust classifier
only if it is trained with the
knowledge of the maximally robust $\varepsilon$ (\cref{fig:tradeoffs_at}).
Without this knowledge, we have to search for the maximal $\varepsilon$
by training multiple models.
This adds further computational complexity to adversarial
training which performs an alternated optimization.
In contrast, explicit regularization converges to a
maximally robust classifier for a small enough regularization constant (\cref{fig:tradeoffs_l1}).

On CIFAR-10, we compare adversarial training with
the regularization method TRADEs~\citep{zhang2019theoretically}
following the state-of-the-art
best practices~\citep{gowal2020uncovering}.
Both methods depend on a constant $\varepsilon$
during training.
\cref{fig:tradeoffs_cifar10_eps} shows optimal robustness
is achieved for a model trained and tested with
the same $\varepsilon$. When the test
$\varepsilon$ is unknown, both methods need to search for
the optimal $\varepsilon$.
Given the optimal training $\varepsilon$,
\cref{fig:tradeoffs_cifar10_trades} investigates whether
TRADES performs similar to an optimal linear regularization
(observed in \cref{fig:tradeoffs_l1}),
that is the optimal robustness is achieved with infinitesimal
regularization.
In contrast to the linear regime, the robustness degrades with
smaller regularization.
We hypothesize that with enough model capacity,
using the optimal $\varepsilon$,
and sufficient training iterations, smaller
regularization should improve robustness.
That suggests that there is potential for improvement
in TRADES and better understanding of robustness in non-linear
models.

\subsection{CIFAR-10 Fourier-\texorpdfstring{$\ell_\infty$}{Linf} Robustness}
\label{sec:cifar10}
    \begin{figure*}
    \centering
    \includegraphics[width=0.5\linewidth]{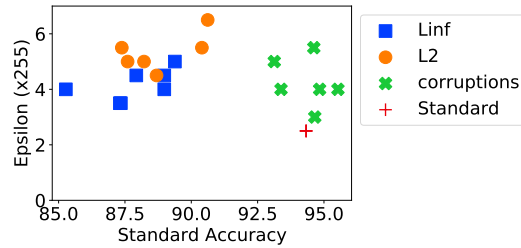}
    \caption{\textbf{Maximally robust
$\varepsilon$ against Fourier-$\ell_\infty$ for recent defenses.}
Color and shape denote the type of
robust training: adversarial ($\ell_2$  or $\ell_\infty$),
corruptions,
or standard training.
Fourier-$\ell_\infty$ is a strong attack against
current robust models.
}
    \label{fig:cifar10_acc}
    \end{figure*}

\cref{fig:cifar10_acc} reports the maximally robust $\varepsilon$
of image classification models on CIFAR-10.
We evaluate top defenses on the leaderboard of
RobustBench~\citep{croce2020robustbench}.
The attack methods are APGD-CE and APGD-DLR with default
hyper-parameters in RobustBench and $\varepsilon=8/255$.
Theoretical results do not provide guarantees beyond the maximally
robust $\varepsilon$.
Even robust models
against corruptions with no adversarial training achieve
similar robustness to $\ell_2$/$\ell_\infty$ models.
The maximal $\varepsilon$ is the largest
at which adversarial accuracy is no more than $1\%$ worse
than the standard accuracy.
All models have almost zero accuracy
against larger, but still perceptually small, perturbations
($\varepsilon=20/255$).
\cref{sec:fourier_vis} gives more examples of
Fourier-$\ell_\infty$ attacks and band-limited variations
similar to \cref{fig:image_attack} for robustly trained models, showing that
perturbations are qualitatively different from those against
the standard model.

\section{Related work}
\label{sec:related}

This paper bridges two bodies of work on adversarial robustness and
optimization bias. 
As such there are many related works,
the most relevant of which we discuss here.
Prior works either did not connect optimization bias
to adversarial robustness beyond
margin-maximization~\citep{ma2020increasing,ding2018max,elsayed2018large}
or only considered adversarial
training with a given perturbation size~\citep{li2019implicit}.

\paragraph{Robustness Trade-offs}
Most prior work defines the metric for robustness and generalization
using an expectation over the loss. Instead, we define robustness
as a set of classification constraints.
Our approach better matches the security perspective that
even a single inaccurate prediction is a vulnerability.
The limitation is explicit constraints
only ensure perfect accuracy near the training set.
Standard generalization remains to be studied using
other approaches such as those with assumptions on the data distribution.
Existing work has used assumptions about the data distribution to
achieve explicit trade-offs between robustness and standard generalization
~\citep{dobriban2020provable,
javanmard2020preciseb,%
javanmard2020precise,
raghunathan2020understanding,
tsipras2018robustness,
zhang2019theoretically,
schmidt2018adversarially,
fawzi2018adversarial,
fawzi2018analysis}.

\paragraph{Fourier Analysis of Robustness.}
Various observations have been made about Fourier properties
of adversarial perturbations against deep non-linear models~\citep{ilyas2019adversarial,
tsuzuku2019structural,
sharma2019effectiveness}.
\citet{yin2019fourier} showed that adversarial
training increases robustness to perturbations concentrated
at high frequencies and reduces robustness to perturbations
concentrated at low frequencies. \citet{ortiz2020hold}
also observed that the measured margin of classifiers
at high frequencies is larger than the margin at
low frequencies.
Our \cref{cor:max_robust_to_min_norm_conv}
does not distinguish between
low and high frequencies but we establish an exact characterization of
robustness.
\citet{caro2020using} hypothesized about
the implicit robustness to $\ell_1$ perturbations in the Fourier
domain while we prove maximal robustness to
Fourier-$\ell_\infty$ perturbations.
\paragraph{Architectural Robustness.}
An implication of our results is that robustness can be achieved
at a lower computational cost compared with adversarial training by
various architectural choices as recently explored~\citep{xie2020smooth,
galloway2019batch,
awais2020towards}.
Moreover, for architectural choices that align with human biases, standard
generalization can also improve~\citep{vasconcelos2020effective}.
Another potential future direction is to rethink $\ell_p$ robustness as an architectural
bias and find inspiration in the human visual system for
appropriate architectural choices.

\paragraph{Robust Optimization}
A robust counterpart to an optimization
problem considers uncertainty in the data and optimizes for the worst-case.
\citet{ben2009robust}
provided extensive formulations and discussions
on robust counterparts
to various convex optimization problems. Adversarial robustness
is one such robust counterpart and many other robust counterparts
could also be considered in deep learning. An example is
adversarial perturbations with different norm-ball constraints
at different training inputs.
\citet{madry2017towards}
observed the link between robust optimization
and adversarial robustness where the objective
is a min-max problem
that minimizes the worst-case loss. However, they did not
consider the more challenging problem of maximally robust
optimization that we revisit.

\paragraph{Implicit bias of optimization methods.}
In \cref{sec:background:bias} we discussed prior work on the implicit bias of 
optimization methods. Here we revisit prominent works and explain the 
connection to this chapter.

Minimizing the empirical risk for
an overparametrized model with more parameters than the training
data has multiple solutions.
\citet{zhang2017understanding} observed that overparametrized
deep models can even fit to randomly labeled training data, yet
given correct labels they consistently generalize to test data.
This behavior has been explained using the implicit bias of
optimization methods towards particular solutions.
\citet{gunasekar2018characterizing} proved that
minimizing the empirical risk using  steepest descent
and mirror descent have an implicit bias towards minimum norm
solutions in overparametrized linear classification.
\citet{gunasekar2018implicit} proved the implicit bias of
gradient descent in training linear convolutional classifiers
is towards minimum norm solutions in the Fourier domain that
depends on the number of layers.

Recent theory of generalization in deep learning, in particular
the double descent phenomenon, studies the generalization
properties of minimum norm solutions
for finite and noisy training sets~\citep{hastie2019surprises}.
Characterization of the double descent phenomenon relies on the
implicit bias of optimization methods
while using additional assumptions
about the data distribution.
In contrast, our results only rely on the implicit bias of optimization
and hence are independent of the data distribution.

\paragraph{Hypotheses.}
\citet{goodfellow2014explaining}  proposed the linearity
hypothesis that informally suggests
$\ell_p$ adversarial samples exist because deep learning
models converge to functions similar to linear models.
To improve robustness, they argued models have to be more non-linear.
Based on our framework, linear models are not inherently weak.
When trained, regularized, and parametrized
appropriately they can be robust to some degree, the extent of
which depends on the dataset.
\citet{gilmer2018adversarial} proposed adversarial spheres
as a toy example where a two layer neural network exists with perfect
standard and robust accuracy for non-zero perturbations. Yet,
training a randomly initialized model with gradient descent
and finite data does not converge to a robust model.
Based on our framework, we interpret this as an example
where the implicit bias of gradient descent is not towards
the ground-truth model, even though
there is no misalignment in the architecture.
It would be interesting to understand this implicit bias in future work.

\paragraph{Robustness to {$\ell_p$-bounded} attacks.}
Robustness is achieved when any perturbation to natural
inputs that changes a classifier's prediction also confuses a human.
\mbox{$\ell_p$-bounded} attacks are the first step in achieving adversarial
robustness.
\citet{tramer2020adaptive} have recently shown many recent robust
models only achieve spurious robustness against $\ell_\infty$ and
$\ell_1$ attacks.
\citet{croce2020reliable} showed that on image classification datasets
there is still a large gap in adversarial robustness to {$\ell_p$-bounded}
attacks and standard accuracy.
Robustness to multiple \mbox{$\ell_p$-bounded} perturbations through
adversarial training and its trade-offs has also been
analyzed~\citep{tramer2019adversarial,maini2020adversarial}.
\citet{sharif2018suitability, sen2019should} argue that none of
$\ell_0$, $\ell_1$, $\ell_\infty$, or SSIM are a perfect match for human
perception of similarity. That is for any such norm, for any
$\varepsilon$, there exists a perturbation such that humans classify
it differently.
Attacks based on other perceptual similarity metrics
exist~\citep{zhao2020adversarial,liu2019beyond}.
This shows that the quest for adversarial robustness should also
be seen as a quest for understanding human perception.

\paragraph{Robustness through Regularization}
Various regularization
methods have been proposed for adversarial robustness that
penalize the gradient norm and can be studied using the framework
of maximally robust classification.
\citet{lyu2015unified} proposed general $\ell_p$ norm regularization of
gradients.
\citet{hein2017formal} proposed the Cross-Lipschitz penalty
by regularizing the norm
of the difference between two gradient vectors of the function.
\citet{ross2018improving} proposed $\ell_2$ regularization
of the norm of the gradients.
\citet{sokolic2017robust} performed regularization of
Frobenius norm of the per-layer Jacobian.
\citet{moosavi2019robustness} proposed penalizing the curvature
of the loss function.
\citet{qin2019adversarial} proposed encouraging local linearity
by penalizing the error of local linearity.
\citet{simon2019first}
proposed regularization of the gradient norm where
the dual norm of the attack norm is used.
\citet{avery2020adversarial}
proposed Hessian regularization. 
\citet{guo2020connections} showed that some regularization methods
are equivalent or perform similarly in practice.
Strong gradient or curvature regularization methods can
suffer from gradient masking~\citep{avery2020adversarial}.

\paragraph{Certified Robustness.}
Adversarially trained models are empirically harder to attack than
standard models. 
But their robustness is not often provable.
Certifiably robust models seek to close this gap~\citep{hein2017formal,
wong2018provable,
cohen2019certified,%
gowal2018effectiveness,
salman2019convex}.
A model is certifiably
robust if for any input, it also provides an 
$\varepsilon$-certificate that guarantees robustness to any
perturbation within the $\varepsilon$-ball of the input.
In contrast, a maximally robust classifier finds a classifier
that is guaranteed to be robust to maximal $\varepsilon$ while
classifying all training data correctly. That allows
for data dependent robustness guarantees at test time.
In this work, we have not explored standard generalization guarantees.

\subsection{Investigating the Gap in the Convergence of Linear Convolutional 
Networks}
\label{sec:conv_gap}

We further investigated the gap between theory and experiment in 
\cref{fig:conv}, and our results are consistent with the observations of 
\citet[Section 
6]{yun2020unifying}.  The gap appears to be due to finite training time and it 
depends on the scale of the initialization and the learning rate.  We were able 
to eliminate the gap for low dimensional problems (e.g., $d=2$) by tuning the 
initialization scale.  For high-dimensional problems (e.g., $d=100$), it is 
more challenging to find the optimal initialization scale.  As suggested by 
\citet{yun2020unifying}, this is an important problem for future work of 
implicit optimization bias literature.

For completeness, we did test different aspects of our implementation to rule 
out other causes. Here is a list of potential causes we eliminated:
\begin{itemize}
    \item  The scaling of the DFT matrix is accurate for comparison with the 
        minimum-norm Fourier-$\ell_1$ solution.
    \item Numerical error does not seem to contribute to the gap as we do not 
        see an improvement by switching from Float32 to Float64.
    \item We thoroughly tested our implementation of the convolution operation 
        and linearization operation.
\end{itemize}

\section{Conclusion}
\label{sec:conclusion}

We demonstrated that the choice of optimizer,
neural network architecture,
and  regularizer,  significantly affect the adversarial 
robustness of linear neural networks. These results lead us to
a novel Fourier-$\ell_\infty$ attack with controllable spectral
properties applied against deep non-linear CIFAR-10 models.
Our results provide a framework, insights, and directions for improving robustness
of non-linear models through approaches other than adversarial training.

In the following, we summarize the hypotheses, results, and future work:
\begin{itemize}
    \item We revisited the definition of Maximally Robust Classification and 
        justified as a formulation of the problem of finding adversarially 
        robust models.  We argue that Maximally Robust Classification is an 
        alternative problem formulation possibly more challenging to solve than 
        standard min-max formulation of adversarial robustness. There is also 
        no ambiguity in the choice of hyper-parameters in the formulation of the 
        Maximally Robust Classifier.
    \item For linear models, we prove guaranteed maximal robustness achieved 
        only by the choice of the optimizer, regularization, or architecture.  
        We have achieved this through our Corollaries. For example, because of 
        our results, it can no longer be claimed that ``A regularization method 
        alone cannot achieve maximal robustness''. At least in the linear case 
        we showed that such a regularizer exists.
    \item We rigorously established the adversarial robustness of linear 
        convolutional neural networks in the Fourier domain. In that direction 
        our novel Fourier attack creates one potential direction for 
        understanding the robustness of non-linear models.
    \item We prove that under certain conditions no additional per-iteration 
        cost of solving an optimization problem is needed, yet such a solution 
        might take many iterations to find. We have not done computational 
        analysis of the methods studied in this chapter which can be done with 
        additional assumptions on the data distribution.  Our results only 
        require linear separability of the data.
    \item We have not proposed a novel defense for non-linear models. We 
        discuss future directions and challenges in 
        \cref{sec:related,sec:nonlinear}.  In particular, there is a growing 
        literature on the implicit bias of non-linear networks that can be used 
        to extend our
        results~\citep{chizat2020implicit,lyu2020gradient,ongie2020function}.   
        Non-linear solutions might require significantly different mathematical 
        tools and efforts from a wider community and take years to find.
    \item We do not claim that robustness guarantees achieved through the choice of 
        the optimizer, architecture, or regularization are sufficient for any given 
        model family, task or domain of application. For example, if the data 
        is not linearly separable, additional sacrifices need to be made (See 
        \cref{sec:max_robust_gen}).  Having said that, efficiently finding the 
        maximally robust classifier for deep non-linear models might be 
        achievable by appropriate choices of all these in addition to 
        adversarial training with an appropriate epsilon.
    \item There is a small gap in \cref{fig:conv} between theory and experiment 
        that might be due to a missing condition in the original theory of 
        \citet{gunasekar2018implicit}. We have provided additional discussion 
        in \cref{sec:conv_gap}.
\end{itemize}

    \chapter{Conclusion and Future Work}
\label{ch:conclusion}

In this thesis we discussed ideas for improving training efficiency and 
connections to robustness in deep learning.  \cref{ch:vsepp} showed that hard 
negatives are better than uniformly sampled negatives; they result in faster 
training and better generalization.  \cref{ch:gvar} proposed gradient 
clustering to reduce the gradient variance by automatically discovering and 
exploiting data diversity. The mixed results revealed gaps in our understanding 
of training deep learning models.  Finally, \cref{ch:robust} establishes 
a connection between robustness and optimization choices and shows that the two 
challenges are closely related.

What is the most efficient and robust training method in deep learning? This 
remains an open question for us. Various approaches focus on improving training 
efficiency and standard generalization but lack attention to adversarial 
robustness and biased data distributions. Interesting and vibrant approaches 
include few-shot learning, transfer learning, and meta 
learning~\citep{hochreiter2001learning,vinyals2016matching,finn2017model}.  On 
the other hand, contrastive learning and adversarial 
learning~\citep{hadsell2006dimensionality,goodfellow2014gan} are designed with 
only the objective of robustness to adversarial inputs or preset data 
transformations.

Our observations in \cref{ch:vsepp} are in favor of adversarial and robust 
representation learning approaches. Hard negatives in triplet losses are 
examples are adversarial inputs where only inputs in the training set define 
the natural data distribution. Adversarial learning and contrastive learning 
methods might also benefit from a similar observation that semi-hard negatives 
provide superior generalization than absolute hard negatives.  Curriculum 
learning~\citep{bengio2009curriculum} might provide more adjustable contrastive 
and adversarial samples. Learning multi-modal embeddings remain particularly 
propitious for efficient hard negative-based methods with no need for 
curriculum learning or generative adversarial 
networks~\citep{chen2021learning,wang2020consensus,awad2020trecvid}.  

Our results in \cref{ch:robust} suggest an alternative or complementary 
approach by design.  Instead of adversarial learning, under certain conditions, 
similar solutions can be found by appropriate architecture, optimizer and 
regularizer. The right architecture, e.g., convolution versus fully-connected 
neural networks, would significantly reduce the amount of adversarial training 
required. In some cases, adversarial training may become unnecessary.

In conclusion, this thesis raises more questions than it answers.
What follows are some possible directions for further research on related 
problems.

In \cref{ch:robust} we discuss the problem of maximally robust classification 
through which we transferred results on implicit and explicit bias of 
optimization methods to adversarial robustness. There is potential in extending 
this connection to transfer results in both directions. One example is to 
transfer provable guarantees for deep non-linear models to infer the implicit 
bias of various optimization methods that have not yet been derived.  
Exploiting the connection in the opposite direction, there have been recent 
results on the implicit bias of ReLU networks and the directional convergence 
of deep linear networks that imply further maximal robustness results.

In \cref{ch:gvar} we designed an efficient gradient clustering method. An 
application of gradient clustering is in automatically detecting imbalanced 
data according to gradient information and modifying the sampling to adjust and 
calibrate training. For example, by changing the learning paradigm from 
Empirical Risk Minimization to Distributionally Robust Optimization we should 
be able to formalize the design of distributionally robust sampling methods.  
Worst-group error is an evaluation metric commonly used in distributionally 
robust optimization that is a robust counterpart to the average error in 
expected risk minimzation.  In SGD with gradient clustering we used 
a reweighting to achieve unbiased gradient estimates that would guarantee 
convergence to the same optimum as SGD with uniform sampling.  Recently it has 
been shown that upweighting the minority class negatively impacts the minority 
error while subsampling the majority class improves 
it~\citep{sagawa2020investigation}.  Inspired by this observation one might 
consider gradient clustering for distributionally robust optimization without 
reweighting samples.  The gradient clustering sampler would then intrinsically 
find majority groups of data and sample the same number for all majority and 
minority groups.

An extension to both \cref{ch:robust,ch:gvar} is to understand the implications 
of implicit bias in continual and lifelong learning.  Continual learning is 
a challenging task in machine learning restricted by catastrophic forgetting in 
training and lack of backward and forward transfer. In continual learning and 
for memoryless methods, we can describe the impact of prior learning tasks as 
a change in the initialization of the future learnings. Based on the prior work 
discussed in \cref{ch:robust}, we know in certain problems, the initialization 
changes the solution found. It is possible to connect the implicit bias and 
initialization for sequential linear regression tasks and design an optimal 
method of initialization depending on the similarity of tasks. Extending this 
idea to continual learning benchmarks is an interesting future direction.

Another extension to \cref{ch:robust} is to characterize the implicit 
robustness of preconditioned optimization methods~\citep{amari2020does}.  The 
implicit bias of preconditioned methods is different from gradient descent and 
depends on how the preconditioning is estimated. The alternative implicit bias 
results in an alternative implicit robustness that may be allow robustness to 
a new family of practical adversarial perturbations.

Finally, in adversarial learning and contrastive learning, the performance 
depends on the strength of adversarial inputs and data 
augmentations~\citep{chen2020improved}. One possible direction for future work 
is the exact characterization of the dependence on adversarial strength in 
simple learning settings that would allow better justifications and recipes for 
curriculum learning. Gradient-aware methods such as variations of gradient 
clustering proposed in \cref{ch:gvar} should be useful in the direction of 
adversarial and contrastive learning as recent work 
corroborates~\citet{ma2021active}.

  \appendix
    \chapter{Appendices to the Chapter on Gradient Clustering}
    \def\figdir{ch-gvar-tex/figures}

\section{Additional Details of Gradient Clustering}
\subsection{Proof of \texorpdfstring{\cref{thm:gvar}}{Proposition 
4.3.1}}\label{sec:gvar_proof}
The gradient estimator, $\gC$, is unbiased for any partitioning of data, i.e., 
equal to the average gradient of the training set,
\begin{align}
    \E[\gC]
    &
    = \frac{1}{N} \SK \Nk \E[\gpk]
    = \frac{1}{N} \SK \Nk \left(\frac{1}{\Nk} \SJ \bm{g}^{(k)}_j\right)
    = \underbrace{\frac{1}{N} \SK \SJ \bm{g}^{(k)}_j}_{(*)}
    = \frac{1}{N} \SI \gi = \bm{g}~,\nonumber
\end{align}
where we use the fact that the expectation of a random sample drawn uniformly 
from a subset is equal to the expectation of the average of samples from that 
subset.  Also note that the gradient of every training example appears once in 
$(*)$. %

Although partitioning does not affect the bias of $\gC$, it does affect the 
variance,
\begin{align}
    \V[\gC]
    &= \frac{1}{N^2} \left(\SI \Nk^2 \V[\gpk]
    + 2\SK\sum_{\hat{k}=1}^{N_{\hat{k}}}
    \Nk N_{\hat{k}} \C[\gpk, \bm{g}^{(\hat{k})}] \right) %
    = \frac{1}{N^2} \SK \Nk^2 \V[\gpk]%
\end{align}
where the variance is defined as the trace of the covariance matrix. Since we 
assume the training set is sampled i.i.d., the covariance between gradients of 
any two samples is zero.  In a dataset with duplicate samples, the gradients of 
duplicates will be clustered into one cluster with zero variance if mingled 
with no other data points.

\section{Additional Details of Efficient \GC}
\subsection{Convolutional Layers}\label{sec:conv}

In neural networks, the convolution operation is performed as an inner product 
between a set of weights
$\lparam\in \R^{h\times w\times \hat{\Di}\times \Do}$,
namely kernels, by patches of size
$h\times w$
in the input.  Assuming that we have preprocessed the input by extracting 
patches, the gradient w.r.t.\ $\lparam$ is
    $ \gb = {\sum_t \bm{g}_{b,t} }$,
$\bm{g}_{b,t}\in \R^{\Di\times \Do}$ is the gradient at the spatial location 
$t\in T$ and
$\Di=h \times w \times \hat{\Di}$
is the flattened dimension of a patch.
The gradient at spatial location $t$ is computed as $\bm{g}_{b,t}= \AAA_{b,t} 
\DDD_{b,t}^\top$.

Like the fully-connected case, we use a rank-$1$ approximation to the cluster 
centers in a convolution layer, defining $\Ck=\ck \dk^\top$.  As such, \bAU steps 
are performed efficiently. For the \bA step we rewrite
${\vvv{\Ck}\odot \vvv{\gb}}$,
\begin{align}
    \vvv{\Ck}\odot \vvv{\sum_t \AAA_{b,t} \DDD_{b,t}^\top}
    &= \sum_{u,v} (\bm{c}_{ku} ~ \bm{d}_{kv} ~~ (\sum_t \AAA_{btu} ~ \DDD_{btv}))
    \label{eq:conv1}\\
    &= \sum_t (\sum_u \bm{c}_{ku} ~ \AAA_{btu}) (\sum_{v} \bm{d}_{kv} 
    ~ \DDD_{btv})
    ,
    \label{eq:conv2}
\end{align}
where the input dimension is indexed by $u$ and the output dimension is indexed 
by $v$. Eqs.~\ref{eq:conv1} and~\ref{eq:conv2} provide two ways of computing 
the inner-product, where we first compute the inner sums, then the outer sum. The 
efficiency of each formulation depends on the size of the kernel and layer's 
input and output dimensions.

\subsection{Complexity Analysis}\label{sec:comp}
{\renewcommand{\arraystretch}{1.5}
\begin{table}[t]
    \centering
    \begin{tabular}{ccc}
        \toprule
        {\bf Operation} & {\bf FC Complexity} & {\bf Conv Complexity}\\
        \midrule
        $\bm{C}\odot \bm{g}$ & $K B (\Di+\Do)$
        & \makecell{%
            Eq.~\ref{eq:conv1}: $B(T+K)\Di \Do$\\
         Eq.~\ref{eq:conv2}: $ B T K (\Di + \Do)$}\\
        $\bm{C}\odot \bm{C}$ & $K (\Di+\Do)$ & $K (\Di+\Do)$\\ %
        $\bm{g}\odot \bm{g}$ & $B (\Di+\Do)$
        & \makecell{%
            Eq.~\ref{eq:conv1}: $B T \Di \Do,$ \\
        Eq.~\ref{eq:conv2}: $B T^2 (\Di + \Do)$}\\
        \midrule
        Back-prop & $B \Di \Do$ & $B T \Di \Do$\\
        \bA step & $K B (\Di+\Do)$ & See Sec.~\ref{sec:comp}\\
        \bU step & $B (\Di + \Do)$ & $B(\Di +\Do)$\\
        \bottomrule
    \end{tabular}
    \caption{Complexity of \Gluster compared to the cost of back-prop.}
    \label{tab:comp}
\end{table}
}

\Gluster, described in \cref{alg:full}, performs two sets of operations, 
namely, the cluster center updates (\bU step), and the assignment update of 
data to clusters (\bA step). \bA steps instantly affect the optimization by 
changing the sampling process.  As such, we perform an \bA step every few 
epochs and change the sampling right after. In contrast, the \bU step can be 
done in parallel and more frequently than the \bA step, or online using 
mini-batch updates. The cost of both steps is amortized over optimization 
steps.

Table~\ref{tab:comp} summarizes the run-time complexity of \Gluster compared to
the cost of single SGD step.
The \bU step is always cheaper than a single back-prop step.
The \bA step is cheaper for fully-connected layers if ${K<\min(\Di,\Do)}$.

For convolutional layers, we have two ways to compute the terms in the \bA step 
(Eqs.~\ref{eq:conv1} and~\ref{eq:conv2}).  For ${\bm{C}\odot \bm{g}}$, if
${\min(T, K)}<{K<\min(\Di, \Do)}$, Eq.~\ref{eq:conv2} is more efficient. For 
${\bm{g}\odot \bm{g}}$, Eq.~\ref{eq:conv2} is more efficient if $T<\min(I, O)$.  
If $K<T$, both methods have lower complexity than a single back-prop step. If 
we did not have the $\Nk$ multiplier in the \bA step, we could ignore the 
computation of the norm of the gradients, and hence further reduce the cost.

In common neural network architectures, the condition $K<T$ is easily satisfied 
as $T$ in all layers is almost always more than $10$ and usually greater than $100$, 
while $10$-$100$ clusters provides significant variance reduction. As such, the 
total overhead cost with an efficient implementation is at most $2\times$ the 
cost of a normal back-prop step. We can further reduce this cost by performing 
\Gluster on a subset of the layers, e.g., one might exclude the lowest 
convolutional layers.

The total memory overhead is equivalent to increasing the mini-batch size by 
$K$ samples as we only need to store rank-$1$ approximations to the cluster 
centers.

\section{Additional Details for Experiments}\label{app:exp}

The mini-batch size in \Gluster and SVRG and the number of clusters in \Gluster 
are the same as the mini-batch size in \SG and the same as the mini-batch size 
used for training using SGD\@. To measure the gradient variance, we take 
snapshots of the model during training, sample tens of mini-batches from the 
training set (in case of \Gluster, with stratified sampling), and measure the 
average variance of the gradients.

We measure the performance metrics (e.g., loss, accuracy and variance) as  
functions of the number of training iterations rather than wall-clock time. In 
other words, we do not consider computational overhead of different methods.  
In practice, such analysis is valid as long as the additional operations could 
be parallelized with negligible cost.

\subsection{Experimental Details for Image Classification 
Models}\label{app:exp_image}

On MNIST, our MLP model consists of there fully connected layers:
layer1: $28*28\times 1024$,
layer2: $1024\times 1024$,
layer3: $1024, 10$. We use ReLU activations and no dropout in this MLP\@. We 
train all methods with learning rate $0.02$, weight decay $5 \times 10^{-4}$, 
and momentum $0.5$.
On CIFAR-10, we train ResNet8 with no batch normalization layer and learning 
rate $0.01$, weight decay $5 \times 10^{-4}$, and momentum $0.9$ for $80000$ 
iterations.  We decay the learning rate at $40000$ and $60000$ iterations by 
a factor of $0.1$.
On CIFAR-100, we train ResNet32 starting with learning rate $0.1$. Other 
hyper-parameters are the same as in CIFAR-10.
On ImageNet, we train ResNet18 starting with learning rate $0.1$, weight decay 
$1\times 10^{-4}$, and momentum $0.9$. We use a similar learning rate schedule 
to CIFAR-10.

{\renewcommand{\arraystretch}{1.5}
\begin{table}[t]
    \centering
    \begin{tabular}{ccccccccc}
        \toprule
        Dataset & Model & $B$ & T & Log T
        & Estim T & U & \GC T\\
        \midrule
        MNIST & MLP & $128$ & $50000$ & $500$ & $50$ & $2000$ & $10$\\
        CIFAR-10 & ResNet8 & $128$ & $80000$ & $500$ & $50$ & $20000$ & $3$\\
        CIFAR-100 & ResNet32 & $128$ & $80000$ & $500$ & $50$ & $20000$ & $3$\\
        ImageNet & ResNet18 & $128$ & $80000$ & $1000$ & $10$ & $10000$ & $3$\\
        \bottomrule
    \end{tabular}
    \label{tbl:exp_setup}
    \caption{Hyperparameters.}
\end{table}
}

In \cref{tbl:exp_setup} we list the following hyper-parameters: the interval of 
measuring gradient variance and normalized variance (Log T), number of gradient 
estimates used on measuring variance (Estim T), the interval of updating the 
control variate in SVRG and the clustering in \GC (U), and the number of \GC 
update iterations (\GC T).

In plots for random features models, each point is generated by keeping $h_s$ 
fixed at $1000$ and varying $N$ in the range $[0.1, 10]$. We average over $3$ 
random seeds, $2$ teacher hidden dimensions and $2$ input dimensions (both 
$\times0.1$ and $\times10$ student hidden). We use mini-batch size $10$ for 
\SG, SVRG, and \Gluster.

A rough estimate of the overparametrization coefficient (discussed in 
\cref{sec:exp_rf}) for deep models is to divide the total number of parameters 
by the number of training data. On MNIST the coefficient is approximately $37$ 
for CNN and $31$ for MLP\@. On CIFAR-10 it is approximately $3$ for ResNet8 and 
$9$ for ResNet32\@.  Common data augmentations increase the effective training 
set size by $10\times$. On the other hand, the depth potentially increases the 
capacity of models exponentially (cite the paper that theoretically says how 
many data points a model can memorize). As such, it is difficult to directly 
relate these numbers to the behaviours observed in RF models.

\subsection{Experimental Details for Random Features Models}\label{app:exp_rf}

The number of training iterations is chosen such that the training loss has 
flattened.  The maximum is taken over the last $70\%$ of iterations (the 
variance is usually high for all methods in the first $30\%$).  
Mean variance plots for random features models are similar to max variance 
plots presented in \cref{sec:exp_rf}.
\begin{figure}[t]
    \centering
    \includegraphics[width=.32\textwidth]{\figdir/figs_rf/rf_agg_sgd/est_var_mean_70}
    \hfill
    \includegraphics[width=.32\textwidth]{\figdir/figs_rf/rf_agg_lr0.001/est_var_mean_70}
    \hfill
    \includegraphics[width=.32\textwidth]{\figdir/figs_rf/rf_agg_lr0.01/est_var_mean_70_nolegend}
    \caption{Mean variance plots for Fig.~\ref{fig:rf}}
    \label{fig:rf_mean}
\end{figure}

We aggregate results from multiple experiments with the following range of 
hyper-parameters. Each point is generated by keeping $h_s$ fixed at $1000$ and 
varying $N$ in the range $[0.1, 10]$. We average over $3$ random seeds, $2$ 
teacher hidden dimensions and $2$ input dimensions (both $\times0.1$ and 
$\times10$ student hidden).

\section{Normalized Variance and Convergence Analysis}
\label{sec:normalized_var_analysis}

Normalized variance appears in standard convergence analysis of stochastic 
gradient descent~\citep{friedlander2012hybrid}. To understand the connection, 
we briefly review a standard result. Let $f : \R^n \rightarrow \R$ be 
a $L$-Lipschitz continuous function, i.e.,
\begin{align}
    \|\nabla f(\xx)-\nabla f(\yy)\| &\leq L\|\xx-\yy\|, \forall \xx,y\in 
    \R^n\,.
\end{align}
Consider the stochastic gradient descent method with updates,
\begin{align}
    \xx_{k+1} &= \xx_k - \alpha_k \gg_k
\end{align}
where $\alpha_k$ is the step size and
\begin{align}
    \gg_k &\coloneqq \nabla f(\xx_k) + \ee_k
\end{align}
is an estimate of the gradient with the residual $\ee_k$.

For $\alpha_k=1/L$, the iterates of stochastic gradient descent satisfy the 
inequality~\citep[Eq. 2.3]{friedlander2012hybrid}
\begin{align}
    f(\xx_{k+1}) &\leq
    f(\xx_k) -\frac{1}{2L} \| \nabla f(\xx_k) \|^2
    + \frac{1}{2L} \|\ee_k\|^2\,,
\end{align}

Let $E[\ee_k]=0$, by subtracting $f(x_\ast)$ from both sides and taking
expectations we have
\begin{align}
    \E[f(\xx_{k+1})-f(x_\ast)] &\leq
    \E[f(\xx_k)-f(x_\ast)]
    -\frac{1}{2L} \E[\| \nabla f(\xx_k) \|^2]
    + \frac{1}{2L} \E[\|\ee_k\|^2]\\
    \E[f(\xx_{k+1})-f(x_\ast)] &\leq
    \E[f(\xx_k)-f(x_\ast)]
    -\frac{1}{2L} \E[\| \nabla f(\xx_k) \|^2] (1- \zeta)\,,
\end{align}
where $\zeta=\E[\|\ee_k\|^2]/\E[\| \nabla f(\xx_k) \|^2]$ is the normalized 
variance of the residual and the gradient as defined in \cref{sec:exp}.

If the normalized variance is small, i.e.\, $\zeta \ll 1$, the noise error is 
dominated by the non-stochastic error. As such, reducing the gradient variance 
would not speed up the training while for $1 \ll \zeta$, reducing the variance 
improves the convergence speed. This matches the diminishing returns 
observations in practice as we discuss in \cref{sec:exp}.

    \chapter{Appendices to the Chapter on Bridging the Gap Chapter}
    \def\figdir{ch-robust-tex/figures}

\section{Generalization of the Maximally Robust Classifier}
\label{sec:max_robust_gen}

\begin{defn}[Maximally Robust Classifier with Slack Loss] 
\label{def:eta_max_robust}
Let $\xi \geq 0$ denote a given slack variable.
A maximally robust classifier with slack loss is the solution to
\begin{align}
\label{eq:mrc_slack}
\argmax_{\varphi \in \Phi}
\left\{ \vphantom{i_2^\top} \varepsilon \,|\,
\EE_{(\xx, y)} \max_{\|\ddelta\| \leq \varepsilon}
\zeta(y\varphi(\xx + \ddelta)) \leq \xi\right\}\,.
\end{align}
\end{defn}
This formulation is similar to the saddle-point problem
in that we seek to minimize the expectation of the worst case loss.
The difference is that we also seek to maximize $\varepsilon$.
However, we have introduced another arbitrary variable $\xi$
that is not optimized as part of the problem.
For linear classifiers and the hinge loss,
${\zeta(z)=[1-z]_+}$,
\cref{eq:mrc_slack} can be written as,
\begin{align}
&\argmax_{\ww}\left\{
\vphantom{i_2^\top} \varepsilon \,|\,
\EE_{(\xx, y)} [1-y \ww^\top\xx + \varepsilon \|\ww\|_\ast]_+
\leq \xi\right\}\,,
\end{align}
where $[\cdot]_+$ is the hinge loss,
and the weight penalty term $\|\ww\|_\ast$ is inside the hinge loss.
This subtle difference makes solving the problem more challenging
than weight penalty outside the loss.

Because of the two challenges we noted, we do not study the
maximal robustness with slack loss.

\section{Proofs}
\label{sec:proofs}
\subsection{Proof of \texorpdfstring{\cref{lem:max_robust_to_min_norm_linear}}{Lemma 1}}
\begin{proof}
\label{proof:max_robust_to_min_norm_linear}
We first show that the maximally robust classifier is equivalent to a robust counterpart
by removing $\ddelta$ from the problem,
\begin{align*}
& \argmax_{\ww,b}\,\{ \varepsilon \mid 
 y_i (\ww^\top (\xx_i + \ddelta) + b) > 0
,~\forall i,\, \|\ddelta\| \leq \varepsilon\}\\
&\qquad \text{ (homogeneity of $p$-norm)} \\
&= \argmax_{\ww,b}\,\{ \varepsilon \mid 
 y_i (\ww^\top (\xx_i + \varepsilon\ddelta) + b) > 0
,~\forall i,\, \|\ddelta\| \leq 1\}\\
&\qquad \text{ (if it is true for all $\ddelta$ it is true for the worst of them)}\\ %
& = \argmax_{\ww,b}\,\{ \varepsilon \mid 
 \inf_{\|\ddelta\| \leq 1} y_i (\ww^\top (\xx_i + \varepsilon\ddelta) + b) > 0,~\forall i \}\\
& = \argmax_{\ww,b}\,\{ \varepsilon \mid 
 y_i (\ww^\top \xx_i  + b) + \varepsilon\inf_{\|\ddelta\| \leq 1} \ww^\top \ddelta > 0,~\forall i \}\\
 &\qquad \text{ (definition of dual norm)}\\
& = \argmax_{\ww,b}\,\{ \varepsilon \mid 
 y_i (\ww^\top \xx_i + b) > \varepsilon \|\ww\|_\ast,~\forall i \}\\
\end{align*}

Assuming $\ww\neq 0$, which is a result of linear separability
assumption, we can divide both sides by $\|\ww\|_\ast$
and change variables,
\begin{align*} 
& = \argmax_{\ww,b}\,\{ \varepsilon \mid
 y_i (\ww^\top \xx_i + b) \geq \varepsilon ,~\forall i, \|\ww\|_\ast \leq 1 \}\,,
\end{align*}
where we are also allowed to change $>$ to $\geq$ because any solution
to one problem gives an equivalent solution to the other given
$\ww\neq 0$.

Now we show that the robust counterpart is equivalent to the
minimum  norm classification problem by removing $\varepsilon$.
When the data is linearly separable there exists a solution with $\varepsilon> 0$,
\begin{align*}
& \argmax_{\ww,b}\,\{ \varepsilon \mid 
 y_i (\ww^\top \xx_i + b) > \varepsilon
 \|\ww\|_\ast,~\forall i \}\\
&=\argmax_{\ww,b}\,\left\{ \varepsilon \mid 
 y_i \left(\frac{\ww^\top}{\varepsilon\|\ww\|_\ast} \xx_i + \frac{b}{\varepsilon\|\ww\|_\ast}\right) \geq 1,~\forall i \right\}
\end{align*}

This problem is invariant to any non-zero scaling of $(\ww,b)$,
so with no loss of generality we set $\|\ww\|_\ast=1$.

\begin{align*}
&=\argmax_{\ww,b} \left\{ \vphantom{i_2^\top} \varepsilon \mid
y_i \left(\frac{\ww^\top}{\varepsilon} \xx_i +b\right)\geq 1,\, \forall i,
\|\ww\|_\ast= 1\right\}
\end{align*}

Let $\ww^\prime=\ww/\epsilon$, then the solution to the following problem gives a solution for $\ww$,
\begin{align*}
&\argmax_{\ww^\prime,b} \left\{ \vphantom{i_2^\top} \frac{1}{\|\ww^\prime\|_\ast} \mid y_i (\ww^{\prime\top} \xx_i +b)\geq 1,\, \forall i \right\}\\
&=\argmin_{\ww^\prime,b} \left\{ \vphantom{i_2^\top} \|\ww^\prime\|_\ast \mid y_i (\ww^{\prime\top} \xx_i +b)\geq 1,\, \forall i  \right\}.
\end{align*}

\end{proof}

\subsection{Proof of Maximally Robust to Perturbations Bounded in Fourier Domain (\texorpdfstring{\cref{cor:max_robust_to_min_norm_conv}}{Corollary 2})}
\label{proof:max_robust_to_min_norm_conv}
The proof mostly follows from the equivalence for linear models
in \cref{proof:max_robust_to_min_norm_linear}
by substituting
the dual norm of Fourier-$\ell_1$.
Here, $\AA^\ast$ denotes the complex conjugate transpose,
$\langle \uu,\vv \rangle=\uu^\top\vv^\ast$
is the complex inner product,
$[\bF]_{ik}=\frac{1}{\sqrt{D}}\omega_D^{i k}$
the DFT matrix
where $\omega_D=e^{-j 2\pi/D}$, $j=\sqrt{-1}$.

Let $\|\cdot\|$ be a norm on $\C^{n}$
and $\langle\cdot,\cdot\rangle$ be the complex inner product.
Similar to $\R^{n}$, the associated dual norm is defined as
$\|\ddelta\|_\ast = \sup_{\xx}\{ |\langle \ddelta, \xx \rangle|\; |\; \|\xx\| \leq 1 \}\;$.

\begin{align*}
&\|\mathcal{F}(\ww)\|_1\\
&= \sup_{\|\ddelta\|_\infty\leq 1}
|\langle\mathcal{F}(\ww), \ddelta\rangle|\\
&\qquad \text{ (Expressing DFT as a linear transformation.)}\\
&= \sup_{\|\ddelta\|_\infty\leq 1}
|\langle\bF\ww, \ddelta\rangle|\\
&= \sup_{\|\ddelta\|_\infty\leq 1}
|\langle\ww, \bF^\ast\ddelta\rangle|\\
&\qquad \text{ (Change of variables and $\bF^{-1}=\bF^\ast$.)}\\
&= \sup_{\|\bF\ddelta\|_\infty\leq 1}
|\langle\ww, \ddelta\rangle|\\
&= \sup_{\|\mathcal{F}(\ddelta)\|_\infty\leq 1}
|\langle\ww, \ddelta\rangle|
\,.
\end{align*}

\section{Linear Operations in Discrete Fourier Domain}
\label{sec:fourier_ops}

Finding an adversarial sample with bounded
Fourier-$\ell_\infty$
involves $\ell_\infty$ complex projection to ensure adversarial
samples are bounded, as well as the steepest ascent direction
w.r.t the Fourier-$\ell_\infty$ norm. We also use the complex
projection onto $\ell_\infty$ simplex for proximal gradient method
that minimizes the regularized empirical risk.

\subsection{\texorpdfstring{$\ell_\infty$}{Linf} Complex Projection}

Let $\vv$ denote the $\ell_2$ projection of $\xx\in\C^d$
onto the $\ell_\infty$ unit ball. It can be computed as,
\begin{align}
&\argmin_{\|\vv\|_\infty\leq 1} \frac{1}{2} \|\vv-\xx\|_2^2 \\
&=\{\vv : \forall i,\, \vv_i=
\argmin_{|\vv_i| \leq 1}  \frac{1}{2} |\vv_i-\xx_i|^2\}\,,
\end{align}
that is independent projection per coordinate which can be
solved by 2D projections onto $\ell_2$ the unit ball
in the complex plane.

\subsection{Steepest Ascent Direction w.r.t. Fourier-\texorpdfstring{$\ell_\infty$}{Linf}}

Consider the following optimization problem,
\begin{align}
&\argmax_{\vv : \|\bF\vv\|_\infty\leq 1} f(\vv)\,,
\end{align}
where $\bF\in\C^{d\times d}$ is the
Discrete Fourier Transform
(DFT) matrix and $\bF^\ast=\bF^{-1}$
and $\bF^\ast$ is the conjugate transpose.

Normalized steepest descent direction is defined as (See \citet[Section 9.4]{boyd2004convex}),
\begin{align}
\argmin_\vv\{\nabla \langle f(\ww), \vv\rangle : \|\vv\| = 1 \}\,.
\end{align}

Similarly, we can define the steepest ascent direction,
\begin{align}
&\argmax_{\vv\in\R^d}\{|\langle\nabla f(\ww), \vv\rangle|
: \|\bF\vv\|_\infty = 1\}\\
&\qquad \text{ (Assuming $f$ is linear.)}\\
&\argmax_{\vv\in\R^d}\{|\langle\gg, \bF^\ast\bF\vv\rangle|
: \|\bF\vv\|_\infty = 1\}\\
&\argmax_{\vv\in\R^d}\{|\langle\bF\gg, \bF\vv\rangle|
: \|\bF\vv\|_\infty = 1\}
\end{align}
where $\gg=\nabla f(\ww)$.

Consider the change of variable $\uu=\bF\vv\in\C^{d\times d}$.
Since $\vv$ is a real vector its DFT is  Hermitian,
i.e., $\uu_i^\ast=[\uu]_{\overline{-i}}$ for all coordinates $i$ where
$\overline{j}=j \mod d$. Similarly, $\bF\gg$ is Hermitian.
\begin{align}
&\argmax_{\uu\in\C^d : \|\uu\|_\infty = 1}\{|\langle \bF\gg, \uu\rangle|
: \uu_i^\ast=[\uu]_{\overline{-i}}\}\\
&\argmax_{\uu\in\C^d : \forall i, |\uu_i| = 1}\{|[\bF\gg]_i \uu_i|
+ |[\bF\gg]_{\overline{-i}}\, \uu_i^\ast|
: \uu_i^\ast=[\uu]_{\overline{-i}}\}\\
&\argmax_{\uu\in\C^d : \forall i, |\uu_i| = 1}\{|[\bF\gg]_i \uu_i|
+ |[\bF\gg]_i^\ast\uu_i^\ast|
: \uu_i^\ast=[\uu]_{\overline{-i}}\}\\
&\argmax_{\uu\in\C^d : \forall i, |\uu_i| = 1}\{|[\bF\gg]_i \uu_i|
: \uu_i^\ast=[\uu]_{\overline{-i}}\}\\
&\uu_i=[\bF\gg]_i/|[\bF\gg]_i|\,.
\end{align}
and the steepest ascent direction is $\vv_i=\bF^{-1}\uu_i$ which
is a real vector. In practice, there can be non-zero small imaginary
parts as numerical errors which we remove.

\section{Non-linear Maximally Robust Classifiers}
\label{sec:nonlinear}

Recall that the definition of a maximally robust classifier
(\cref{def:max_robust})
handles non-linear families of functions, $\Phi$:
\begin{align*}
    \argmax_{\varphi\in\Phi} \{\varepsilon\,|\,y_i \varphi(\xx_i + \ddelta) > 0 
    ,~\forall i,\,\|\ddelta\| \leq \varepsilon\}\,.
\end{align*}

Here we extend the proof in
\cref{lem:max_robust_to_min_norm_linear}
that made the maximally robust classification tractable
by removing $\ddelta$ and $\varepsilon$ from the problem.
In linking a maximally robust classifier to a minimum norm
classifier when there exists a non-linear transformation, the first
step that requires attention is the following,
\begin{align*}
& \argmax_{\varphi\in\Phi}\,\{ \varepsilon : 
 \inf_{\|\ddelta\| \leq 1} y_i \varphi(\xx_i + \varepsilon\ddelta)
 > 0,~\forall i \} \quad\\
& \neq \argmax_{\varphi\in\Phi}\,\{ \varepsilon : 
 y_i \varphi(\xx_i) + \varepsilon\inf_{\|\ddelta\| \leq 1} \varphi(\ddelta)
 > 0,~\forall i \} \quad
\end{align*}

\begin{lem}[Gradient Norm Weighted Maximum  Margin]
\label{lem:nonlinear}
Let $\Phi$ be a family of 
locally linear classifiers near training data, i.e.,
\begin{align*}
\Phi&=\{\varphi : 
\exists \xi>0,
\forall i,{\|\ddelta\|\leq1},\,
{\varepsilon\in[0,\xi)},\,\\
&{\varphi(\xx_i+\varepsilon\ddelta)}=
{\varphi(\xx_i)}+
{\varepsilon\ddelta^\top\dxy{}{\xx}\varphi(\xx_i)}
\}.
\end{align*}
Then a maximally robust classifier is a solution to the following
problem,
\begin{align*}
& \argmax_{\varphi\in\Phi, \varepsilon \leq \xi}\,\{ \varepsilon : 
y_i \varphi(\xx_i)
> \varepsilon \|\dxy{}{\xx}\varphi(\xx_i)\|_\ast
,~\forall i \}\,.
\end{align*}
\end{lem}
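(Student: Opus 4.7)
The plan is to mimic the derivation in the proof of \cref{lem:max_robust_to_min_norm_linear} (\cref{proof:max_robust_to_min_norm_linear}), substituting the global linearity of $\ww^\top\xx+b$ with the local linearity assumption built into the definition of $\Phi$. The key enabling fact is that the local linearity hypothesis lets me replace $\varphi(\xx_i+\varepsilon\ddelta)$ with its first-order expansion $\varphi(\xx_i)+\varepsilon\ddelta^\top \nabla_\xx\varphi(\xx_i)$ exactly (not approximately) whenever $\varepsilon\in[0,\xi)$ and $\|\ddelta\|\leq 1$, which is precisely the regime the constraint $\varepsilon\leq\xi$ carves out.

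First I would rewrite the maximally robust classifier (\cref{def:max_robust}) by pulling $\varepsilon$ out of the perturbation via homogeneity of the norm: the feasibility condition $y_i\varphi(\xx_i+\ddelta)>0$ for all $\|\ddelta\|\leq\varepsilon$ becomes $y_i\varphi(\xx_i+\varepsilon\ddelta)>0$ for all $\|\ddelta\|\leq 1$, and this in turn is equivalent to $\inf_{\|\ddelta\|\leq 1} y_i\varphi(\xx_i+\varepsilon\ddelta)>0$. At this point I insert the restriction $\varepsilon\leq\xi$, which lets me substitute the exact linearization and rewrite the infimum as
\begin{equation*}
\inf_{\|\ddelta\|\leq 1}\bigl(y_i\varphi(\xx_i) + \varepsilon\, y_i\ddelta^\top \nabla_\xx\varphi(\xx_i)\bigr) = y_i\varphi(\xx_i) - \varepsilon\,\|\nabla_\xx\varphi(\xx_i)\|_\ast,
\end{equation*}
where the last equality uses the definition of the dual norm together with $|y_i|=1$ (so the sign of $y_i$ can be absorbed into the minimizing $\ddelta$). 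Rearranging yields exactly the claimed constraint $y_i\varphi(\xx_i)>\varepsilon\|\nabla_\xx\varphi(\xx_i)\|_\ast$, and the outer $\argmax$ over $\varphi\in\Phi$ with $\varepsilon\leq\xi$ matches the statement.

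The main obstacle, and the reason the statement is phrased with the restriction $\varepsilon\leq\xi$ rather than as a clean equivalence, is that local linearity only controls $\varphi$ on the $\xi$-neighbourhood of each $\xx_i$. If the true maximally robust radius exceeds $\xi$ for some $\varphi\in\Phi$, my reformulation only certifies robustness up to $\xi$; conversely, a solution to the gradient-weighted problem might achieve $\varepsilon=\xi$ while the true robust radius is larger. I would therefore be careful to state the equivalence as ``any maximally robust classifier with maximal $\varepsilon\leq\xi$ solves the gradient-norm-weighted problem, and vice versa,'' and note that this is all that can be extracted from a purely first-order assumption. A secondary subtlety is boundary behaviour of the interval $[0,\xi)$: I would either take the supremum over $\varepsilon<\xi$ and pass to the limit, or strengthen the local linearity hypothesis to a closed neighbourhood so that $\varepsilon=\xi$ is attainable; either convention matches the strict inequality in the constraint and preserves the $\argmax$.

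Finally, I would briefly remark why no invocation of linear separability is needed beyond what is implicit: the feasibility set of the gradient-weighted problem is non-empty iff there exists $\varphi\in\Phi$ with $y_i\varphi(\xx_i)>0$ for all $i$, i.e., $\varphi$ correctly classifies the training set, which is the natural analogue of separability for this non-linear family. No normalization step (like dividing by $\|\ww\|_\ast$ in the linear proof) is available here because $\varphi$ is not homogeneous in a global parameter vector, so unlike \cref{lem:max_robust_to_min_norm_linear} the result stops at the maximum-margin-style reformulation and does not continue to a minimum-norm form.
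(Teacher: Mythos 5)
Your proposal is correct and follows essentially the same route as the paper's proof: pull $\varepsilon$ out by homogeneity, invoke the exact local linearization over $\|\ddelta\|\leq 1$, $\varepsilon<\xi$, and use the dual norm (with $|y_i|=1$ and symmetry of the norm ball) to collapse the infimum into $y_i\varphi(\xx_i)-\varepsilon\|\dxy{}{\xx}\varphi(\xx_i)\|_\ast$. If anything, you are more careful than the paper's terse derivation about the $\varepsilon\leq\xi$ restriction, attainment of the infimum, and strict versus non-strict inequalities; the paper's only extra step is a further normalization to a unit gradient-norm constraint under an additional constant-gradient-norm assumption, which goes beyond the lemma as stated.
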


\begin{proof}

\begin{align*}
& \argmax_\varphi\,\{ \varepsilon : 
 \inf_{\|\ddelta\| \leq 1} y_i \varphi(\xx_i + \varepsilon\ddelta)
 \geq 0,~\forall i \} \quad\\
&\qquad \text{ (Taylor approx.)}\\
& = \argmax_\varphi\,\{ \varepsilon : 
\inf_{\|\ddelta\| \leq 1} y_i \varphi(\xx_i)
+ y_i \varepsilon\ddelta \dxy{}{\xx}\varphi(\xx_i)
\geq 0,~\forall i \} \quad\\
& = \argmax_\varphi\,\{ \varepsilon : 
y_i \varphi(\xx_i)
+ \varepsilon\inf_{\|\ddelta\| \leq 1} \ddelta \dxy{}{\xx}\varphi(\xx_i)
\geq 0,~\forall i \} \quad\\
&\qquad\text{(Dual to the local derivative.)}\\
& = \argmax_\varphi\,\{ \varepsilon : 
y_i \varphi(\xx_i)
\geq \varepsilon \|\dxy{}{\xx}\varphi(\xx_i)\|_\ast
,~\forall i \} \quad\\
&\qquad\text{(Assuming constant gradient norm near data.)}\\
& = \argmax_{\varphi:\|\dxy{}{\xx}\varphi(\xx)\|_\ast\leq 1}\,
\{ \varepsilon :  y_i \varphi(\xx_i) \geq \varepsilon  ,~\forall i \}\,.
\end{align*}
\end{proof}

The equivalence in \cref{lem:nonlinear}
fails when $\Phi$ includes functions with non-zero
higher order derivatives within the $\varepsilon$
of the maximally robust classifier.
In practice, this failure manifests itself as various forms of
gradient masking or gradient obfuscation where the model
has almost zero gradient near the data but large
higher-order derivatives~\citep{athalye2018obfuscated}.

Various regularization
methods have been proposed for adversarial robustness that
penalize the gradient norm and can be studied using the framework
of maximally robust classification~\citep{ross2018improving,
simon2019first,
avery2020adversarial,
moosavi2019robustness}
Strong gradient or curvature regularization methods can
suffer from gradient masking~\citep{avery2020adversarial}.

For general family of non-linear functions,
the interplay with implicit bias
of optimization and regularization methods
remains to be characterized.
The solution to the regularized problem in
\cref{def:reg_classifier} is not necessarily unique.
In such cases, the implicit bias of the optimizer biases the
robustness.

\section{Extended Experiments}
\label{sec:exp_ext}
\subsection{Details of Linear Classification Experiments}
\label{sec:linear_ext}
For experiments with linear classifiers,
we sample $n$ training data points from
the $\N(0,\I_d)$, $d$-dimensional standard normal
distribution centered at zero.
We label data points $y=\sign(\ww^\top\xx)$,
using a ground-truth linear separator sampled from  $\N(0,\I_d)$.
For $n<d$, the generated training data is linearly
separable. This setting is similar to a number of recent
theoretical works on the implicit bias of optimization methods in
deep learning and specifically the double descent phenomenon in
generalization~\citep{montanari2019generalization, deng2019doubledescent}.
We focus on robustness against norm-bounded
attacks centered at the training data, in particular,
$\ell_2$, $\ell_\infty$,
$\ell_1$ and Fourier-$\ell_\infty$ bounded attacks.

Because the constraints and the objective
in the minimum norm linear classification problem are convex,
we can use off-the-shelf convex optimization toolbox
to find the solution for small enough $d$ and $n$. We use
the CVXPY library~\citep{diamond2016cvxpy}.
We evaluate the following approaches based on
the implicit bias of optimization:
Gradient Descent (GD), Coordinate Descent (CD),
and Sign Gradient Descent (SignGD)
on fully-connected networks as well as GD on
linear two-layer convolutional networks
(discussed in \cref{sec:implicit_bias}).
We also compare with explicit regularization methods
(discussed in \cref{sec:explicit_reg})
trained using proximal gradient methods~\citep{parikh2013proximal}.
We do not use gradient descent because $\ell_p$ norms
can be non-differentiable at some points
(e.g., $\ell_1$ and $\ell_\infty$) and we seek
a global minima of the regularized empirical risk.
We also compare with adversarial training.
As we discussed in \cref{sec:max_robust}
we need to provide the value of maximally robust $\varepsilon$
to adversarial training for finding a maximally robust
classifier. In our experiments,
we give an advantage to adversarial training by providing it with
the maximally robust $\varepsilon$. We also use the steepest
descent direction corresponding to the attack norm
to solve the inner maximization.

For regularization methods a sufficiently
small regularization coefficient achieves
maximal robustness. Adversarial training given
the maximal $\varepsilon$ also converges to the same solution.
We tune all hyper-parameters for all methods including learning rate
regularization coefficient and maximum step size in line search.
We provide a list of values in \cref{tab:hparams}.

\begin{table}[t]
    \centering
    \begin{tabular}{m{0.4\linewidth}|m{0.5\linewidth}}
    \toprule
    Hyperparameter
    & Values
    \\
    \midrule
Random seed & 0,1,2 \\
$d$ & 100 \\
$d/n$ & $1, 2, 4, 8, 16, 32$\\
Training steps & $10000$\\
Learning rate & $1\mathrm{e}{-5}$, $3\mathrm{e}{-5}$,
$1\mathrm{e}{-4}$, $3\mathrm{e}{-4}$, $1\mathrm{e}{-3}$, $3\mathrm{e}{-3}$,
$1\mathrm{e}{-2}$, $3\mathrm{e}{-2}$, $1\mathrm{e}{-1}$, $3\mathrm{e}{-1}$,
$1$, $2$, $3$, $6$, $9$, $10$, $20$, $30$, $50$\\
Reg. coefficient &
$1\mathrm{e}{-7}$, $1\mathrm{e}{-6}$, $1\mathrm{e}{-5}$, $1\mathrm{e}{-4}$,
$1\mathrm{e}{-3}$, $1\mathrm{e}{-2}$, $1\mathrm{e}{-1}$, $1$, $10$,
$3\mathrm{e}{-3}$, $5\mathrm{e}{-3}$, $3\mathrm{e}{-2}$, $5\mathrm{e}{-2}$,
$3\mathrm{e}{-1}$, $5\mathrm{e}{-1}$\\
Line search max step & $1$, $10$, $100$, $1000$\\
Adv. Train steps & 10\\
Adv. Train learning rate & 0.1\\
Runtime (line search/prox. method) & $<20$ minutes\\
Runtime (others) & $<2$ minutes\\
    \bottomrule
    \end{tabular}
    \caption{\textbf{Range of Hyperparameters.} Each run uses 2 CPU cores.}
    \label{tab:hparams}
\end{table}

\subsection{Details of CIFAR-10 experiments}

For \cref{fig:tradeoffs_cifar10_eps,fig:tradeoffs_cifar10_trades},
the model is a WRN-28-10.
We use SGD momentum (momentum set to $0.9$)
with a learning rate schedule that warms up from $0$ to LR for $10$ epochs,
then decays slowly using a cosine schedule back to zero over $200$ epochs.
LR is set to 0.1 * BS / 256, where batch size, BS, is set to $1024$.
Experiments runs on Google Cloud TPUv3 over $32$ cores.
All models are trained from scratch and uses the default
initialization from JAX/Haiku.
We use the KL loss and typical adversarial loss for adversarial training.
The inner optimization either maximizes the KL divergence (for TRADES)
or the cross-entropy loss (for AT)
and we use Adam with a step-size of $0.1$.

For the evaluation, we use $40$ PGD steps
(with Adam as the underlying optimizer and step-size $0.1$).
Instead of optimizing the cross-entropy loss, we used the margin-loss~\citep{carlini2017towards}.

For \cref{fig:cifar10_acc}, we evaluate the models in \cref{tab:cifar10_models}
against our Fourier-$\ell_p$ attack with varying $\varepsilon$ in the range
$[0, 8]\times 255$ with step size $0.5$.
We report the largest $\varepsilon$ at which
the robust test accuracy is at most $1\%$ lower than standard test accuracy
of the model. We run the attack for $20$ iterations
with no restarts and use apgd-ce, and apgd-dlr methods from AutoAttack.

\begin{table}[t]
    \centering
    \begin{tabular}{m{0.6\linewidth}|m{0.3\linewidth}}
    \toprule
    Model name & Robust training type
    \\
    \midrule
   \verb|Standard| & - \\
    \midrule
   \verb|Gowal2020Uncovering_70_16_extra| & \texttt{Linf} \\ 
   \verb|Gowal2020Uncovering_28_10_extra| & \texttt{Linf} \\
   \verb|Wu2020Adversarial_extra| & \texttt{Linf} \\
   \verb|Carmon2019Unlabeled| & \texttt{Linf} \\
   \verb|Sehwag2020Hydra| & \texttt{Linf} \\
   \verb|Gowal2020Uncovering_70_16| & \texttt{Linf} \\
   \verb|Gowal2020Uncovering_34_20| & \texttt{Linf} \\
   \verb|Wang2020Improving| & \texttt{Linf} \\
   \verb|Wu2020Adversarial| & \texttt{Linf} \\
   \verb|Hendrycks2019Using| & \texttt{Linf} \\
    \midrule
   \verb|Gowal2020Uncovering_extra| & \texttt{L2} \\ 
   \verb|Gowal2020Uncovering| & \texttt{L2} \\
   \verb|Wu2020Adversarial| & \texttt{L2} \\
   \verb|Augustin2020Adversarial| & \texttt{L2} \\
   \verb|Engstrom2019Robustness| & \texttt{L2} \\
   \verb|Rice2020Overfitting| & \texttt{L2} \\
   \verb|Rice2020Overfitting| & \texttt{L2} \\
   \verb|Rony2019Decoupling| & \texttt{L2} \\
   \verb|Ding2020MMA| & \texttt{L2} \\
    \midrule
   \verb|Hendrycks2020AugMix_ResNeXt|              & \texttt{corruptions} \\ 
   \verb|Hendrycks2020AugMix_WRN| & \texttt{corruptions} \\
   \verb|Kireev2021Effectiveness_RLATAugMixNoJSD| & \texttt{corruptions} \\
   \verb|Kireev2021Effectiveness_AugMixNoJSD| & \texttt{corruptions} \\
   \verb|Kireev2021Effectiveness_Gauss50percent| & \texttt{corruptions} \\
   \verb|Kireev2021Effectiveness_RLAT| & \texttt{corruptions} \\
   \bottomrule
    \end{tabular}
    \caption{List of models evaluated in \cref{fig:cifar10_acc}.}
    \label{tab:cifar10_models}
\end{table}
    
\subsection{Margin Figures}
\label{sec:margin}
A small gap exists between the solution found using CVXPY
compared with coordinate descent.
That is because of limited number of training iterations.
The convergence of coordinate descent
to minimum  $\ell_1$ norm solution is slower than
the convergence of gradient descent to minimum  $\ell_2$ norm
solution. 
There is also a small gap between the solution
of $\ell_1$ regularization and CVXPY. The reason is the
regularization coefficient has to be infinitesimal but
in practice numerical errors prevent us from training
using very small regularization coefficients.

\begin{figure*}[t]
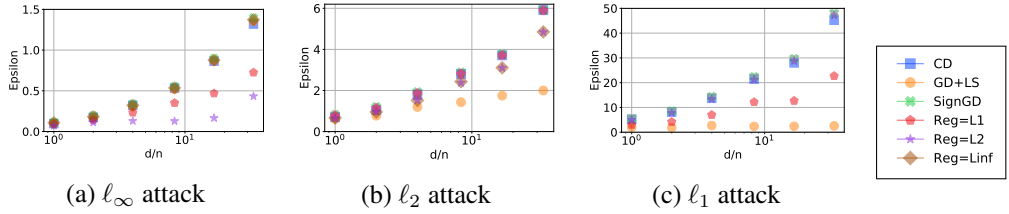

\centering
\begin{subfigure}[c]{0.25\textwidth}
\includegraphics[width=\textwidth]{\figdir/runs_linear_postnorm_randinit_linf/dim_num_train_margin_l1_nolegend_notitle.pdf}
\caption{$\ell_\infty$ attack}
\end{subfigure}
\hspace{5pt}
\begin{subfigure}[c]{0.25\textwidth}
\includegraphics[width=\textwidth]{\figdir/runs_linear_postnorm_randinit_l2/dim_num_train_margin_l2_nolegend_notitle.pdf}
\caption{$\ell_2$ attack}
\end{subfigure}
\hspace{5pt}
\begin{subfigure}[c]{0.25\textwidth}
\includegraphics[width=\textwidth]{\figdir/runs_linear_postnorm_randinit_l1/dim_num_train_margin_linf_nolegend_notitle.pdf}
\caption{$\ell_1$ attack}
\end{subfigure}
\hspace{5pt}
\begin{subfigure}[c]{0.1\textwidth}
\fbox{
\includegraphics[width=\textwidth]{\figdir/runs_linear_postnorm_randinit_l2/dim_num_train_risk_train_adv_l2_legend.pdf}}
\end{subfigure}

\caption{\textbf{Margin of models in \cref{fig:linear_max_eps}.}
Models are trained to be robust
against $\ell_\infty$, $\ell_2$, $\ell_1$ attacks.
For each attack, there exists one optimizer and one regularization
method that finds the maximally robust classifier. Adversarial
training also finds the solution given the maximal $\varepsilon$.}
\label{fig:linear_margin}
\end{figure*}

\begin{figure}[t]
\begin{subfigure}[b]{\linewidth}
\includegraphics[width=\textwidth]{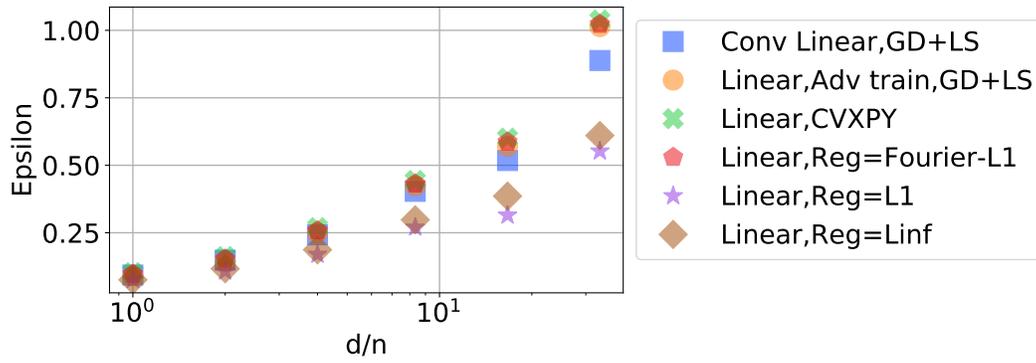}
\end{subfigure}
\caption{\textbf{Fourier-$\ell_1$ margin of
Linear Convolutional Models.}}
\label{fig:conv_margin}
\end{figure}

\subsection{Visualization of Fourier Adversarial Attacks}
\label{sec:fourier_vis}
In
\cref{fig:image_attack_standard,fig:image_attack_carmon,fig:image_attack_augustin}
we visualize adversarial samples for
models available in RobustBench~\citep{croce2020robustbench}.
Fourier-$\ell_\infty$ adversarial samples are qualitatively different
from $\ell_\infty$ adversarial samples as they concentrate on the object.

\begin{figure*}[t]
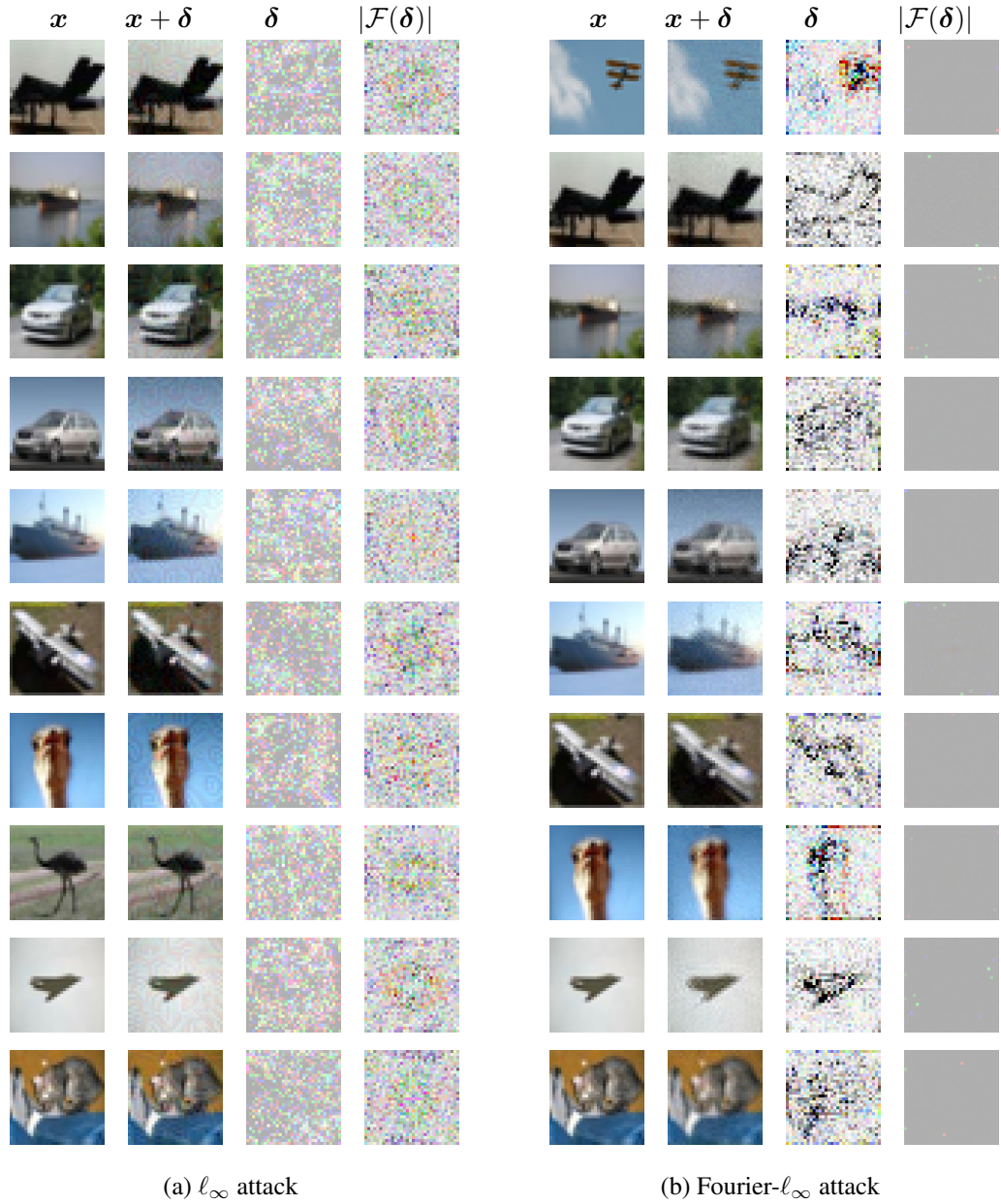

    \centering
    \begin{subfigure}[b]{\twocolfigwidth}
    \begin{tabular}{c}
    \,\,\,$\xx$\hfill
    $\xx+\ddelta$\hfill
    \,\,\,$\ddelta$ \hfill
    \,\,\,$|\mathcal{F}(\ddelta)|$\hfill\\
    \includegraphics[width=.95\textwidth]{\figdir/03_autoattack/extra/Standard_Linf_Linf.pdf}
    \end{tabular}
    \caption{$\ell_\infty$ attack}
    \end{subfigure}
    \hfill
    \begin{subfigure}[b]{\twocolfigwidth}
    \begin{tabular}{c}
    \,\,\,$\xx$\hfill
    $\xx+\ddelta$\hfill
    \,\,\,$\ddelta$ \hfill
    \,\,\,$|\mathcal{F}(\ddelta)|$\hfill\\
    \includegraphics[width=.95\textwidth]{\figdir/03_autoattack/neurips/extra/Standard_Linf_dftinf_any.pdf}
    \end{tabular}
    \caption{Fourier-$\ell_\infty$ attack}
    \end{subfigure}
    \caption{
    \textbf{Adversarial attacks ($\ell_\infty$ and Fourier-$\ell_\infty$)
    against CIFAR-10 with standard training.}
    WideResNet-28-10 model with standard training.
    The attack methods are APGD-CE and APGD-DLR with default
    hyper-parameters in RobustBench. We use $\varepsilon=8/255$ for both attacks.
    Fourier-$\ell_\infty$ perturbations are more concentrated on
    the object.
    Darker color in perturbations means larger magnitude.
    The optimal Fourier attack step is achieved when the
    magnitude in the Fourier domain is equal to the constraints.
    }
    \label{fig:image_attack_standard}
\end{figure*}

\begin{figure*}[t]
    \centering
    \begin{subfigure}[b]{\twocolfigwidth}
    \begin{tabular}{c}
    \,\,\,$\xx$\hfill
    $\xx+\ddelta$\hfill
    \,\,\,$\ddelta$ \hfill
    \,\,\,$|\mathcal{F}(\ddelta)|$\hfill\\
    \includegraphics[width=.95\textwidth]{\figdir/03_autoattack/neurips/extra/Standard_Linf_dftinf_highf.pdf}
    \end{tabular}
    \caption{$\ell_\infty$ attack}
    \end{subfigure}
    \hfill
    \begin{subfigure}[b]{\twocolfigwidth}
    \begin{tabular}{c}
    \,\,\,$\xx$\hfill
    $\xx+\ddelta$\hfill
    \,\,\,$\ddelta$ \hfill
    \,\,\,$|\mathcal{F}(\ddelta)|$\hfill\\
    \includegraphics[width=.95\textwidth]{\figdir/03_autoattack/neurips/extra/Standard_Linf_dftinf_lowf.pdf}
    \end{tabular}
    \caption{Fourier-$\ell_\infty$ attack}
    \end{subfigure}
    \caption{
    \textbf{Adversarial attacks (High and low frequency Fourier-$\ell_\infty$)
    against CIFAR-10 with standard training.}
    WideResNet-28-10 model with standard training.
    The attack methods are APGD-CE and APGD-DLR with default
    hyper-parameters in RobustBench. We use $\varepsilon=15/255,45/255$ respectively for high and low frequency.
    Darker color in perturbations means larger magnitude.
    The optimal Fourier attack step is achieved when the
    magnitude in the Fourier domain is equal to the constraints.
    }
    \label{fig:image_attack_standard_band}
\end{figure*}

\begin{figure*}[t]
    \centering
    \begin{subfigure}[b]{\twocolfigwidth}
    \begin{tabular}{c}
    \,\,\,$\xx$\hfill
    $\xx+\ddelta$\hfill
    \,\,\,$\ddelta$ \hfill
    \,\,\,$|\mathcal{F}(\ddelta)|$\hfill\\
    \includegraphics[width=.95\textwidth]{\figdir/03_autoattack/neurips/extra/Carmon2019Unlabeled_Linf_Linf.pdf}
    \end{tabular}
    \caption{$\ell_\infty$ attack}
    \end{subfigure}
    \hfill
    \begin{subfigure}[b]{\twocolfigwidth}
    \begin{tabular}{c}
    \,\,\,$\xx$\hfill
    $\xx+\ddelta$\hfill
    \,\,\,$\ddelta$ \hfill
    \,\,\,$|\mathcal{F}(\ddelta)|$\hfill\\
    \includegraphics[width=.95\textwidth]{\figdir/03_autoattack/extra/Carmon2019Unlabeled_Linf_dftinf.pdf}
    \end{tabular}
    \caption{Fourier-$\ell_\infty$ attack}
    \end{subfigure}
    \caption{
    \textbf{Adversarial attacks ($\ell_\infty$ and Fourier-$\ell_\infty$)
    against CIFAR-10 $\ell_\infty$ model of \citep{carmon2019unlabeled}.}
    Adversarially trained model against $\ell_\infty$ attacks.
    The attack methods are APGD-CE and APGD-DLR with default
    hyper-parameters in RobustBench. We use $\varepsilon=8/255$ for both attacks.
    Fourier-$\ell_\infty$ perturbations are more concentrated on
    the object.
    Darker color in perturbations means larger magnitude.
    The optimal Fourier attack step is achieved when the
    magnitude in the Fourier domain is equal to the constraints.
    }
    \label{fig:image_attack_carmon}
\end{figure*}

\begin{figure*}[t]
    \centering
    \begin{subfigure}[b]{\twocolfigwidth}
    \begin{tabular}{c}
    \,\,\,$\xx$\hfill
    $\xx+\ddelta$\hfill
    \,\,\,$\ddelta$ \hfill
    \,\,\,$|\mathcal{F}(\ddelta)|$\hfill\\
    \includegraphics[width=.95\textwidth]{\figdir/03_autoattack/neurips/extra/Carmon2019Unlabeled_Linf_dftinf_highf.pdf}
    \end{tabular}
    \caption{$\ell_\infty$ attack}
    \end{subfigure}
    \hfill
    \begin{subfigure}[b]{\twocolfigwidth}
    \begin{tabular}{c}
    \,\,\,$\xx$\hfill
    $\xx+\ddelta$\hfill
    \,\,\,$\ddelta$ \hfill
    \,\,\,$|\mathcal{F}(\ddelta)|$\hfill\\
    \includegraphics[width=.95\textwidth]{\figdir/03_autoattack/neurips/extra/Carmon2019Unlabeled_Linf_dftinf_lowf.pdf}
    \end{tabular}
    \caption{Fourier-$\ell_\infty$ attack}
    \end{subfigure}
    \caption{
    \textbf{Adversarial attacks (High and low frequency Fourier-$\ell_\infty$)
    against CIFAR-10 $\ell_\infty$ model of \citep{carmon2019unlabeled}.}
    WideResNet-28-10 model with standard training.
    The attack methods are APGD-CE and APGD-DLR with default
    hyper-parameters in RobustBench. We use $\varepsilon=15/255,45/255$ respectively for high and low frequency.
    Darker color in perturbations means larger magnitude.
    The optimal Fourier attack step is achieved when the
    magnitude in the Fourier domain is equal to the constraints.
    }
    \label{fig:image_attack_carmon_band}
\end{figure*}

\begin{figure*}[t]
    \centering
    \begin{subfigure}[b]{\twocolfigwidth}
    \begin{tabular}{c}
    \,\,\,$\xx$\hfill
    $\xx+\ddelta$\hfill
    \,\,\,$\ddelta$ \hfill
    \,\,\,$|\mathcal{F}(\ddelta)|$\hfill\\
    \includegraphics[width=.95\textwidth]{\figdir/03_autoattack/neurips/extra/Augustin2020Adversarial_L2_Linf.pdf}
    \end{tabular}
    \caption{$\ell_\infty$ attack}
    \end{subfigure}
    \hfill
    \begin{subfigure}[b]{\twocolfigwidth}
    \begin{tabular}{c}
    \,\,\,$\xx$\hfill
    $\xx+\ddelta$\hfill
    \,\,\,$\ddelta$ \hfill
    \,\,\,$|\mathcal{F}(\ddelta)|$\hfill\\
    \includegraphics[width=.95\textwidth]{\figdir/03_autoattack/extra/Augustin2020Adversarial_L2_dftinf.pdf}
    \end{tabular}
    \caption{Fourier-$\ell_\infty$ attack}
    \end{subfigure}
    \caption{
    \textbf{Adversarial attacks ($\ell_\infty$ and Fourier-$\ell_\infty$)
    against CIFAR-10 $\ell_2$ model of \citep{augustin2020adversarial}.}
    Adversarially trained model against $\ell_2$ attacks.
    The attack methods are APGD-CE and APGD-DLR with default
    hyper-parameters in RobustBench. We use $\varepsilon=8/255$ for both attacks.
    Fourier-$\ell_\infty$ perturbations are more concentrated on
    the object.
    Darker color in perturbations means larger magnitude.
    The optimal Fourier attack step is achieved when the
    magnitude in the Fourier domain is equal to the constraints.
    }
    \label{fig:image_attack_augustin}
\end{figure*}

\begin{figure*}[t]
    \centering
    \begin{subfigure}[b]{\twocolfigwidth}
    \begin{tabular}{c}
    \,\,\,$\xx$\hfill
    $\xx+\ddelta$\hfill
    \,\,\,$\ddelta$ \hfill
    \,\,\,$|\mathcal{F}(\ddelta)|$\hfill\\
    \includegraphics[width=.95\textwidth]{\figdir/03_autoattack/neurips/extra/Augustin2020Adversarial_L2_dftinf_highf.pdf}
    \end{tabular}
    \caption{$\ell_\infty$ attack}
    \end{subfigure}
    \hfill
    \begin{subfigure}[b]{\twocolfigwidth}
    \begin{tabular}{c}
    \,\,\,$\xx$\hfill
    $\xx+\ddelta$\hfill
    \,\,\,$\ddelta$ \hfill
    \,\,\,$|\mathcal{F}(\ddelta)|$\hfill\\
    \includegraphics[width=.95\textwidth]{\figdir/03_autoattack/neurips/extra/Augustin2020Adversarial_L2_dftinf_lowf.pdf}
    \end{tabular}
    \caption{Fourier-$\ell_\infty$ attack}
    \end{subfigure}
    \caption{
    \textbf{Adversarial attacks (High and low frequency Fourier-$\ell_\infty$)
    against CIFAR-10 $\ell_2$ model of \citep{augustin2020adversarial}.}
    WideResNet-28-10 model with standard training.
    The attack methods are APGD-CE and APGD-DLR with default
    hyper-parameters in RobustBench. We use $\varepsilon=15/255,45/255$ respectively for high and low frequency.
    Darker color in perturbations means larger magnitude.
    The optimal Fourier attack step is achieved when the
    magnitude in the Fourier domain is equal to the constraints.
    }
    \label{fig:image_attack_augustin_band}
\end{figure*}

\section{Visualization of Norm-balls}
\label{sec:unit_norm_balls}

To reach an intuition of the
norm-ball for Fourier $\ell_\infty$ norm, we visualize
a number of common norm-balls in $3$D in \cref{fig:unit_norm_balls}.
Norm-balls have been visualized in prior
work~\citep{bach2012structured}
but we are not aware of any visualization
of Fourier-$\ell_\infty$. 

\begin{figure}[t]
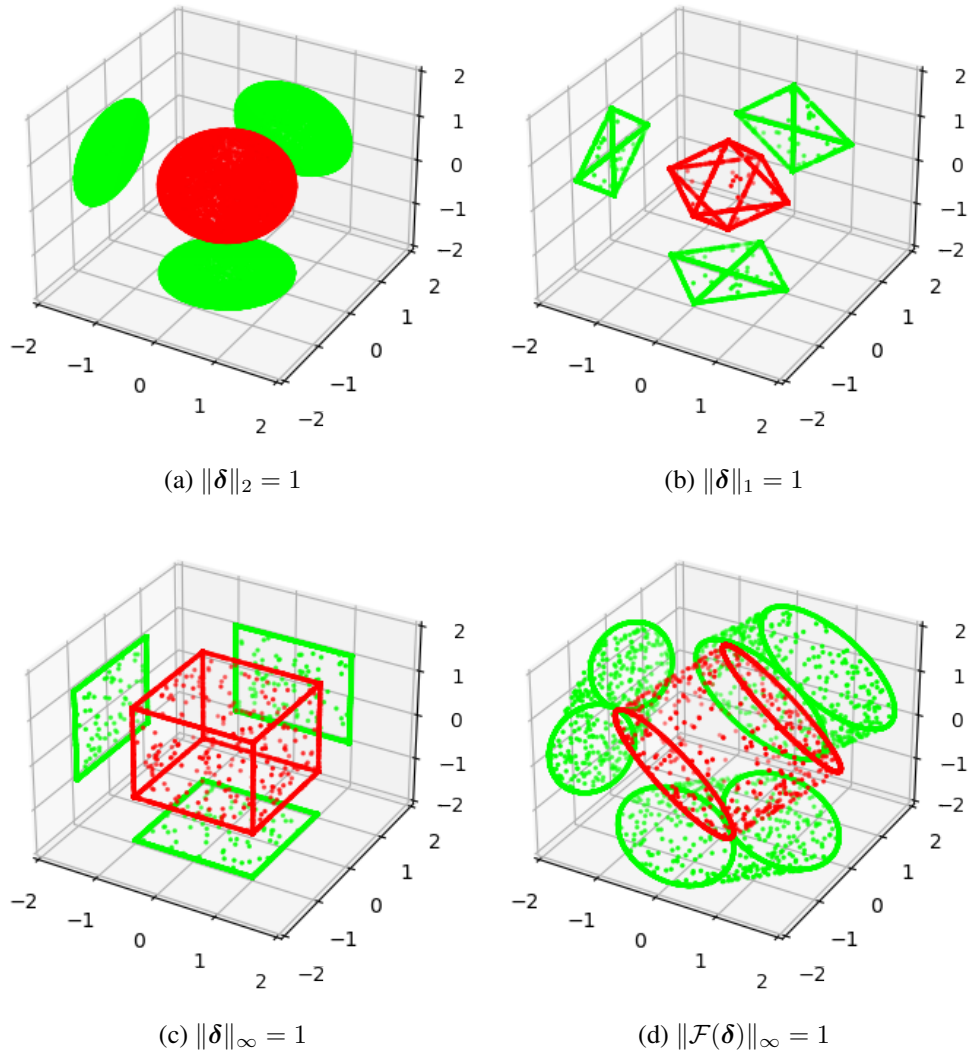

\centering
\begin{subfigure}[b]{\twocolfigwidth}
\includegraphics[width=\textwidth]{\figdir/01_visualize_unit_ball/l2.png}
\caption{$\|\ddelta\|_2=1$}
\end{subfigure}
\begin{subfigure}[b]{\twocolfigwidth}
\includegraphics[width=\textwidth]{\figdir/01_visualize_unit_ball/l1.png}
\caption{$\|\ddelta\|_1=1$}
\end{subfigure}
\begin{subfigure}[b]{\twocolfigwidth}
\includegraphics[width=\textwidth]{\figdir/01_visualize_unit_ball/linf.png}
\caption{$\|\ddelta\|_\infty=1$}
\end{subfigure}
\begin{subfigure}[b]{\twocolfigwidth}
\includegraphics[width=\textwidth]{\figdir/01_visualize_unit_ball/dftinf.png}
\caption{$\|\mathcal{F}(\ddelta)\|_\infty=1$}
\end{subfigure}
\caption{Unit norm balls in $3$-D (red) and their $2$-D projections (green). Linear models trained with gradient descent
are maximally robust to $\ell_2$ perturbations.
Two-layer linear convolutional networks trained
with gradient descent are maximally robust to
perturbations with bounded Fourier-$\ell_\infty$.}
\label{fig:unit_norm_balls}
\end{figure}

  \backmatter
  \printbibliography[heading=bibintoc]
\end{document}